\begin{document} 
 
\title{Conjunctive Query Answering for the Description Logic 
$\SHIQ$}

\author{\name Birte Glimm \email birte.glimm@comlab.ox.ac.uk \\
       \name Ian Horrocks \email ian.horrocks@comlab.ox.ac.uk \\
       \addr Oxford University Computing Laboratory, UK
       \AND
       \name Carsten Lutz \email clu@tcs.inf.tu-dresden.de \\
       \addr Dresden University of Technology, Germany
       \AND
       \name Ulrike Sattler \email sattler@cs.man.ac.uk \\
       \addr The University of Manchester, UK}

\maketitle
       
\begin{abstract}
  Conjunctive queries play an important role as an expressive query 
  language for Description Logics (DLs). Although modern DLs usually 
  provide for transitive roles, conjunctive query answering over DL 
  knowledge bases is only poorly understood if transitive roles are 
  admitted in the query. In this paper, we consider unions of 
  conjunctive queries over knowledge bases formulated in the prominent 
  DL \SHIQ and allow transitive roles in both the query and the 
  knowledge base. We show decidability of query answering in this 
  setting and establish two tight complexity bounds: regarding combined 
  complexity, we prove that there is a deterministic algorithm for query 
  answering that needs time single exponential in the size of the KB and 
  double exponential in the size of the query, which is optimal. 
  Regarding data complexity, we prove containment in co-\NPclass.
\end{abstract}



\section{Introduction}\label{sect:intro}

Description Logics (DLs) are a family of logic based knowledge 
representation formalisms \cite{dlhb}. Most DLs are fragments of 
First-Order Logic restricted to unary and binary predicates, which are 
called concepts and roles in DLs. The constructors for building complex 
expressions are usually chosen such that the key inference problems, 
such as concept satisfiability, are decidable and preferably of low 
computational complexity. A DL knowledge base (KB) consists of a TBox, 
which contains intensional knowledge such as concept definitions and 
general background knowledge, and an ABox, which contains extensional 
knowledge and is used to describe individuals. Using a database 
metaphor, the TBox corresponds to the schema, and the ABox corresponds 
to the data. In contrast to databases, however, DL knowledge bases adopt 
an open world semantics, i.e., they represent information about the 
domain in an incomplete way. 

Standard DL reasoning services include testing concepts for 
satisfiability and retrieving certain instances of a given concept. The 
latter retrieves, for a knowledge base consisting of an ABox \AB and a 
TBox \TB, all (ABox) individuals that are instances of the given 
(possibly complex) concept expression $C$, i.e., all those individuals 
$a$ such that \TB and \AB entail that $a$ is an instance of $C$. The 
underlying reasoning problems are well-understood, and it is known that 
the combined complexity of these reasoning problems, i.e., the 
complexity measured in the size of the TBox, the ABox, and the query, is 
\ExpTime-complete for \SHIQ \cite{Tobi01a}. The data complexity of a 
reasoning problem is measured in the size of the ABox only. Whenever the 
TBox and the query are small compared to the ABox, as is often the 
case in practice, the data complexity gives a more useful performance 
estimate. For \SHIQ, instance retrieval is known to be data complete for 
co-\NPclass \cite{HuMS05a}. 

Despite the high worst case complexity of the standard reasoning 
problems for very expressive DLs such as \SHIQ, there are highly 
optimized implementations available, e.g., FaCT++ \cite{TsHo06a}, 
KAON2\footnote{\url{http://kaon2.semanticweb.org}}, Pellet 
\cite{SPGK06a}, and 
RacerPro\footnote{\url{http://www.racer-systems.com}}. These systems are 
used in a wide range of applications, e.g., configuration 
\cite{McWr98a}, bio informatics \cite{WBHL05a}, and information 
integration \cite{CDLN98a}. Most prominently, DLs are known for their 
use as a logical underpinning of ontology languages, e.g., OIL, 
DAML+OIL, and OWL \cite{HoPH03a}, which is a W3C recommendation 
\cite{BHHH04a}. 

In data-intensive applications, querying KBs plays a central role. 
Instance retrieval is, in some aspects, a rather weak form of querying: 
although possibly complex concept expressions are used as queries, we 
can only query for tree-like relational structures, i.e., a DL concept 
cannot express arbitrary cyclic structures. This property is known as 
the tree model property and is considered an important reason for the 
decidability of most Modal and Description Logics 
\cite{Grad01a,Vard97a}. Conjunctive queries (CQs) are well known in the 
database community and constitute an expressive query language with 
capabilities that go well beyond standard instance retrieval. For an 
example, consider a knowledge base that contains an ABox assertion 
$(\exists \dlf{hasSon}.(\exists \dlf{hasDaughter}.\top))(\dlf{Mary})$, 
which informally states that the individual (or constant in FOL terms) 
Mary has a son who has a daughter; hence, that Mary is a grandmother. 
Additionally, we assume that both roles $\dlf{hasSon}$ and 
$\dlf{hasDaughter}$ have a transitive super-role $\dlf{hasDescendant}$. 
This implies that $\dlf{Mary}$ is related via the role 
$\dlf{hasDescendant}$ to her (anonymous) grandchild. For this knowledge 
base, $\dlf{Mary}$ is clearly an answer to the conjunctive query 
$\dlf{hasSon}(x, y) \wedge \dlf{hasDaughter}(y, z) \wedge 
\dlf{hasDescendant}(x, z)$, when we assume that $x$ is a distinguished 
variable (also called answer or free variable) and $y, z$ are 
non-distinguished (existentially quantified) variables. 


If all variables in the query are non-distinguished, the query answer is 
just $\dlf{true}$ or $\dlf{false}$ and the query is called a Boolean 
query. Given a knowledge base \KB and a Boolean CQ \Q, the query 
entailment problem is deciding whether \Q is $\dlf{true}$ or 
$\dlf{false}$ w.r.t.\ \KB. If a CQ contains distinguished variables, the 
answers to the query are those tuples of individual names for which the 
knowledge base entails the query that is obtained by replacing the free 
variables with the individual names in the answer tuple. The problem of 
finding all answer tuples is known as query answering. Since query 
entailment is a decision problem and thus better suited for complexity 
analysis than query answering, we concentrate on query entailment. This 
is no restriction since query answering can easily be reduced to query 
entailment as we illustrate in more detail in 
Section~\ref{pre:sect:cqs}. 

Devising a decision procedure for conjunctive query entailment in 
expressive DLs such as SHIQ is a challenging problem, in particular when 
transitive roles are admitted in the query \cite{GlHS06a}. In the 
conference version of this paper, we presented the first decision 
procedure for conjunctive query entailment in \SHIQ. In this paper, we 
generalize this result to unions of conjunctive queries (UCQs) over 
\SHIQ knowledge bases. We achieve this by rewriting a conjunctive query 
into a set of conjunctive queries such that each resulting query is 
either tree-shaped (i.e., it can be expressed as a concept) or grounded 
(i.e., it contains only constants/individual names and no variables). 
The entailment of both types of queries can be reduced to standard 
reasoning problems \cite{HoTe00a,CaDL98a}. 

The paper is organized as follows: in Section~\ref{sect:preliminaries}, 
we give the necessary definitions, followed by a discussion of related 
work in Section~\ref{sect:rw}. In Section~\ref{sect:example}, we 
motivate the query rewriting steps by means of an example. In 
Section~\ref{sect:rewriting}, we give formal definitions for the 
rewriting procedure and show that a Boolean query is indeed entailed by 
a knowledge base \KB iff the disjunction of the rewritten queries is 
entailed by \KB. In Section~\ref{sect:decisionprocedure}, we present a 
deterministic algorithm for UCQ entailment in \SHIQ that runs in time 
single exponential in the size of the knowledge base and double 
exponential in the size of the query. Since the combined complexity of 
conjunctive query entailment is already 2\ExpTime-hard for the DL \ALCI 
\cite{Lutz07a}, it follows that this problem is 2\ExpTime-complete for 
\SHIQ. This shows that conjunctive query entailment for \SHIQ is 
strictly harder than instance checking, which is also the case for 
simpler DLs such as \EL \cite{Rosa07a}. We further show that (the 
decision problem corresponding to) conjunctive query answering in \SHIQ 
is co-\NPclass-complete regarding data complexity, and thus not harder 
than instance retrieval.

The presented decision procedure gives not only insight into query 
answering; it also has an immediate consequence on the field of 
extending DL knowledge bases with rules. From the work by 
\citeA[Thm.~11]{Rosa06a}, the consistency of a \SHIQ knowledge base 
extended with (weakly-safe) Datalog rules is decidable iff the 
entailment of unions of conjunctive queries in \SHIQ is decidable. 
Hence, we close this open problem as well. 

This paper is an extended version of the conference paper: Conjunctive 
Query Answering for the Description Logic \SHIQ. Proceedings of the 
Twentieth International Joint Conference on Artificial Intelligence 
(IJCAI'07), Jan 06 - 12, 2007.

\section{Preliminaries}\label{sect:preliminaries}

We introduce the basic terms and notations used throughout the paper. In 
particular, we introduce the DL \SHIQ \cite{HoST00a} and (unions of) 
conjunctive queries.

\subsection{Syntax and Semantics of \SHIQ}

Let \NC, \NR, and \NI be countably infinite sets of \emph{concept 
names}, \emph{role names}, and \emph{individual names}. We assume that 
the set of role names contains a subset $\NtR \subseteq \NR$ of 
\emph{transitive role names}. A \emph{role} is an element of $\NR \cup 
\{r^- \mid r \in \NR\}$, where roles of the form $r^-$ are called 
\emph{inverse roles}. A \emph{role inclusion} is of the form $r 
\sqsubseteq s$ with $r, s$ roles. A \emph{role hierarchy} \RB is a 
finite set of role inclusions.
  
An \emph{interpretation} \I = \inter consists of a non-empty set \dom, 
the \emph{domain} of \I, and a function \Int{$\cdot$}, which maps every 
concept name $A$ to a subset $\Int{A} \subseteq \dom$, every role name 
$r \in \NR$ to a binary relation $\Int{r} \subseteq \dom \times \dom$, 
every role name $r \in \NtR$ to a transitive binary relation $\Int{r} 
\subseteq \dom \times \dom$, and every individual name $a$ to an element 
$\Int{a} \in \dom$. An interpretation \I \emph{satisfies} a role 
inclusion $r \sqsubseteq s$ if $\Int{r} \subseteq \Int{s}$ and a role 
hierarchy \RB if it satisfies all role inclusions in \RB. 

We use the following standard notation:
\begin{enumerate} 
  \item 
    We define the function \mn{Inv} over roles as $\inv{r} := r^-$ if $r 
    \in \NR$ and $\inv{r} := s$ if $r = s^-$ for a role name~$s$.
  \item 
    For a role hierarchy \RB, we define \sssR as the reflexive 
    transitive closure of $\sqsubseteq$ over $\RB \cup \{\inv{r} 
    \sqsubseteq \inv{s} \mid r \sqsubseteq s \in \RB\}$. We use $r 
    \equiv_\RB s$ as an abbreviation for $r \sssR s$ and $s \sssR r$.  
  \item 
    For a role hierarchy \RB and a role $s$, we define the set $\transR$ 
    of transitive roles as $\{s \mid \mbox{there is a role } r \mbox{ 
    with } r \equiv_\RB s \mbox{ and } r \in \NtR \mbox{ or } \inv{r} 
    \in \NtR\}$. 
  \item 
    A role $r$ is called \emph{simple} w.r.t.\ a role hierarchy \RB if, 
    for each role $s$ such that \ $s \sssR r$, $s \notin \transR$. 
\end{enumerate}
The subscript \RB of \sssR and \transR is dropped if clear from the 
context. The set of $\SHIQ$\emph{-concepts} (or concepts for short) is 
the smallest set built inductively from \NC using the following grammar, 
where $A \in \NC$, $n \in \Nbbm$, $r$ is a role and $s$ is a simple 
role:
$$
  C ::= \top \mid \bot \mid A \mid \neg C \mid C_1 \sqcap C_2 \mid 
        C_1 \sqcup C_2 \mid \forall r.C \mid \exists r.C \mid 
        \leqslant n\; s.C \mid \geqslant n\; s.C.
$$

Given an interpretation \I, the semantics of $\SHIQ$-concepts is defined 
as follows:
$$
  {\setlength\arraycolsep{0.2em}
  \begin{array}{r l r l r l}
    \Int{\top} = & \dom\hspace*{.5cm}& 
    \Int{(C \sqcap D)} = & \Int{C} \cap \Int{D}\hspace*{.5cm}& 
    \Int{(\neg C)} = & \dom \setminus \Int{C}\hspace*{.5cm}\\
    \Int{\bot} = & \emptyset\hspace*{.5cm}& 
    \Int{(C \sqcup D)} = & \Int{C} \cup \Int{D}\hspace*{.5cm}& 
    \\ 
    \Int{(\forall r.C)} = & 
    \multicolumn{5}{l}{\{d \in \dom \mid \mbox{ if } (d, d') \in 
    \Int{r}, \mbox{ then } d' \in \Int{C}\}}\\
    \Int{(\exists r.C)} = & 
    \multicolumn{5}{l}{\{d \in \dom \mid \mbox{ there is a } (d, d') 
    \in \Int{r} \mbox{ with } d' \in \Int{C}\}}\\
    \Int{(\leqslant n\; s.C)} = & \multicolumn{5}{l}{\{d \in \dom 
      \mid \card{\Int{s}(d, C)} \leq n\}}\\
    \Int{(\geqslant n\; s.C)} = & \multicolumn{5}{l}{\{d \in \dom 
      \mid \card{\Int{s}(d, C)} \geq n\}}\\
  \end{array}
  }
$$
where $\card{M}$ denotes the cardinality of the set $M$ and $\Int{s}(d,
C)$ is defined as 
$$
  \{d' \in \dom \mid (d, d') \in \Int{s} \mbox{ and } d' \in \Int{C}\}.
$$

A \emph{general concept inclusion} (GCI) is an expression $C \sqsubseteq 
D$, where both $C$ and $D$ are concepts. A finite set of GCIs is called 
a \emph{TBox}. An interpretation \I \emph{satisfies} a GCI $C 
\sqsubseteq D$ if $\Int{C} \subseteq \Int{D}$, and a TBox \TB if it 
satisfies each GCI in \TB. 

An (ABox) \emph{assertion} is an expression of the form $C(a)$, $r(a, 
b)$, $\neg r(a,b)$, or $a \ndoteq b$, where $C$ is a concept, $r$ is a 
role, $a, b \in \NI$. An \emph{ABox} is a finite set of assertions. We 
use \indA to denote the set of individual names occurring in \AB. An 
interpretation \I \emph{satisfies} an assertion $C(a)$ if $\Int{a} \in 
\Int{C}$, $r(a, b)$ if $(\Int{a}, \Int{b}) \in \Int{r}$, $\neg r(a, b)$ 
if $(\Int{a}, \Int{b}) \notin \Int{r}$, and $a \ndoteq b$ if $\Int{a} 
\neq \Int{b}$. An interpretation \I \emph{satisfies} an ABox if it 
satisfies each assertion in \AB, which we denote with $\I \models \AB$.

A \emph{knowledge base} (KB) is a triple (\TB, \RB, \AB{}) with \TB a 
TBox, \RB a role hierarchy, and \AB an ABox. Let \KBDef be a KB and \I = 
\inter an interpretation. We say that \emph{\I satisfies $\KB$} if \I 
satisfies \TB, \RB, and \AB. In this case, we say that \I is a 
\emph{model} of \KB and write $\I \models \KB$.  We say that \emph{\KB 
is consistent} if \KB has a model.

\subsubsection{Extending \SHIQ to \SHIQR}

In the following section, we show how we can reduce a conjunctive query 
to a set of ground or tree-shaped conjunctive queries. During the 
reduction, we may introduce concepts that contain an intersection of 
roles under existential quantification. We define, therefore, the 
extension of \SHIQ with role conjunction/intersection, denoted as \SHIQR 
and, in the appendix, we show how to decide the consistency of \SHIQR 
knowledge bases. 

In addition to the constructors introduced for \SHIQ, \SHIQR allows for 
concepts of the form 
$$
  C ::= \forall R.C \mid \exists R.C \mid \leqslant n\; S.C \mid 
        \geqslant n\; S.C,
$$
where $R := r_1 \sqcap \ldots \sqcap r_n$, $S := s_1 \sqcap \ldots 
\sqcap s_n$, $r_1, \ldots, r_n$ are roles, and $s_1, \ldots, s_n$ are 
simple roles. The interpretation function is extended such that 
$\Int{(r_1 \sqcap \ldots \sqcap r_n)} = \Int{r_1} \cap \ldots \cap 
\Int{r_n}$.

\subsection{Conjunctive Queries and Unions of Conjunctive Queries}
\label{pre:sect:cqs}

We now introduce Boolean conjunctive queries since they are the basic 
form of queries we are concerned with. We later also define non-Boolean 
queries and show how they can be reduced to Boolean queries. Finally, 
unions of conjunctive queries are just a disjunction of conjunctive 
queries. 

For simplicity, we write a conjunctive query as a set instead of as a 
conjunction of atoms. For example, we write the introductory example 
from Section~\ref{sect:intro} as 
$$
\{\dlf{hasSon}(x, y), \dlf{hasDaughter}(y, z), \dlf{hasDescendant}(x, 
z)\}.
$$ 

For non-Boolean queries, i.e., when we consider the problem of query 
answering, the answer variables are often given in the head of the 
query, e.g., 
$$
(x_1, x_2, x_3) \leftarrow \{\dlf{hasSon}(x_1, x_2), 
\dlf{hasDaughter}(x_2, x_3), 
\dlf{hasDescendant}(x_1, x_3)\}
$$ 
indicates that the query answers are those tuples $(a_1, a_2, a_3)$ of 
individual names that, substituted for $x_1, x_2$, and $x_3$ 
respectively, result in a Boolean query that is entailed by the 
knowledge base. For simplicity and since we mainly focus on query 
entailment, we do not use a query head even in the case of a non-Boolean 
query. Instead, we explicitly say which variables are answer variables 
and which ones are existentially quantified. We now give a definition of 
Boolean conjunctive queries. 
 
\begin{definition}
  Let \NV be a countably infinite set of variables disjoint from \NC, 
  \NR, and \NI. A \emph{term} $t$ is an element from $\NV \cup \NI$. Let 
  $C$ be a concept, $r$ a role, and $t, t'$ terms. An \emph{atom} is an 
  expression $C(t)$, $r(t, t')$, or $t \approx t'$ and we refer to these 
  three different types of atoms as \emph{concept atoms, role atoms}, 
  and \emph{equality atoms} respectively. A \emph{Boolean conjunctive 
  query} \Q is a non-empty set of atoms. We use \vars{\Q} to denote 
  the set of (existentially quantified) variables occurring in \Q, 
  \inds{\Q} to denote the set of individual names occurring in \Q, and 
  \terms{\Q} for the set of terms in \Q, where $\terms{\Q} = \vars{\Q} 
  \cup \inds{\Q}$. If all terms in \Q are individual names, we say 
  that \Q is \emph{ground}. A \emph{sub-query} of \Q is simply a 
  subset of \Q (including \Q itself). As usual, we use $\card{\Q}$ to 
  denote the cardinality of \Q, which is simply the number of atoms in 
  \Q, and we use $|\Q|$ for the size of \Q, i.e., the number of 
  symbols necessary to write \Q. A \SHIQ conjunctive query is a 
  conjunctive query in which all concepts $C$ that occur in a concept 
  atom $C(t)$ are $\SHIQ$-concepts. 

  Since equality is reflexive, symmetric and transitive, we define 
  \sapprox as the transitive, reflexive, and symmetric closure of 
  $\approx$ over the terms in \Q. Hence, the relation \sapprox is an 
  equivalence relation over the terms in \Q and, for $t \in \terms{\Q}$, 
  we use $[t]$ to denote the equivalence class of $t$ by \sapprox. 
  
  Let \I = \inter be an interpretation. A total function $\pi \colon 
  \terms{\Q} \to  \dom$ is an \emph{evaluation} if (i) $\pi(a) = 
  \Int{a}$ for each individual name $a \in \inds{\Q}$ and (ii) $\pi(t) = 
  \pi(t')$ for all $t \sapprox t'$. We write 
  \begin{itemize}
    \item 
      $\I \models^\pi C(t)$ if $\pi(t) \in \Int{C}$;
    \item 
      $\I \models^\pi r(t, t')$ if $(\pi(t), \pi(t')) \in \Int{r}$;
    \item 
      $\I \models^\pi t \approx t'$ if $\pi(t) = \pi(t')$.
  \end{itemize}
  If, for an evaluation $\pi$, $\I \models^\pi at$ for all atoms $at \in 
  \Q$, we write $\I \models^\pi \Q$. We say that \I \emph{satisfies} \Q 
  and write $\I \models \Q$ if there exists an evaluation $\pi$ such 
  that $\I \models^\pi \Q$. We call such a $\pi$ a \emph{match} for \Q 
  in \I.
  
  Let \KB be a \SHIQ knowledge base and \Q a conjunctive query. If $\I 
  \models \KB$ implies $\I \models \Q$, we say that \KB \emph{entails} 
  \Q and write $\KB \models \Q$. 
\end{definition}

The \emph{query entailment problem} is defined as follows: given a 
knowledge base \KB and a query \Q, decide whether $\KB \models \Q$. 

For brevity and simplicity of notation, we define the relation \din over 
atoms in \Q as follows: $C(t) \din \Q$ if there is a term $t' \in 
\terms{\Q}$ such that $t \sapprox t'$ and $C(t') \in \Q$, and $r(t_1, 
t_2) \din \Q$ if there are terms $t_1', t_2' \in \terms{\Q}$ such that 
$t_1 \sapprox t_1'$, $t_2 \sapprox t_2'$, and $r(t_1', t_2') \in \Q$ or 
$\inv{r}(t_2', t_1') \in \Q$. This is clearly justified by definition of 
the semantics, in particular, because $\I \models r(t, t')$ implies that 
$\I \models \inv{r}(t', t)$.

When devising a decision procedure for CQ entailment, most complications 
arise from cyclic queries \cite{CaDL98a,CaRa97a}. In this context, when 
we say cyclic, we mean that the graph structure induced by the query is 
cyclic, i.e., the graph obtained from \Q such that each term is 
considered as a node and each role atom induces an edge. Since, in the 
presence of inverse roles, a query containing the role atom $r(t, t')$ 
is equivalent to the query obtained by replacing this atom with 
$\inv{r}(t', t)$, the direction of the edges is not important and we say 
that a query is cyclic if its underlying undirected graph structure is 
cyclic. Please note also that multiple role atoms for two terms are not 
considered as a cycle, e.g., the query $\{r(t, t'), s(t, t')\}$ is not a 
cyclic query. The following is a more formal definition of this 
property. 

\begin{definition}\label{def:cyclic}
  A query \Q is \emph{cyclic} if there exists a sequence of terms $t_1, 
  \dots, t_n$ with $n > 3$ such that 
  \begin{enumerate}
    \item 
      for each $i$ with $1 \leq i < n$, there exists a role atom 
      $r_i(t_i, t_{i+1}) \din \Q$,
    \item
       $t_1 = t_n$, and 
    \item\label{it:multiedge} 
      $t_i \neq t_j$ for $1 \leq i < j < n$.
  \end{enumerate}
  \mathdefend
\end{definition}

In the above definition, Item~\ref{it:multiedge} makes sure that we do 
not consider queries as cyclic just because they contain two terms $t, 
t'$ for which there are more than two role atoms using the two terms. 
Please note that we use the relation \din here, which implicitly uses 
the relation \sapprox and abstracts from the directedness of role atoms. 
 
In the following, if we write that we replace $r(t, t') \din \Q$ with 
$s(t_1, t_2), \ldots, s(t_{n-1}, t_n)$ for $t = t_1$ and $t' = t_n$, we 
mean that we first remove any occurrences of $r(\hat{t}, \hat{t'})$ and 
$\inv{r}(\hat{t'}, \hat{t}) $ such that $\hat{t} \sapprox t$ and 
$\hat{t'} \sapprox t'$ from \Q, and then add the atoms $s(t_1, t_2), 
\ldots, s(t_{n-1}, t_n)$ to \Q. 

W.l.o.g., we assume that queries are connected. More precisely, let \Q 
be a conjunctive query. We say that \Q is \emph{connected} if, for all 
$t, t' \in \terms{\Q}$, there exists a sequence $t_1, \dots, t_n$ such 
that $t_1 = t$, $t_n = t'$ and, for all $1 \leq i < n$, there exists a 
role $r$ such that $r(t_i, t_{i+1}) \din \Q$. A collection $\Q_1, 
\dots, \Q_n$ of queries is a \emph{partitioning} of \Q if $\Q = \Q_1 
\cup \ldots \cup \Q_n$, $\Q_i \cap \Q_j = \emptyset$ for $1 \leq i 
< j \leq n$, and each $\Q_i$ is connected. 

\begin{lemma}
  Let \KB be a knowledge base, \Q a conjunctive query, and $\Q_1, \dots, 
  \Q_n$ a partitioning of $\Q$. Then $\KB \models \Q$ iff $\KB \models 
  \Q_i$ for each $i$ with $1 \leq i \leq n$.
\end{lemma}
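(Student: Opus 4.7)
The plan is to prove the biconditional in two steps. The forward direction is immediate, obtained by restricting the satisfying evaluation to each component; the backward direction requires combining, in each model of \KB, the per-component matches into a single match for \Q, and this step uses the structure of a partitioning into connected components in an essential way.

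For the $(\Rightarrow)$ direction, I would fix an arbitrary model $\I \models \KB$ and apply $\KB \models \Q$ to obtain a match $\pi \colon \terms{\Q} \to \dom$ with $\I \models^\pi \Q$. Letting $\pi_i$ denote the restriction of $\pi$ to $\terms{\Q_i}$, one checks that $\pi_i$ is an evaluation for $\Q_i$: it still sends every $a \in \inds{\Q_i}$ to $\Int{a}$, and it respects the \sapprox-relation induced by $\Q_i$, which is a sub-relation of \sapprox on \Q. Every atom of $\Q_i$ is also an atom of \Q, hence satisfied by $\pi$ and therefore by $\pi_i$; so $\I \models^{\pi_i} \Q_i$ and $\I \models \Q_i$. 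Since \I was arbitrary, $\KB \models \Q_i$.

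For the $(\Leftarrow)$ direction, I would fix $\I \models \KB$ and use $\KB \models \Q_i$ to pick, for each $i$, a match $\pi_i \colon \terms{\Q_i} \to \dom$ with $\I \models^{\pi_i} \Q_i$. I would then define a glued evaluation $\pi \colon \terms{\Q} \to \dom$ by $\pi(t) := \pi_i(t)$ for $t \in \terms{\Q_i}$. This is well-defined on shared individual names because every $\pi_j$ must send $a \in \inds{\Q}$ to $\Int{a}$, and it is well-defined on variables because distinct components of a partitioning share no variables: if a variable $x$ occurred in both $\terms{\Q_i}$ and $\terms{\Q_j}$, then the role atoms of $\Q_i$ and $\Q_j$ incident to $x$ would witness that $\Q_i \cup \Q_j$ is itself connected, so it could not be split into two separate components of a partitioning. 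Every atom of \Q lies in some $\Q_i$ and is satisfied by the corresponding $\pi_i$, so $\pi$ is a match for \Q, giving $\I \models \Q$ and hence $\KB \models \Q$.

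The step I expect to be the main obstacle is justifying the variable-disjointness used in the gluing: on the face of it, the definition of partitioning only requires atom-disjointness together with individual connectedness of each $\Q_i$ through role atoms, and one must be careful about equality atoms and stand-alone concept atoms, which do not themselves contribute to the graph-theoretic connectedness. Once this disjointness is secured, both directions reduce to routine bookkeeping about restrictions and disjoint unions of evaluations.
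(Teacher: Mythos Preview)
Your restriction/gluing structure is the natural argument, and in fact the paper does not supply its own proof but defers to Tessaris (2001, \S7.3.2), where essentially this argument appears. So there is nothing to compare at the level of approach.

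The obstacle you flag at the end is real, however, and your attempted resolution does not work. You claim that if a variable $x$ lay in both $\terms{\Q_i}$ and $\terms{\Q_j}$ then $\Q_i\cup\Q_j$ would be connected and hence ``could not be split into two separate components of a partitioning''. But the paper's definition of partitioning only demands that each $\Q_i$ be connected and that the $\Q_i$ be pairwise \emph{atom}-disjoint; it nowhere forbids $\Q_i\cup\Q_j$ from being connected. Under that literal reading, $\Q=\{r(x,y),\,s(x,z)\}$ admits the partitioning $\Q_1=\{r(x,y)\}$, $\Q_2=\{s(x,z)\}$, and then the lemma is simply false: any interpretation with $\Int{r}=\{(d_1,d_2)\}$, $\Int{s}=\{(d_3,d_4)\}$ and no further role edges satisfies each $\Q_i$ but not $\Q$. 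The repair is not in your proof but in the definition: what is intended is the decomposition into \emph{maximal} connected sub-queries, equivalently a partitioning whose pieces have pairwise disjoint variable sets. Once you take that as a hypothesis, your gluing argument is correct as written, and the residual worries about equality atoms and isolated concept atoms disappear, since no such atom can straddle two variable-disjoint pieces.
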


A proof is given by \citeA[7.3.2]{Tess01a} and, with this lemma, it is 
clear that the restriction to connected queries is indeed w.l.o.g.\ 
since entailment of \Q can be decided by checking entailment of each 
$\Q_i$ at a time. In what follows, we therefore assume queries to be 
connected without further notice.

\begin{definition}
  A \emph{union of Boolean conjunctive queries} is a formula $\Q_1 \vee 
  \ldots \vee \Q_n$, where each disjunct $\Q_i$ is a Boolean conjunctive 
  query. 
  
  A knowledge base \KB \emph{entails} a union of Boolean conjunctive 
  queries $\Q_1 \vee \ldots \vee \Q_n$, written as $\KB \models \Q_1 
  \vee \ldots \vee \Q_n$, if, for each interpretation \I such that $\I 
  \models \KB$, there is some $i$ such that $\I \models \Q_i$ and $1 
  \leq i \leq n$. 
\end{definition}

W.l.o.g.\ we assume that the variable names in each disjunct are 
different from the variable names in the other disjuncts. This can 
always be achieved by naming variables apart. We further assume that 
each disjunct is a connected conjunctive query. This is w.l.o.g. since a 
UCQ which contains unconnected disjuncts can always be transformed into 
conjunctive normal form; we can then decide entailment for each 
resulting conjunct separately and each conjunct is a union of connected 
conjunctive queries. We describe this transformation now in more detail 
and, for a more convenient notation, we write a conjunctive query 
$\{at_1, \ldots, at_k\}$ as $at_1 \wedge \ldots \wedge at_k$ in the 
following proof, instead of the usual set notation. 

\begin{lemma}\label{lem:cnf}
  Let \KB be a knowledge base, $\Q = \Q_1 \vee \ldots \vee \Q_n$ a union 
  of conjunctive queries such that, for $1 \leq i \leq n, \Q_i^1, 
  \ldots, \Q_i^{k_i}$ is a partitioning of the conjunctive query $\Q_i$. 
  Then $\KB \models \Q$ iff 
  $$
  \KB \models \bigwedge\limits_{(i_1, \ldots, i_n) \in \{1, \ldots, 
  k_1\} \times \ldots \times \{1, \ldots, k_n\}} (\Q_1^{i_1} \vee \ldots 
  \vee \Q_n^{i_n}). 
  $$
\end{lemma}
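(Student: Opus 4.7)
The plan is to reduce the statement to a purely propositional distributivity argument, once the existential quantifiers implicit in conjunctive queries have been handled. Concretely, for a fixed interpretation $\I$, I first want to show that $\I \models \Q_i$ iff $\I \models \Q_i^j$ for every $j \in \{1, \ldots, k_i\}$. The forward direction is immediate by restricting a single match to the atoms of $\Q_i^j$. The backward direction uses the fact that a partitioning into connected components has pairwise disjoint variable sets: independent matches $\pi_j$ for each $\Q_i^j$ may be glued into a single evaluation $\pi$ on $\terms{\Q_i}$ by setting $\pi$ to $\pi_j$ on $\vars{\Q_i^j}$ and to $\Int{a}$ on $\inds{\Q_i}$ (which is consistent because the $\pi_j$ already agree with $\Int{\cdot}$ on individual names). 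This is just the interpretation-level analogue of the preceding partitioning lemma.

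Next, since the paper already assumes that variables in distinct disjuncts are renamed apart, the above equivalence lifts to the whole UCQ: for any $\I$,
\[
\I \models \Q_1 \vee \ldots \vee \Q_n \quad\text{iff}\quad \I \models \bigvee_{i=1}^{n} \bigwedge_{j=1}^{k_i} \Q_i^j.
\]
I then apply the standard propositional distributivity of $\vee$ over $\wedge$ (conversion from DNF to CNF), which, because the $\Q_i^{j}$ behave here like propositional literals with respect to the outer connectives (variable scopes do not collide across different $i$, and within a fixed $i$ have already been separated), yields the equivalence with
\[
\bigwedge_{(i_1, \ldots, i_n) \in \{1, \ldots, k_1\} \times \ldots \times \{1, \ldots, k_n\}} \bigl(\Q_1^{i_1} \vee \ldots \vee \Q_n^{i_n}\bigr).
\]

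Both directions of the lemma then follow by quantifying over models of \KB. For the forward direction, any $\I \models \KB$ satisfies some $\Q_i$, hence all $\Q_i^j$, so for every tuple $(i_1, \ldots, i_n)$ the disjunct $\Q_i^{i_i}$ witnesses that conjunct. For the converse, I argue contrapositively: if $\KB \not\models \Q$, pick a model $\I$ with $\I \not\models \Q_i$ for every $i$; by the first step, for each $i$ there exists $j_i$ with $\I \not\models \Q_i^{j_i}$, and then the conjunct indexed by $(j_1, \ldots, j_n)$ is falsified by $\I$. The main obstacle is really the first step, namely justifying the distribution of the existential quantifiers over the conjunctions of connected components; everything after that is bookkeeping on top of classical propositional CNF conversion.
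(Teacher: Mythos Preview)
Your argument is correct and is the standard one: reduce to the interpretation-level version of the preceding partitioning lemma (gluing matches along connected components with disjoint variable sets), then apply propositional distributivity of $\vee$ over $\wedge$. The paper does not actually prove this lemma itself but defers to \cite[7.3.3]{Tess01a}, so there is nothing to compare against beyond noting that your sketch is exactly what that reference unfolds to. One small caveat: the paper's literal definition of ``partitioning'' only requires the atom sets to be disjoint and each piece connected, which does not by itself force disjoint variable sets; you are implicitly reading it as a partition into \emph{maximal} connected components (which is clearly the intended meaning, and is what makes both this lemma and the preceding one true), so it would be worth stating that assumption explicitly.
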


Again, a detailed proof is given by \citeA[7.3.3]{Tess01a}. Please note 
that, due to the transformation into conjunctive normal form, the 
resulting number of unions of connected conjunctive queries for which we 
have to test entailment can be exponential in the size of the original 
query. When analysing the complexity of the decision procedures 
presented in Section~\ref{sect:decisionprocedure}, we show that the 
assumption that each CQ in a UCQ is connected does not increase the 
complexity. 

We now make the connection between query entailment and query answering 
clearer. For query answering, let the variables of a conjunctive query 
be typed: each variable can either be existentially quantified (also 
called \emph{non-distinguished}) or free (also called 
\emph{distinguished} or \emph{answer variables}). Let \Q be a query in 
$n$ variables (i.e., $\card{\vars{\Q}} = n$), of which $v_1, \dots, v_m$ 
($m \leq n$) are answer variables. The \emph{answers} of \KBDef to 
\Q are those $m$-tuples $(a_1, \dots, a_m) \in \indA^m$ such that, for 
all models \I of \KB, $\I \models^\pi \Q$ for some $\pi$ that satisfies 
$\pi(v_i) = \Int{a_i}$ for all $i$ with $1 \leq i \leq m$. It 
is not hard to see that the answers of \KB to \Q can be computed by 
testing, for each $(a_1, \ldots, a_m) \in \indA^m$, whether the query 
$\Q_{[v_1, \dots, v_m/a_1, \ldots, a_m]}$ obtained from \Q by replacing 
each occurrence of $v_i$ with $a_i$ for $1 \leq i \leq m$ is 
entailed by \KB. The answer to \Q is then the set of all $m$-tuples 
$(a_1, \ldots, a_m)$ for which $\KB \models \Q_{[v_1, \dots, v_m/a_1, 
\ldots, a_m]}$. Let $k = \card{\indA}$ be the number of individual names 
used in the ABox \AB. Since \AB is finite, clearly $k$ is finite. Hence, 
deciding which tuples belong to the set of answers can be checked with 
at most $k^m$ entailment tests. This is clearly not very efficient, but 
optimizations can be used, e.g., to identify a (hopefully small) set of 
candidate tuples. 

The algorithm that we present in Section~\ref{sect:decisionprocedure} 
decides query entailment. The reasons for devising a decision procedure 
for query entailment instead of query answering are two-fold: first, 
query answering can be reduced to query entailment as shown above; 
second, in contrast to query answering, query entailment is a decision 
problem and can be studied in terms of complexity theory. 

In the remainder of this paper, if not stated otherwise, we use \Q 
(possibly with subscripts) for a connected Boolean conjunctive query, 
\KB for a \SHIQ knowledge base (\TB, \RB, \AB{}), \I for an 
interpretation \inter, and $\pi$ for an evaluation.

\section{Related Work}
\label{sect:rw}

Very recently, an automata-based decision procedure for positive 
existential path queries over \ALCQIbreg knowledge bases has been 
presented \cite{CaEO07a}. Positive existential path queries generalize 
unions of conjunctive queries and since a \SHIQ knowledge base can be 
polynomially reduced to an \ALCQIbreg knowledge base, the presented 
algorithm is a decision procedure for (union of) conjunctive query 
entailment in \SHIQ as well. The automata-based technique can be 
considered more elegant than our rewriting algorithm, but it does not 
give an \NPclass upper bound for the data complexity as our technique. 


Most existing algorithms for conjunctive query answering in expressive 
DLs assume, however, that role atoms in conjunctive queries use only 
roles that are not transitive. As a consequence, the example query from 
the introductory section cannot be answered. Under this restriction, 
decision procedures for various DLs around \SHIQ are known 
\cite{HoTe00a,OrCE06a}, and it is known that answering conjunctive 
queries in this setting is data complete for 
\hbox{co-\NPclass~\cite{OrCE06a}.} Another common restriction is that 
only individuals named in the ABox are considered for the assignments of 
variables. In this setting, the semantics of queries is no longer the 
standard First-Order one. With this restriction, the answer to the 
example query from the introduction would be $\dlf{false}$ since 
$\dlf{Mary}$ is the only named individual. It is not hard to see that 
conjunctive query answering with this restriction can be reduced to 
standard instance retrieval by replacing the variables with individual 
names from the ABox and then testing the entailment of each conjunct 
separately. Most of the implemented DL reasoners, e.g., KAON2, Pellet, 
and RacerPro, provide an interface for conjunctive query answering in 
this setting and employ several optimizations to improve the performance 
\cite{SiPa06a,MoSS04a,WeMo05a}. Pellet appears to be the only reasoner 
that also supports the standard First-Order semantics for \SHIQ 
conjunctive queries under the restriction that the queries are acyclic. 

To the best of our knowledge, it is still an open problem whether 
conjunctive query entailment is decidable in \SHOIQ. Regarding 
undecidability results, it is known that conjunctive query entailment 
in the two variable fragment of First-Order Logic $\Lmc_2$ is 
undecidable \cite{Rosa07b} and Rosati identifies a relatively small 
set of constructors that causes the undecidability. 

Query entailment and answering have also been studied in the context of 
databases with incomplete information \cite{Rosa06b,Meyd98a,Grah91a}. In 
this setting, DLs can be used as schema languages, but the expressivity 
of the considered DLs is much lower than the expressivity of \SHIQ. For 
example, the constructors provided by logics of the DL-Lite family 
\cite{CDLL07a} are chosen such that the standard reasoning tasks are in 
\PTime and query entailment is in \LogSpace with respect to data 
complexity. Furthermore, TBox reasoning can be done independently of the 
ABox and the ABox can be stored and accessed using a standard database 
SQL engine. Since the considered DLs are considerable less expressive 
than \SHIQ, the techniques used in databases with incomplete information 
cannot be applied in our setting.


Regarding the query language, it is well known that an extension of 
conjunctive queries with inequalities is undecidable \cite{CaDL98a}. 
Recently, it has further been shown that even for DLs with 
low expressivity, an extension of conjunctive queries with inequalities 
or safe role negation leads to undecidability \cite{Rosa07b}. 

A related reasoning problem is \emph{query containment}. Given a schema 
(or TBox) \Smc and two queries $q$ and $q'$, we have that $q$ is 
contained in $q'$ w.r.t.\ \Smc iff every interpretation \I that 
satisfies \Smc and $q$ also satisfies $q'$. It is well known that query 
containment w.r.t.\ a TBox can be reduced to deciding query entailment 
for (unions of) conjunctive queries w.r.t.\ a knowledge base 
\cite{CaDL98a}. Hence a decision procedure for (unions of) conjunctive 
queries in \SHIQ can also be used for deciding query containment w.r.t.\ 
to a \SHIQ TBox.

Entailment of unions of conjunctive queries is also closely related to 
the problem of adding rules to a DL knowledge base, e.g., in the form of 
Datalog rules. Augmenting a DL KB with an arbitrary Datalog program 
easily leads to undecidability \cite{LeRo98a}. In order to ensure 
decidability, the interaction between the Datalog rules and the DL 
knowledge base is usually restricted by imposing a safeness condition. 
The \DLlog framework \cite{Rosa06a} provides the least restrictive 
integration proposed so far. Rosati presents an algorithm that decides 
the consistency of a \DLlog knowledge base by reducing the problem to 
entailment of unions of conjunctive queries, and he proves that 
decidability of UCQs in \SHIQ implies the decidability of consistency 
for $\SHIQ$+$log$ knowledge bases.


\section{Query Rewriting by Example}\label{sect:example}

In this section, we motivate the ideas behind our query rewriting 
technique by means of examples. In the following section, we give 
precise definitions for all rewriting steps.

\subsection{Forest Bases and Canonical Interpretations}
\label{sect:forestbase}

The main idea is that we can focus on models of the knowledge base that 
have a kind of tree or forest shape. 
%
%
It is well known that one reason for Description and Modal Logics being 
so robustly decidable is that they enjoy some form of tree model 
property, i.e., every satisfiable concept has a model that is 
tree-shaped \cite{Vard97a,Grad01a}. When going from concept 
satisfiability to knowledge base consistency, we need to replace the 
tree model property with a form of forest model property, i.e., every 
consistent KB has a model that consists of a set of ``trees'', where 
each root corresponds to a named individual in the ABox. The roots can 
be connected via arbitrary relational structures, induced by the role 
assertions given in the ABox. A forest model is, therefore, not a forest 
in the graph theoretic sense. Furthermore, transitive roles can 
introduce ``short-cut'' edges between elements within a tree or even 
between elements of different trees. Hence we talk of ``a form of'' 
forest model property.

We now define forest models and show that, for deciding query 
entailment, we can restrict our attention to forest models. The 
rewriting steps are then used to transform cyclic subparts of the query 
into tree-shaped ones such that there is a ``forest-shaped match'' for 
the rewritten query into the forest models. 

In order to make the forest model property even clearer, we also 
introduce \emph{forest bases}, which are interpretations that interpret 
transitive roles in an unrestricted way, i.e., not necessarily in a 
transitive way. For a forest base, we require in particular that 
\emph{all} relationships between elements of the domain that can be 
inferred by transitively closing a role are omitted. In the following, 
we assume that the ABox contains at least one individual name, i.e., 
\indA is non-empty. This is w.l.o.g.\ since we can always add an 
assertion $\top(a)$ to the ABox for a fresh individual name $a \in \NI$. 
For readers familiar with tableau algorithms, it is worth noting that 
forest bases can also be thought of as those tableaux generated from a 
complete and clash-free completion tree \cite{HoST00a}. 

\begin{definition}\label{def:tree}
  Let $\Nbbm$ denote the non-negative integers and $\Nbbm^*$ the set of 
  all (finite) words over the alphabet \Nbbm. A \emph{tree} $T$ is a 
  non-empty, prefix-closed subset of $\Nbbm^*$. For $w, w' \in T$, we 
  call $w'$ a \emph{successor} of $w$ if $w' = w \cdot c$ for some $c 
  \in \Nbbm$, where ``$\cdot$'' denotes concatenation. We call $w'$ a 
  \emph{neighbor} of $w$ if $w'$ is a successor of $w$ or vice versa. 
  The empty word $\varepsilon$ is called the \emph{root}.
  
  A \emph{forest base for} \KB is an interpretation \Jmc = \interJ that 
  interprets transitive roles in an unrestricted (i.e., not necessarily 
  transitive) way and, additionally, satisfies the following conditions:
  \begin{enumerate}
    \renewcommand{\theenumi}{\mn{T\arabic{enumi}}}
    \renewcommand\labelenumi{\theenumi}
    \item\label{it:tree} 
      $\domJ \subseteq \indA \times \Nbbm^*$ such that, for all $a 
      \in \indA$, the set $\{w \mid  (a, w) \in \domJ\}$ is a tree;
    \item\label{it:minimal} 
      if $((a, w), (a', w')) \in \IntJ{r}$, then either $w = w' = 
      \varepsilon$ or $a = a'$ and $w'$ is a neighbor of $w$;
    \item\label{it:roots} 
      for each $a \in \indA$, $\IntJ{a} = (a, \varepsilon)$; 
  \end{enumerate}
  An interpretation \I is \emph{canonical for} \KB if there exists a 
  forest base \Jmc for \KB such that \I is identical to \Jmc except 
  that, for all non-simple roles $r$, we have
  $$
     \Int{r} = \IntJ{r} \cup \bigcup_{s \ssssR r, \; s \in \transR} 
     (\IntJ{s})^+
  $$
  In this case, we say that \Jmc is a forest base \emph{for} \I and if 
  $\I \models \KB$ we say that \I is a \emph{canonical model for} \KB.
\end{definition}

For convenience, we extend the notion of successors and neighbors to 
elements in canonical models. Let \I be a canonical model with $(a, w), 
(a', w') \in \dom$. We call $(a', w')$ a \emph{successor} of $(a, w)$ if 
either $a = a'$ and $w' = w \cdot c$ for some $c \in \Nbbm$ or $w = w' = 
\varepsilon$. We call $(a', w')$ a \emph{neighbor} of $(a, w)$ if $(a', 
w')$ is a successor of $(a, w)$ or vice versa. 

Please note that the above definition implicitly relies on the unique 
name assumption (UNA) (cf.\ \ref{it:roots}). This is w.l.o.g.\ as we can 
guess an appropriate partition among the individual names and replace 
the individual names in each partition with one representative 
individual name from that partition. In 
Section~\ref{sect:decisionprocedure}, we show how the partitioning of 
individual names can be used to simulate the UNA, hence, our decision 
procedure does not rely on the UNA. We also show that this does not 
affect the complexity.

\begin{lemma}\label{lem:canonicalcountermodels}
  Let \KB be a \SHIQ knowledge base and $\Q = \Q_1 \vee \ldots \vee 
  \Q_n$ a union of conjunctive queries. Then $\KB \not\models \Q$ iff 
  there exists a canonical model \I of \KB such that $\I \not\models 
  \Q$.
\end{lemma}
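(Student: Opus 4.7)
The plan is to prove the two directions separately. The $(\Leftarrow)$ direction is immediate: a canonical model that falsifies $\Q$ is in particular a model of $\KB$ that falsifies every disjunct $\Q_i$, so by definition $\KB \not\models \Q$. All the work is in $(\Rightarrow)$. Starting from an arbitrary model $\I'$ of $\KB$ with $\I' \not\models \Q_i$ for every $i$, my goal is to construct a canonical model $\I$ of $\KB$ together with a projection $p \colon \dom \to \dom^{\I'}$ that lets me pull any putative match for some $\Q_i$ in $\I$ back to a match in $\I'$, thereby deriving a contradiction.

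The construction is the standard unravelling tailored to $\SHIQ$. First I reduce to the UNA by guessing, for each pair of individual names, whether they are to be mapped to the same element of $\dom^{\I'}$, and replacing each equivalence class by a single representative; this is justified by the remark following the definition of forest bases. Then I build the forest base $\Jmc$ whose domain consists of pairs $(a,w) \in \indA \times \Nbbm^*$, where $(a,\varepsilon)$ represents $a^{\I'}$ and longer words represent histories of role steps taken from the root. Concept names are interpreted by pulling back along the natural projection $p$, which sends $(a,\varepsilon)$ to $a^{\I'}$ and extends canonically. Role edges between roots are taken verbatim from $\I'$ to preserve the ABox; inside each tree, edges $((a,w),(a,w\cdot c))$ are introduced only when $p((a,w\cdot c))$ is a genuine role-successor of $p((a,w))$ in $\I'$, and transitive roles are interpreted non-transitively in $\Jmc$. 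Finally $\I$ is obtained from $\Jmc$ by closing every transitive role (and all of its super-roles, as in Definition~\ref{def:tree}) under transitivity. By construction, $\I$ satisfies the structural constraints \ref{it:tree}--\ref{it:roots} for a forest base, so $\I$ is canonical.

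The verification has two parts. First, $\I \models \KB$: this is the standard unravelling result for $\SHIQ$, relying on the fact that the number-restriction constructors are syntactically restricted to simple roles, so qualified cardinalities are not affected by the unravelling/transitive-closure step, while GCIs and role inclusions are preserved because the projection $p$ preserves role membership in $\I$ and reflects concept membership (an easy structural induction on $\SHIQ$ concepts using that $p$ is a bisimulation with respect to simple roles and commutes with transitive closure for non-simple roles). Second, $\I \not\models \Q$: if $\I \models^\pi \Q_i$ for some evaluation $\pi$, then $p \circ \pi$ is an evaluation into $\I'$ that maps individual names correctly (by T3 and the definition of $p$), sends concept atoms to true (by preservation of concept membership), and sends role atoms to true (since $p$ maps each $r^\I$-edge to an $r^{\I'}$-edge, where the transitive closure in $\I$ is absorbed by transitivity of $r^{\I'}$). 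This contradicts $\I' \not\models \Q_i$. The main obstacle is making the preservation of number restrictions under unravelling rigorous: one must ensure that when a simple role connects two root elements in $\I$, this is accounted for by an actual edge in $\I'$, and that the trees do not introduce spurious simple-role edges that would inflate the count of $s$-successors at some element. Restricting $\leqslant n\, s.C$ and $\geqslant n\, s.C$ to simple $s$, together with the careful placement of edges in $\Jmc$, handles exactly this.
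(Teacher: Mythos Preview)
Your proposal is correct and follows essentially the same approach as the paper: both construct a canonical model by unravelling an arbitrary counter-model $\I'$ into a forest base $\Jmc$, close non-simple roles transitively to obtain $\I$, and then refute any putative match $\pi$ for some $\Q_i$ in $\I$ by composing with the natural projection back to $\I'$ (the paper calls this projection $\mn{tail}$ and defines it via an explicit path construction, where your $p((a,w))$ is the last element of the corresponding path). The paper, like you, leaves the verification that $\I \models \KB$ at the level of ``tedious but not too hard,'' and the point you single out---that qualified number restrictions survive unravelling precisely because they are restricted to simple roles---is indeed the heart of that verification.
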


A detailed proof is given in the appendix. Informally, for the only if 
direction, we can take an arbitrary counter-model for the query, which 
exists by assumption, and ``unravel'' all non-tree structures. Since, 
during the unraveling process, we only replace cycles in the model by 
infinite paths and leave the interpretation of concepts unchanged, the 
query is still not satisfied in the unravelled canonical model. The if 
direction of the proof is trivial.

\subsection{The Running Example}
\label{sect:rewritingexample}

We use the following Boolean query and knowledge base as a running 
example:
 
\begin{example}\label{ex:running}
  Let $\KB = (\TB, \RB, \AB)$ be a \SHIQ knowledge base with $r, t \in 
  \NtR, k \in \Nbbm$
  $$
  \begin{array}{l l l}
    \TB = & \{ & C_k \sqsubseteq \,\,\geqslant k\; p.\top,\\
          &    & C_3 \sqsubseteq \,\,\geqslant 3\; p.\top,\\
          &    & D_2 \sqsubseteq \exists s^-.\top \sqcap \exists 
                 t.\top \\
          & \}\\
    \RB = & \{ & t \sqsubseteq t^-, \\
          &    & s^- \sqsubseteq r\\
          & \}\\
    \AB = & \{ & r(a, b), \\
          &    & (\exists p.C_k \sqcap \exists p.C \sqcap \exists 
                 r^-.C_3)(a), \\
          &    & (\exists p.D_1 \sqcap \exists r.D_2)(b)\\
          & \}
  \end{array}
  $$ 
  and $\Q = \{r(u, x), r(x, y), t(y, y), s(z, y), r(u, z)\}$ with 
  $\inds{\Q} = \emptyset$ and $\vars{\Q} = \{u, x, y, z\}$.
\end{example}

For simplicity, we choose to use a CQ instead of a UCQ. In case of a 
UCQ, the rewriting steps are applied to each disjunct separately. 

\begin{figure}[htb]
  \centering
  \input{canonical.pstex_t}
  \caption{A representation of a canonical interpretation \I for \KB.}
  \label{fig:canonical}
\end{figure}

Figure~\ref{fig:canonical} shows a representation of a canonical model 
\I for the knowledge base \KB from Example~\ref{ex:running}. Each 
labeled node represents an element in the domain, e.g., the individual 
name $a$ is represented by the node labeled $(a, \varepsilon)$. The 
edges represent relationships between individuals. For example, we can 
read the $r$-labeled edge from $(a, \varepsilon)$ to $(b, \varepsilon)$ 
in both directions, i.e., $(\Int{a}, \Int{b}) = ((a, \varepsilon), (b, 
\varepsilon)) \in \Int{r}$ and $(\Int{b}, \Int{a}) = ((b, \varepsilon), 
(a, \varepsilon)) \in \Int{r^-}$. The ``short-cuts'' due to transitive 
roles are shown as dashed lines, while the relationship between the 
nodes that represent ABox individuals is shown in grey. Please note that 
we did not indicate the interpretations of all concepts in the figure. 

Since \I is a canonical model for \KB, the elements of the domain are 
pairs $(a, w)$, where $a$ indicates the individual name that corresponds 
to the root of the tree, i.e., $\Int{a} = (a, \varepsilon)$ and the 
elements in the second place form a tree according to our definition of 
trees. For each individual name $a$ in our ABox, we can, therefore, 
easily define the tree rooted in $a$ as $\{w \mid (a, w) \in \dom\}$. 

\begin{figure}[htb]
  \centering
  \input{forestbase.pstex_t}
  \caption{A forest base for the interpretation represented by 
  Figure~\ref{fig:canonical}.}
  \label{fig:forestbase}
\end{figure}

Figure~\ref{fig:forestbase} shows a representation of a forest base for 
the interpretation from Figure~\ref{fig:canonical} above. For 
simplicity, the interpretation of concepts is no longer shown. The two 
trees, rooted in $(a, \varepsilon)$ and $(b, \varepsilon)$ respectively, 
are now clear. 

\begin{figure}[htb]
  \centering
  \input{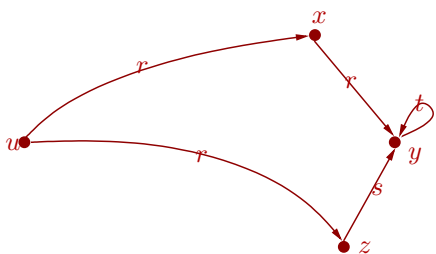}
  \caption{A graph representation of the query from 
  Example~\ref{ex:running}.}
  \label{fig:query}
\end{figure}

A graphical representation of the query \Q from Example~\ref{ex:running} 
is shown in Figure~\ref{fig:query}, where the meaning of the nodes and 
edges is analogous to the ones given for interpretations. We call this 
query a cyclic query since its underlying undirected graph is cyclic 
(cf.\ Definition~\ref{def:cyclic}). 

\begin{figure}[htb]
  \centering
  \input{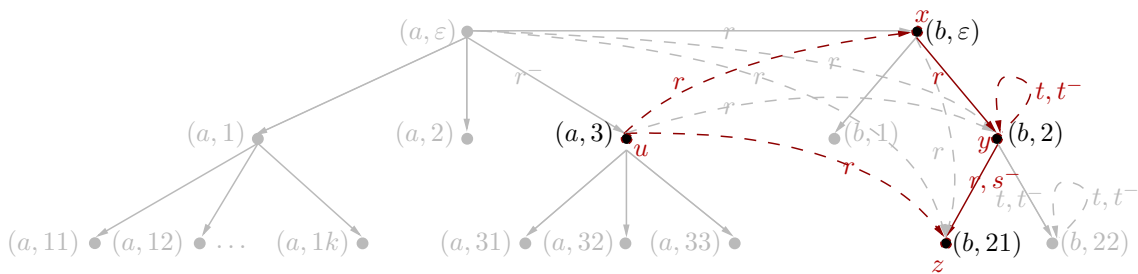}
  \caption{A match $\pi$ for the query \Q from Example~\ref{ex:running} 
  onto the model \I from Figure~\ref{fig:canonical}.}
  \label{fig:querymatch}
\end{figure}

Figure~\ref{fig:querymatch} shows a match $\pi$ for \Q and \I and, 
although we consider only one canonical model here, it is not hard to 
see that the query is true in each model of the knowledge base, i.e., 
$\KB \models \Q$. 

The forest model property is also exploited in the query rewriting 
process. We want to rewrite \Q into a set of queries $\Q_1, \ldots, 
\Q_n$ of ground or tree-shaped queries such that $\KB \models \Q$ iff 
$\KB \models \Q_1 \vee \ldots \vee \Q_n$. Since the resulting queries 
are ground or tree-shaped queries, we can explore the known techniques 
for deciding entailment of these queries. As a first step, we transform 
\Q into a set of forest-shaped queries. Intuitively, forest-shaped 
queries consist of a set of tree-shaped sub-queries, where the roots of 
these trees might be arbitrarily interconnected (by atoms of the form 
$r(t, t')$). A tree-shaped query is a special case of a forest-shaped 
query. We will call the arbitrarily interconnected terms of a 
forest-shaped query the root choice (or, for short, just roots). At the 
end of the rewriting process, we replace the roots with individual names 
from \indA and transform the tree parts into a concept by applying the 
so called rolling-up or tuple graph technique \cite{Tess01a,CaDL98a}. 

In the proof of the correctness of our procedure, we use the structure 
of the forest bases in order to explicate the transitive ``short-cuts'' 
used in the query match. By explicating we mean that we replace each 
role atom that is mapped to such a short-cut with a sequence of role 
atoms such that an extended match for the modified query uses only paths 
that are in the forest base.

\subsection{The Rewriting Steps}

The rewriting process for a query \Q is a six stage process. At the end 
of this process, the rewritten query may or may not be in a forest 
shape. As we show later, this ``don't know'' non-determinism does not 
compromise the correctness of the algorithm. In the first stage, we 
derive a \emph{collapsing} $\Q_{co}$ of \Q by adding (possibly several) 
equality atoms to \Q. Consider, for example, the cyclic query $\Q = 
\{r(x, y), r(x, y'), s(y, z), s(y', z)\}$ (see 
Figure~\ref{fig:collapsing}), which can be transformed into a 
tree-shaped one by adding the equality atom $y \approx y'$. 

\begin{figure}[htb]
  \centering  
  \input{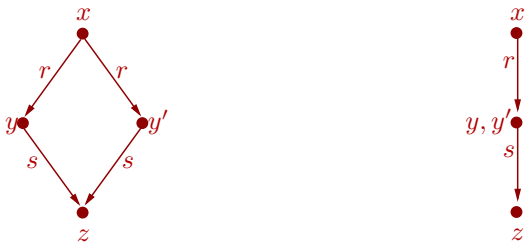}
  \caption{A representation of a cyclic query and of the tree-shaped 
           query obtained by adding the atom $y \approx y'$ to the 
           query depicted on the left hand side.}
  \label{fig:collapsing}
\end{figure}

A common property of the next three rewriting steps is that they allow 
for substituting the implicit short-cut edges with explicit paths that 
induce the short-cut. The three steps aim at different cases in which 
these short-cuts can occur and we describe their goals and application 
now in more detail:

The second stage is called \emph{split rewriting}. In a split rewriting 
we take care of all role atoms that are matched to transitive 
``short-cuts'' connecting elements of two different trees and by-passing 
one or both of their roots. We substitute these short-cuts with either 
one or two role atoms such that the roots are included. In our running 
example, $\pi$ maps $u$ to $(a, 3)$ and $x$ to $(b, \varepsilon)$. Hence 
$\I \models^\pi r(u, x)$, but the used $r$-edge is a transitive 
short-cut connecting the tree rooted in $a$ with the tree rooted in $b$, 
and by-passing $(a, \varepsilon)$. Similar arguments hold for the atom 
$r(u, z)$, where the path that implies this short-cut relationship goes 
via the two roots $(a, \varepsilon)$ and $(b, \varepsilon)$. It is clear 
that $r$ must be a non-simple role since, in the forest base \Jmc for 
\I, there is no ``direct'' connection between different trees other than 
between the roots of the trees. Hence, $(\pi(u), \pi(x)) \in \Int{r}$ 
holds only because there is a role $s \in \transR$ such that $s \sssR 
r$. In case of our example, $r$ itself is transitive. A split rewriting 
eliminates transitive short-cuts between different trees of a canonical 
model and adds the ``missing'' variables and role atoms matching the 
sequence of edges that induce the short-cut. 

\begin{figure}[htb]
  \centering
  \input{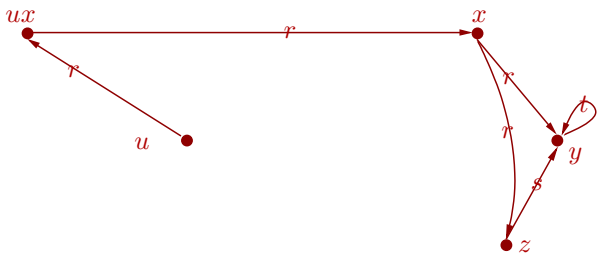}
  \caption{A split rewriting $\Q_{sr}$ for the query shown in 
           Figure~\ref{fig:query}.}
  \label{fig:splitrewriting}
\end{figure}

Figure~\ref{fig:splitrewriting} depicts the split rewriting 
$$ 
{
\setlength\arraycolsep{0.1em} 
\begin{array}{l l} 
  \Q_{sr} = \{&r(u, ux), r(ux, x), r(x, y), t(y, y), s(z, y), \\ 
  & r(u, ux), r(ux, x), r(x, z)\} 
\end{array} 
} 
$$ 
of \Q that is obtained from \Q by replacing (i) $r(u, x)$ with $r(u, 
ux)$ and $r(ux, x)$ and (ii) $r(u, z)$ with $r(u, ux), r(ux, x)$, and 
$r(x, z)$. Please note that we both introduced a new variable ($ux$) and 
re-used an existing variable ($x$). Figure~\ref{fig:splitmatch} shows a 
match for $\Q_{sr}$ and the canonical model \I of \KB in which the two 
trees are only connected via the roots. For the rewritten query, we 
also guess a set of roots, which contains the variables that are mapped 
to the roots in the canonical model. For our running example, we guess 
that the set of roots is $\{ux, x\}$. 

\begin{figure}[htb]
  \centering
  \input{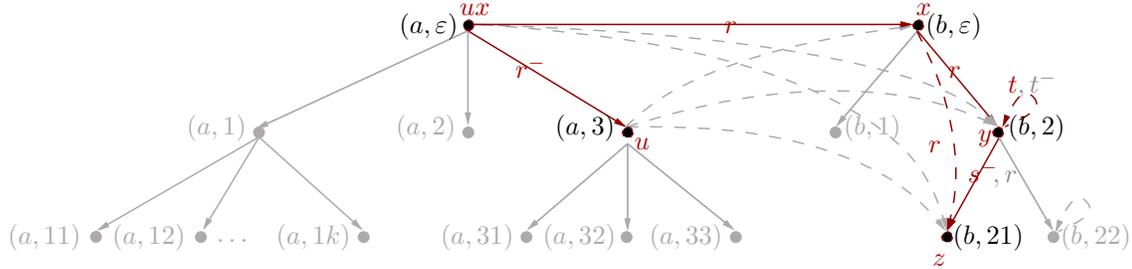}
  \caption{A split match $\pi_{sr}$ for the query $\Q_{sr}$ from 
           Figure~\ref{fig:splitrewriting} onto the canonical 
           interpretation from Figure~\ref{fig:canonical}.}
  \label{fig:splitmatch}
\end{figure}

In the third step, called \emph{loop rewriting}, we eliminate ``loops'' 
for variables $v$ that do not correspond to roots by replacing atoms 
$r(v, v)$ with two atom $r(v, v')$ and $r(v', v)$, where $v'$ can either 
be a new or an existing variable in \Q. In our running example, we 
eliminate the loop $t(y, y)$ as follows:  
$$
{\setlength\arraycolsep{0.1em}
\begin{array}{l l}
  \Q_{\ell r} = \{ & r(u, ux), r(ux, x), r(x, y), t(y, y'), t(y', y), 
                     s(z, y), \\
                   & r(u, ux), r(ux, x), r(x, z)\}
\end{array}
}
$$ 
is the query obtained from $\Q_{sr}$ (see 
Figure~\ref{fig:splitrewriting}) by replacing $t(y, y)$ with $t(y, y')$ 
and $t(y', y)$ for a new variable $y'$. Please note that, since $t$ is 
defined as transitive and symmetric, $t(y, y)$ is still implied, i.e., 
the loop is also a transitive short-cut. Figure~\ref{fig:looprewriting} 
shows the canonical interpretation \I from Figure~\ref{fig:canonical} 
with a match $\pi_{\ell r}$ for $\Q_{\ell r}$. The introduction of the 
new variable $y'$ is needed in this case since there is no variable that 
could be re-used and the individual $(b, 22)$ is not in the range of the 
match $\pi_{sr}$. 

\begin{figure}[htb]
  \centering
  \input{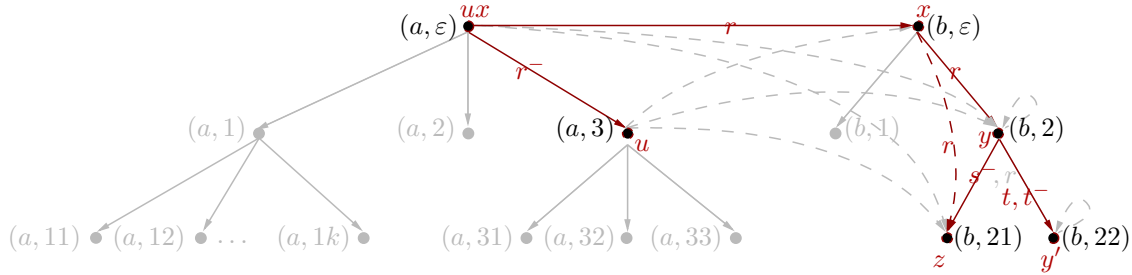}
  \caption{A loop rewriting $\Q_{\ell r}$ and a match for the canonical 
           interpretation from Figure~\ref{fig:canonical}.}
  \label{fig:looprewriting}
\end{figure}

The forth rewriting step, called \emph{forest rewriting}, allows again 
the replacement of role atoms with sets of role atoms. This allows the 
elimination of cycles that are within a single tree. A forest rewriting 
$\Q_{fr}$ for our example can be obtained from $\Q_{\ell r}$ by 
replacing the role atom $r(x, z)$ with $r(x, y)$ and $r(y, z)$, 
resulting in the query 
$$
{\setlength\arraycolsep{0.1em}
\begin{array}{l l}
  \Q_{fr} = \{ & r(u, ux), r(ux, x), r(x, y), t(y, y'), t(y', y), 
                 s(z, y), \\
               & r(u, ux), r(ux, x), r(x, y), r(y, z)\}.
\end{array}
}
$$
Clearly, this results in tree-shaped sub-queries, one rooted in $ux$ and 
one rooted in $x$. Hence $\Q_{fr}$ is forest-shaped w.r.t. the root 
terms $ux$ and $x$. Figure~\ref{fig:forestrewriting} shows the canonical 
interpretation \I from Figure~\ref{fig:canonical} with a match 
$\pi_{fr}$ for $\Q_{fr}$.

\begin{figure}[htb]
  \centering
  \input{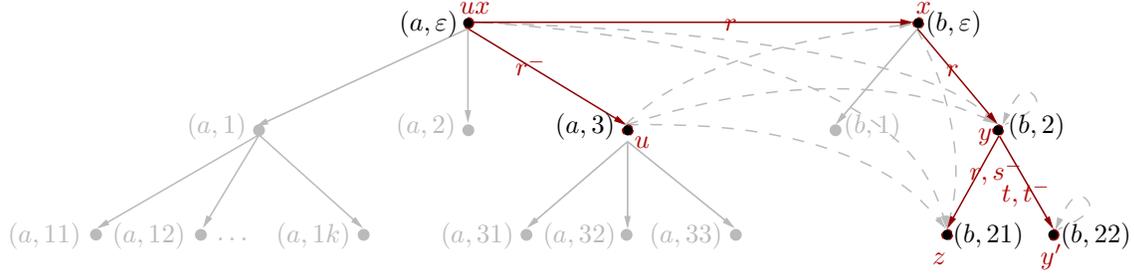}
  \caption{A forest rewriting $\Q_{fr}$ and a forest match $\pi_{fr}$ 
           for the canonical interpretation from 
           Figure~\ref{fig:canonical}.}
  \label{fig:forestrewriting}
\end{figure}

In the fifth step, we use the standard rolling-up 
technique \cite{HoTe00a,CaDL98a} and express the tree-shaped sub-queries 
as concepts. In order to do this, we traverse each tree in a bottom-up 
fashion and replace each leaf (labeled with a concept $C$, say) and its 
incoming edge (labeled with a role $r$, say) with the concept $\exists 
r.C$ added to its predecessor. For example, the tree rooted in $ux$ 
(i.e., the role atom $r(u, ux)$) can be replaced with the atom $(\exists 
r^-.\top)(ux)$. Similarly, the tree rooted in $x$ (i.e., the role atoms 
$r(x, y), r(y, z), s(z, y), t(y, y')$, and $t(y', y)$) can be replaced 
with the atom 
$$
(\exists r.((\exists (r \sqcap \inv{s}).\top) \sqcap (\exists (t \sqcap 
\inv{t}).\top))(x).
$$ 
Please note that we have to use role conjunctions in the resulting query 
in order to capture the semantics of multiple role atoms relating the 
same pair of variables.

Recall that, in the split rewriting, we have guessed that $x$ and $ux$ 
correspond to roots and, therefore, correspond to individual names in 
\indA. In the sixth and last rewriting step, we guess which variable 
corresponds to which individual name and replace the variables with the 
guessed names. A possible guess for our running example would be that 
$ux$ corresponds to $a$ and $x$ to $b$. This results in the (ground) 
query 
$$
\{(\exists r^-.\top)(a), r(a, b), (\exists r.((\exists (r \sqcap 
\inv{s}).\top) \sqcap (\exists (t \sqcap \inv{t}).\top)))(b)\}, 
$$
which is entailed by \KB. 

Please note that we focused in the running example on the most 
reasonable rewriting. There are several other possible rewritings, e.g., 
we obtain another rewriting from $\Q_{fr}$ by replacing $ux$ 
with $b$ and $x$ with $a$ in the last step. For a UCQ, we apply the 
rewriting steps to each of the disjuncts separately. 

At the end of the rewriting process, we have, for each disjunct, a set 
of ground queries and/or queries that were rolled-up into a single 
concept atom. The latter queries result from forest rewritings that are 
tree-shaped and have an empty set of roots. Such tree-shaped rewritings 
can match anywhere in a tree and can, thus, not be grounded. Finally, we 
check if our knowledge base entails the disjunction of all the rewritten 
queries. We show that there is a bound on the number of (forest-shaped) 
rewritings and hence on the number of queries produced in the rewriting 
process. 

Summing up, the rewriting process for a connected conjunctive query \Q 
involves the following steps:
\begin{enumerate}
  \item
    Build all collapsings of \Q.
  \item
    Build all split rewritings of each collapsing w.r.t.\ a subset 
    \roots of roots.
  \item
    Build all loop rewritings of the split rewritings.
  \item
    Build all (forest-shaped) forest rewritings of the loop rewritings.
  \item
    Roll up each tree-shaped sub-query in a forest-rewriting into a 
    concept atom and
  \item 
    replace the roots in \roots with individual names from the ABox in 
    all possible ways.
\end{enumerate}

Let $\Q_1, \ldots, \Q_n$ be the queries resulting from the rewriting 
process. In the next section, we define each rewriting step and prove 
that $\KB \models \Q$ iff $\KB \models \Q_1 \vee \cdots \vee \Q_n$. 
Checking entailment for the rewritten queries can easily be reduced to 
KB consistency and any decision procedure for \SHIQR KB consistency 
could be used in order to decide if $\KB \models \Q$. We present one 
such decision procedure in Section~\ref{sect:decisionprocedure}.

\section{Query Rewriting}\label{sect:rewriting}

In the previous section, we have used several terms, e.g., tree- or 
forest-shaped query, rather informally. In the following, we give 
definitions for the terms used in the query rewriting process. Once this 
is done, we formalize the query rewriting steps and prove the 
correctness of the procedure, i.e., we show that the forest-shaped 
queries obtained in the rewriting process can indeed be used for 
deciding whether a knowledge base entails the original query. We do not 
give the detailed proofs here, but rather some intuitions behind the 
proofs. Proofs in full detail are given in the appendix.  

\subsection{Tree- and Forest-Shaped Queries}\label{sect:queryshapes}

In order to define tree- or forest-shaped queries more precisely, we use 
mappings between queries and trees or forests. Instead of mapping 
equivalence classes of terms by \sapprox to nodes in a tree, we extend 
some well-known properties of functions as follows:

\begin{definition} 
  For a mapping $f \colon A \to B$, we use $\domain{f}$ and $\range{f}$ 
  to denote $f$'s \emph{domain} $A$ and \emph{range} $B$, respectively. 
  Given an equivalence relation \sapprox on $\domain{f}$, we say that 
  $f$ is \emph{injective modulo \sapprox} if, for all $a, a' \in 
  \domain{f}$, $f(a) = f(a')$ implies $a \sapprox a'$ and we say that 
  $f$ is \emph{bijective modulo $\sapprox$} if $f$ is injective modulo 
  \sapprox and surjective. 
%
%
  Let \Q be a query. A \emph{tree mapping} for \Q is a total function 
  $f$ from terms in \Q to a tree such that 
  \begin{enumerate}
    \item
      $f$ is bijective modulo \sapprox, 
    \item
      if $r(t, t') \din \Q$, then $f(t)$ is a neighbor of $f(t')$, 
      and, 
    \item
      if $a \in \inds{\Q}$, then $f(a) = \varepsilon$. 
  \end{enumerate}
  The query \Q is \emph{tree-shaped} if $\card{\inds{\Q}} \leq 1$ 
  and there is a tree mapping for \Q. 
  
  A \emph{root choice} \roots for \Q is a subset of \terms{\Q} such that 
  $\inds{\Q} \subseteq \roots$ and, if $t \in \roots$ and $t \sapprox 
  t'$, then $t' \in \roots$. For $t \in \roots$, we use $\reach{t}$ to 
  denote the set of terms $t' \in \terms{\Q}$ for which there exists a 
  sequence of terms $t_1, \dots, t_n \in \terms{\Q}$ such that 
  \begin{enumerate} 
    \item 
      $t_1 = t$ and $t_n = t'$, 
    \item 
      for all $1 \leq i < n$, there is a role $r$ such that $r(t_i, 
      t_{i+1}) \din \Q$, and,   
    \item\label{it:split} 
      for $1 < i \leq n$, if $t_i \in \roots$, then $t_i \sapprox t$.
  \end{enumerate}
  We call \roots a \emph{root splitting w.r.t.\ $\Q$} if either $\roots 
  = \emptyset$ or if, for $t_i, t_j \in \roots$, $t_i \nsapprox t_j$ 
  implies that $\reach{t_i} \cap \reach{t_j} = \emptyset$. Each term $t 
  \in \roots$ induces a sub-query 
  $$
    \begin{array}{l l} 
      \sq{\Q, t} := & \{at \din \Q \mid \mbox{ the terms in $at$ occur 
      in } \reach{t}\} \setminus \\ & 
      \{r(t, t) \mid r(t, t) \din \Q\}. 
    \end{array}
  $$ 
  A query \Q is \emph{forest-shaped w.r.t.\ a root splitting} \roots if 
  either $\roots = \emptyset$ and \Q is tree-shaped or each sub-query 
  $\sq{\Q, t}$ for $t \in \roots$ is tree-shaped. 
\end{definition}

For each term $t \in \roots$, we collect the terms that are reachable 
from $t$ in the set $\reach{t}$. By Condition~\ref{it:split}, we make 
sure that \roots and \sapprox are such that each $t' \in \reach{t}$ is 
either not in \roots or $t \sapprox t'$. Since queries are connected by 
assumption, we would otherwise collect all terms in $\reach{t}$ and not 
just those $t' \notin \roots$. For a root splitting, we require that the 
resulting sets are mutually disjoint for all terms $t, t' \in \roots$ 
that are not equivalent. This guarantees that all paths between the 
sub-queries go via the root nodes of their respective trees. 
Intuitively, a forest-shaped query is one that can potentially be mapped 
onto a canonical interpretation \I = \inter such that the terms in the 
root splitting \roots correspond to roots $(a, \varepsilon) \in \dom$. 
In the definition of $\sq{\Q, t}$, we exclude loops of the form $r(t, t) 
\din \Q$, as these parts of the query are grounded later in the query 
rewriting process and between ground terms, we allow arbitrary 
relationships. 

Consider, for example, the query $\Q_{sr}$ of our running example from 
the previous section (cf.\ Figure~\ref{fig:splitrewriting}). Let us 
again make the root choice $\roots := \{ux, x\}$ for \Q. The sets 
$\reach{ux}$ and $\reach{x}$ w.r.t.\ $\Q_{sr}$ and \roots are $\{ux, 
u\}$ and $\{x, y, z\}$ respectively. Since both sets are disjoint, 
\roots is a root splitting w.r.t.\ $\Q_{sr}$. If we choose, however, 
$\roots := \{x, y\}$, the set \roots is not a root splitting w.r.t.\ 
$\Q_{sr}$ since $\reach{x} = \{ux, u, z\}$ and $\reach{y} = \{z\}$ are 
not disjoint.

\subsection{From Graphs to Forests}\label{sect:rewritingsteps}

We are now ready to define the query rewriting steps. Given an arbitrary 
query, we exhaustively apply the rewriting steps and show that we can 
use the resulting queries that are forest-shaped for deciding entailment 
of the original query. Please note that the following definitions are 
for conjunctive queries and not for unions of conjunctive queries since 
we apply the rewriting steps for each disjunct separately. 

\begin{definition}\label{def:rewriting}
  Let \Q be a Boolean conjunctive query. A \emph{collapsing} $\Q_{co}$ 
  of \Q is obtained by adding zero or more equality atoms of the form 
  $t \approx t'$ for $t, t' \in \terms{\Q}$ to \Q. We use \co{\Q} to 
  denote the set of all queries that are a collapsing of \Q. 
  
  Let \KB be a \SHIQ knowledge base. A query $\Q_{sr}$ is called a 
  \emph{split rewriting} of \Q w.r.t.\ \KB if it is obtained from \Q by 
  choosing, for each atom $r(t, t') \din \Q$, to either:
  \begin{enumerate}
    \item 
      do nothing,
    \item 
      choose a role $s \in \transR$ such that $s \sssR r$ and replace 
      $r(t, t')$ with $s(t, u), s(u, t')$, or 
    \item 
      choose a role $s \in \transR$ such that $s \sssR r$ and replace 
      $r(t, t')$ with $s(t, u)$, $s(u, u')$, $s(u', t')$, 
  \end{enumerate}
  where $u, u' \in \NV$ are possibly fresh variables. We use \srK{\Q} to 
  denote the set of all pairs $(\Q_{sr}, \roots)$ for which there is a 
  query $\Q_{co} \in \co{\Q}$ such that $\Q_{sr}$ is a split rewriting 
  of $\Q_{co}$ and \roots is a root splitting w.r.t.\ $\Q_{sr}$.
  
  A query $\Q_{\ell r}$ is called a \emph{loop rewriting} of \Q w.r.t.\ 
  a root splitting \roots and \KB if it is obtained from \Q by choosing, 
  for all atoms of the form $r(t, t) \din \Q$ with $t \notin \roots$, a 
  role $s \in \transR$ such that $s \sssR r$ and by replacing $r(t, t)$ 
  with two atoms $s(t, t')$ and $s(t', t)$ for $t' \in \NV$ a possibly 
  fresh variable. We use \lrK{\Q} to denote the set of all pairs 
  $(\Q_{\ell r}, \roots)$ for which there is a tuple $(\Q_{sr}, \roots) 
  \in \srK{\Q}$ such that $\Q_{\ell r}$ is a loop rewriting of $\Q_{sr}$ 
  w.r.t.\ \roots and \KB.

  For a forest rewriting, fix a set $V \subseteq \NV$ of variables not 
  occurring in \Q such that $\card{V} \leq \card{\vars{\Q}}$. A 
  \emph{forest rewriting} $\Q_{fr}$ w.r.t.\ a root splitting \roots of 
  \Q and \KB is obtained from \Q by choosing, for each role atom $r(t, 
  t')$ such that either $\roots = \emptyset$ and $r(t, t') \din \Q$ or 
  there is some $t_r \in \roots$ and $r(t, t') \din \sq{\Q, t_r}$ to 
  either 
  \begin{enumerate}
    \item 
      do nothing, or
    \item
      choose a role $s \in \transR$ such that $s \sssR r$ and replace 
      $r(t, t')$ with $\ell \leq \card{\vars{\Q}}$ role atoms $s(t_1, 
      t_2), \ldots, s(t_\ell, t_{\ell + 1})$, where $t_1 = t$, $t_{\ell 
      + 1} = t'$, and $t_2, \ldots, t_\ell \in \vars{\Q} \cup V$.
  \end{enumerate}
  We use \frK{\Q} to denote the set of all pairs $(\Q_{fr}, \roots)$ for 
  which there is a tuple $(\Q_{\ell r}, \roots) \in \lrK{\Q}$ such that 
  $\Q_{fr}$ is a forest-shaped forest rewriting of $\Q_{\ell r}$ w.r.t.\ 
  \roots and \KB. 
\end{definition}

If \KB is clear from the context, we say that $\Q'$ is a split, loop, or 
forest rewriting of \Q instead of saying that $\Q'$ is a split, loop, or 
forest rewriting of \Q w.r.t.\ \KB. We assume that $\srK{\Q}, \lrK{\Q}$, 
and $\frK{\Q}$ contain no isomorphic queries, i.e., differences in 
(newly introduced) variable names only are neglected.

In the next section, we show how we can build a disjunction of 
conjunctive queries $\Q_1 \vee \cdots \vee \Q_\ell$ from the queries in 
\frK{\Q} such that each $\Q_i$ for $1 \leq i \leq \ell$ is 
either of the form $C(v)$ for a single variable $v \in \vars{\Q_i}$ or 
$\Q_i$ is ground, i.e., $\Q_i$ contains only constants and no variables. 
It then remains to show that $\KB \models \Q$ iff $\KB \models \Q_1 \vee 
\cdots \vee \Q_\ell$.

\subsection{From Trees to Concepts}
\label{sect:rollingup}

In order to transform a tree-shaped query into a single concept atom and 
a forest-shaped query into a ground query, we define a mapping $f$ from 
the terms in each tree-shaped sub-query to a tree. We then incrementally 
build a concept that corresponds to the tree-shaped query by traversing 
the tree in a bottom-up fashion, i.e., from the leaves upwards to the 
root. 

\begin{definition}\label{def:rollingUp}
  Let \Q be a tree-shaped query with at most one individual name. If $a 
  \in \inds{\Q}$, then let $t_r = a$ otherwise let $t_r = v$ for some 
  variable $v \in \vars{\Q}$. Let $f$ be a tree mapping such that 
  $f(t_r) = \varepsilon$. We now inductively assign, to each term $t \in 
  \terms{\Q}$, a concept $\con{\Q, t}$ as follows:
  \begin{itemize}
    \item 
      if $f(t)$ is a leaf of $\range{f}$, then $\con{\Q, t} := 
      \bigsqcap_{C(t) \din \Q} C$,
    \item 
      if $f(t)$ has successors $f(t_1), \dots, f(t_k)$, then  
      $$
      \begin{array}{l l}
        \con{\Q, t} := 
        & \bigsqcap_{C(t) \din \Q} C \, \sqcap \\[.5em]
        & \bigsqcap_{1 \leq i \leq k} \exists \big( 
          \bigsqcap_{r(t, t_i) \din \Q} r \big) . \con{\Q, t_i}.\\
      \end{array}
      $$
  \end{itemize}
  Finally, the \emph{query concept of \Q w.r.t.\ $t_r$} is 
  $\con{\Q, t_r}$.
\end{definition}

Please note that the above definition takes equality atoms into account. 
This is because the function $f$ is bijective modulo $\sapprox$ and, in 
case there are concept atoms $C(t)$ and $C(t')$ for $t \sapprox t'$, 
both concepts are conjoined in the query concept due to the use of the 
relation $\din$. Similar arguments can be applied to the role atoms. 

The following lemma shows that query concepts indeed capture the 
semantics of \Q.

\begin{lemma}\label{lem:conceptsandqueries}
  Let \Q be a tree-shaped query with $t_r \in \terms{\Q}$ as defined 
  above, $C_\Q = \con{\Q, t_r}$, and \I an interpretation. Then $\I 
  \models \Q$ iff there is a match $\pi$ and an element $d \in 
  \Int{C_\Q}$ such that $\pi(t_r) = d$.
\end{lemma}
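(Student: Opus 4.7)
The plan is to strengthen the claim and prove it by induction on the depth of the sub-tree of $\range{f}$ rooted at $f(t)$, uniformly for every $t \in \terms{\Q}$. Concretely, for each $t$ let $\Q_t \subseteq \Q$ collect all atoms whose terms $t'$ satisfy $f(t') \in f(t) \cdot \Nbbm^*$; I will show that for any $d \in \dom$,
\[
  d \in \Int{\con{\Q, t}} \quad \text{iff} \quad \text{there is an evaluation } \pi \text{ with } \pi(t) = d \text{ and } \I \models^\pi \Q_t.
\]
The main lemma follows by instantiating $t = t_r$, because then $\Q_{t_r} = \Q$, and the requirement $\pi(a) = \Int{a}$ for $a \in \inds{\Q}$ is automatically consistent with $\pi(t_r) = d$: since $f(a) = \varepsilon = f(t_r)$, any individual name in \Q is forced to be $\sapprox$-equivalent to $t_r$ and hence mapped to $d$.

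For the base case, $f(t)$ is a leaf, so by the bijectivity of $f$ modulo $\sapprox$ the only terms of \Q at $f(t)$ form the equivalence class $[t]$, and $\Q_t$ consists of concept atoms on members of $[t]$ together with any equality atoms among them. Since every evaluation is forced to send the whole of $[t]$ to a single element, $\pi(t) = d$ satisfies $\Q_t$ iff $d \in \Int{C}$ for every $C$ with $C(t) \din \Q$, i.e., iff $d \in \Int{\con{\Q, t}}$. For the inductive step, let $f(t_1), \ldots, f(t_k)$ be the successors of $f(t)$. If $d \in \Int{\con{\Q, t}}$, then $d \in \Int{C}$ for each $C(t) \din \Q$ and for every $i$ there is $d_i \in \Int{\con{\Q, t_i}}$ with $(d, d_i) \in \Int{r}$ for every $r$ with $r(t, t_i) \din \Q$. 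By induction each $d_i$ is witnessed by an evaluation $\pi_i$ of $\Q_{t_i}$ with $\pi_i(t_i) = d_i$; the key observation is that, because $f$ is bijective modulo $\sapprox$, the $\sapprox$-classes of terms occurring in the distinct $\Q_{t_i}$'s are pairwise disjoint, so the $\pi_i$'s can be combined with $t \mapsto d$ into a single evaluation $\pi$ of $\Q_t$ that, by construction, satisfies all concept, role, and equality atoms of $\Q_t$. Conversely, given such a $\pi$, restricting to each $\Q_{t_i}$ gives $\pi(t_i) \in \Int{\con{\Q, t_i}}$ by the induction hypothesis, and the role-conjunction witnesses required for the existentials in $\con{\Q, t}$ can simply be read off from $\pi$.

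The main obstacle will be showing that restriction and gluing of evaluations interact cleanly with $\sapprox$ and with any individual names in \Q. The leverage comes from the three defining properties of a tree mapping: $f$ is bijective modulo $\sapprox$ (so $\sapprox$-classes coincide with fibres of $f$, and the sub-queries $\Q_{t_1}, \ldots, \Q_{t_k}$ are genuinely disjoint up to $\sapprox$), neighbours in \Q map to neighbours in the tree (so every role atom of \Q either lives inside some $\Q_{t_i}$ or connects $t$ to one of the $t_i$, matching exactly the structure of $\con{\Q, t}$), and individual names map to $\varepsilon$ (so the root-anchoring of evaluations is preserved by both restriction and gluing).
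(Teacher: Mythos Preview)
Your inductive argument on the depth of the subtree rooted at $f(t)$ is the standard way to prove correctness of the rolling-up technique, and it is essentially what the cited work does; the paper itself does not give a proof but only states that ``the proof given by \citeA{HSTT99a} easily transfers from \DLR to \SHIQ.'' So there is no paper-internal proof to compare against, and your approach is exactly the expected one.

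One small technical point worth tightening: your strengthened induction hypothesis quantifies over \emph{all} $d \in \dom$ and asks for an \emph{evaluation} $\pi$ of $\Q_t$ with $\pi(t) = d$. At the root, if $\inds{\Q} = \{a\}$ then every evaluation must send $a$ (and hence the whole class $[t_r]$) to $\Int{a}$, so for $d \neq \Int{a}$ the right-hand side of your biconditional is vacuously false while $d \in \Int{C_\Q}$ may well hold. The clean fix is to phrase the induction hypothesis in terms of a function respecting $\sapprox$ only (dropping condition~(i) on individual names), prove the biconditional for all $d$, and then recover the individual-name constraint at the very end when specialising to $t = t_r$: if $t_r = a$, instantiate with $d = \Int{a}$; if $t_r$ is a variable, there are no individual names and the two notions coincide. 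This is a cosmetic repair, not a gap in the idea.
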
 

The proof given by \citeA{HSTT99a} easily transfers from \DLR to \SHIQ. 
By applying the result from the above lemma, we can now transform a 
forest-shaped query into a ground query as follows:

\begin{definition}\label{def:transformations}
  Let $(\Q_{fr}, \roots) \in \frK{\Q}$ for $\roots \neq \emptyset$, and 
  $\tau \colon \roots \to \indA$ a total function such that, for each $a 
  \in \inds{\Q}$, $\tau(a) = a$ and, for $t, t' \in \roots$, $\tau(t) = 
  \tau(t')$ iff $t \sapprox t'$. We call such a mapping $\tau$ a 
  \emph{ground mapping for \roots w.r.t.\ $\AB$}. We obtain a ground 
  query $\grounding{\Q_{fr}, \roots, \tau}$ of $\Q_{fr}$ w.r.t.\ the 
  root splitting \roots and ground mapping $\tau$ as follows:
  \begin{itemize}
    \item 
      replace each $t \in \roots$ with $\tau(t)$, and, 
    \item
      for each $a \in \range{\tau}$, replace the sub-query $\Q_a = 
      \sq{\Q_{fr}, a}$ with $\con{\Q_a, a}$.
  \end{itemize}
  We define the set \groundings{\Q} of \emph{ground queries for} \Q 
  w.r.t.\ \KB as follows: 
  $$
  \begin{array}{l l}
    \groundings{\Q} := \{\Q' \mid 
    & \mbox{there exists some }(\Q_{fr}, \roots) \in \frK{\Q} 
            \mbox{ with } \roots \neq \emptyset \\
    & \mbox{and some ground mapping $\tau$ w.r.t.\ \AB and \roots} \\
    & \mbox{such that } \Q' = \grounding{\Q_{fr}, \roots, \tau}\}
  \end{array}
  $$
  We define the set of \trees{\Q} of \emph{tree queries for} \Q as 
  follows:
  $$
  \begin{array}{l l}
    \trees{\Q} := \{\Q' \mid 
    & \mbox{there exists some }(\Q_{fr}, \emptyset) \in \frK{\Q} 
      \mbox{ and }\\
    &  v \in \vars{\Q_{fr}} \mbox{ such that } \Q' = (\con{\Q_{fr}, 
       v})(v)\}\\
  \end{array}
  $$
  \mathdefend
\end{definition}

Going back to our running example, we have already seen that $(\Q_{fr}, 
\{ux, x\})$ belongs to the set \frK{\Q} for 
$$ 
\Q_{fr} = \{r(u, ux), r(ux, x), r(x, y), t(y, y'), t(y', y), s(z, y), 
r(y, z)\}. 
$$ 
There are also several other queries in the set \frK{\Q}, e.g., $(\Q, 
\{u, x, y, z\})$, where \Q is the original query and the root splitting 
\roots is such that $\roots = \terms{\Q}$, i.e., all terms are in the 
root choice for \Q. In order to build the set $\groundings{\Q}$, we now 
build all possible ground mappings $\tau$ for the set \indA of 
individual names in our ABox and the root splittings for the queries in 
\frK{\Q}. The tuple $(\Q_{fr}, \{ux, x\}) \in \frK{\Q}$ contributes two 
ground queries for the set \groundings{\Q}: 
$$ 
{\setlength{\arraycolsep}{0em} 
\begin{array}{l} 
  \grounding{\Q_{fr}, \{ux, x\}, \{ux \mapsto a, x \mapsto b\}} = \\ 
  \{r(a, b), (\exists \inv{r}.\top)(a), (\exists r.((\exists (r \sqcap 
  \inv{s}).\top) \sqcap (\exists (t \sqcap \inv{t}).\top)))(b)\}, 
\end{array} 
} 
$$ 
where $\exists \inv{r}.\top$ is the query concept for the (tree-shaped) 
sub-query $\sq{\Q_{fr}, ux}$ and $\exists r.((\exists (r \sqcap 
\inv{s}).\top) \sqcap (\exists (t \sqcap \inv{t}).\top)$ is the query 
concept for $\sq{\Q_{fr}, x}$ and 
$$ 
{\setlength{\arraycolsep}{0.1em} 
\begin{array}{l} 
  \grounding{\Q_{fr}, \{ux, x\}, \{ux \mapsto b, x \mapsto a\}} = \\ 
  \{r(b, a), (\exists \inv{r}.\top)(b), (\exists r.((\exists (r \sqcap 
  \inv{s}).\top) \sqcap (\exists (t \sqcap \inv{t}).\top)))(a)\}. 
\end{array} 
} 
$$ 
The tuple $(\Q, \{u, x, y, z\}) \in \frK{\Q}$, however, does not 
contribute a ground query since, for a ground mapping, we require that 
$\tau(t) = \tau(t')$ iff $t \sapprox t'$ and there are only two 
individual names in \indA compared to four terms \Q that need a distinct 
value. Intuitively, this is not a restriction, since in the first 
rewriting step (collapsing) we produce all those queries in which the 
terms of \Q have been identified with each other in all possible ways. 
In our example, $\KB \models \Q$ and $\KB \models \Q_1 \vee \cdots \vee 
\Q_\ell$, where $\Q_1 \vee \cdots \vee \Q_\ell$ are the queries from 
\trees{\Q} and \groundings{\Q} since each model \I of \KB satisfies 
$\Q_i = \grounding{\Q_{fr}, \{ux, x\}, \{ux \mapsto a, x \mapsto b\}}$.

\subsection{Query Matches}
\label{sect:querymatches}

Even if a query is true in a canonical model, it does not necessarily 
mean that the query is tree- or forest-shaped. However, a match $\pi$ 
for a canonical interpretation can guide the process of rewriting a 
query. Similarly to the definition of tree- or forest-shaped queries, we 
define the shape of matches for a query. In particular, we introduce 
three different kinds of matches: \emph{split matches, forest matches}, 
and \emph{tree matches} such that every tree match is a forest match, 
and every forest match is a split match. The correspondence to the query 
shapes is as  follows: given a split match $\pi$, the set of all root 
nodes $(a, \varepsilon)$ in the range of the match define a root 
splitting for the query, if $\pi$ is additionally a forest match, the 
query is forest-shaped w.r.t.\ the root splitting induced by $\pi$, and 
if $\pi$ is additionally a tree match, then the whole query can be 
mapped to a single tree (i.e., the query is tree-shaped or forest-shaped 
w.r.t.\ an empty root splitting). Given an arbitrary query match into a 
canonical model, we can first obtain a split match and then a tree or 
forest match, by using the structure of the canonical model for guiding 
the application of the rewriting steps. 

\begin{definition}\label{def:matches}
  Let \KB be a \SHIQ knowledge base, \Q a query, \I = \inter a canonical 
  model of \KB, and $\pi \colon \terms{\Q} \to \dom$ an evaluation such 
  that $\I \models^\pi \Q$. We call $\pi$ a \emph{split match} if, for 
  all $r(t, t') \din \Q$, one of the following holds:
  \begin{enumerate}
    \item
      $\pi(t) = (a, \varepsilon)$ and $\pi(t') = (b, \varepsilon)$ for 
      some $a, b \in \indA$; or
    \item
      $\pi(t) = (a, w)$ and $\pi(t') = (a, w')$ for some $a \in \indA$ 
      and $w, w' \in \Nbbm^*$.
  \end{enumerate}
  We call $\pi$ a \emph{forest match} if, additionally, for each term 
  $t_r \in \terms{\Q}$ with $\pi(t_r) = (a, \varepsilon)$ and $a \in 
  \indA$, there is a total and bijective mapping $f$ from $\{(a, w) \mid 
  (a, w) \in \range{\pi}\}$ to a tree $T$ such that $r(t, t') \din 
  \sq{\Q, t_r}$ implies that $f(\pi(t))$ is a neighbor of $f(\pi(t'))$. 
%
  We call $\pi$ a \emph{tree match} if, additionally, there is an $a \in 
  \indA$ such that each element in $\range{\pi}$ is of the form $(a, 
  w)$.
  
  A split match $\pi$ for a canonical interpretation induces a (possibly 
  empty) root splitting \roots such that $t \in \roots$ iff $\pi(t) = 
  (a, \varepsilon)$ for some $a \in \indA$. We call \roots the 
  \emph{root splitting induced by $\pi$}. 
\end{definition}

For two elements $(a, w)$ and $(a, w')$ in a canonical model, the path 
from $(a, w)$ to $(a, w')$ is the sequence $(a, w_1), \ldots, (a, w_n)$ 
where $w = w_1, w' = w_n$, and, for $1 \leq i < n, w_{i+1}$ is a 
successor of $w_i$. The length of the path is $n$. Please note that, for 
a forest match, we do not require that $w$ is a neighbor of $w'$ or vice 
versa. This still allows to map role atoms to paths in the canonical 
model of length greater than two, but such paths must be between 
ancestors and not between elements in different branches of the tree. 
The mapping $f$ to a tree also makes sure that if \roots is the induced 
root splitting, then each sub-query \sq{\Q, t} for $t \in \roots$ is 
tree-shaped. For a tree match, the root splitting is either empty or $t 
\sapprox t'$ for each $t, t' \in \roots$, i.e., there is a single root 
modulo \sapprox, and the whole query is tree-shaped.

\subsection{Correctness of the Query Rewriting}
\label{sect:correctnessrewriting}

The following lemmas state the correctness of the rewriting step by step 
for each of the rewriting stages. Full proofs are given in the 
appendix. As motivated in the previous section, we can use a given 
canonical model to guide the rewriting process such that we obtain a 
forest-shaped query that also has a match into the model. 

\begin{lemma}\label{lem:collapsing}
  Let \I be a model for \KB. 
  \begin{enumerate}
    \item\label{it:querycollapsing} 
      If $\I \models \Q$, then there is a collapsing $\Q_{co}$ of \Q 
      such that $\I \models^{\pi_{co}} \Q_{co}$ for $\pi_{co}$ an 
      injection modulo \sapprox. 
    \item\label{it:collapsingquery} 
      If $\I \models^{\pi_{co}} \Q_{co}$ for a collapsing $\Q_{co}$ of 
      \Q, then $\I \models \Q$.
  \end{enumerate}
\end{lemma}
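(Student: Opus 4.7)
The plan is to handle the two directions separately; both should be straightforward given the definition of a collapsing as ``the original query plus zero or more equality atoms.''

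For part~\ref{it:querycollapsing}, I start from the assumption $\I \models \Q$, which gives a match $\pi \colon \terms{\Q} \to \dom$ with $\I \models^\pi \Q$. The only reason $\pi$ might fail to be injective modulo $\sapprox$ is that two terms $t, t' \in \terms{\Q}$ with $t \nsapprox t'$ can nonetheless satisfy $\pi(t) = \pi(t')$. I would repair this by defining
\[
  \Q_{co} := \Q \cup \{\, t \approx t' \mid t, t' \in \terms{\Q} \text{ and } \pi(t) = \pi(t') \,\}
\]
and taking $\pi_{co} := \pi$. By construction $\Q_{co}$ is a collapsing of $\Q$. Let $\sapprox_{co}$ denote the equivalence relation induced by the equality atoms of $\Q_{co}$. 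Then $\pi_{co}$ is an evaluation for $\Q_{co}$ because (i) $\pi_{co}(a) = \Int{a}$ for all $a \in \inds{\Q}$ (inherited from $\pi$), and (ii) every added atom $t \approx t'$ is trivially satisfied by $\pi_{co}$, so $\pi_{co}(t) = \pi_{co}(t')$ holds whenever $t \sapprox_{co} t'$. All original atoms of $\Q$ remain satisfied because $\pi_{co} = \pi$ and the interpretations of the relational symbols are unchanged. Finally, $\pi_{co}$ is injective modulo $\sapprox_{co}$ by the very definition of the added equalities: whenever $\pi_{co}(t) = \pi_{co}(t')$ we added $t \approx t'$ to $\Q_{co}$, so $t \sapprox_{co} t'$.

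For part~\ref{it:collapsingquery}, observe that $\Q \subseteq \Q_{co}$, so any evaluation that satisfies every atom of $\Q_{co}$ automatically satisfies every atom of $\Q$. The only subtle point is whether $\pi_{co}$ is an evaluation for $\Q$, i.e.\ whether $\pi_{co}(t) = \pi_{co}(t')$ for every pair $t \sapprox t'$ in $\Q$; but the equivalence relation $\sapprox$ generated by the equality atoms of $\Q$ is contained in the one generated by the equality atoms of $\Q_{co}$ (since $\Q \subseteq \Q_{co}$), so this is immediate. Hence $\I \models^{\pi_{co}} \Q$, and in particular $\I \models \Q$.

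There is no real obstacle here; the only thing one has to be careful about is the direction of refinement of $\sapprox$ when atoms are added (adding equalities enlarges the equivalence relation, which is precisely what makes both the satisfaction condition transfer downward to $\Q$ in part~\ref{it:collapsingquery} and the injectivity-modulo-$\sapprox_{co}$ work in part~\ref{it:querycollapsing}). The rest is bookkeeping about which atoms live in which query.
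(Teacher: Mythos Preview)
Your proposal is correct and follows essentially the same approach as the paper: for part~\ref{it:querycollapsing} you add the equality atoms $t \approx t'$ exactly for those pairs with $\pi(t) = \pi(t')$ and reuse $\pi$ as $\pi_{co}$, and for part~\ref{it:collapsingquery} you use $\Q \subseteq \Q_{co}$. The paper's proof is terser, but the content is identical; your extra care in checking that $\pi_{co}$ remains an evaluation for $\Q$ (via the containment of the induced equivalence relations) is a detail the paper leaves implicit.
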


Given a model \I that satisfies \Q, we can simply add equality atoms for 
all pairs of terms that are mapped to the same element in \I. It is not 
hard to see that this results in a mapping that is injective modulo 
\sapprox. For the second part, it is easy to see that a model that 
satisfies a collapsing also satisfies the original query.

\begin{lemma}\label{lem:splitrewriting}
  Let \I be a model for \KB. 
  \begin{enumerate}
    \item\label{it:one} 
      If \I is canonical and $\I \models^\pi \Q$, then there is a pair 
      $(\Q_{sr}, \roots) \in \srK{\Q}$ and a split match $\pi_{sr}$ such 
      that $\I \models^{\pi_{sr}} \Q_{sr}$, \roots is the induced root 
      splitting of $\pi_{sr}$, and $\pi_{sr}$ is an injection modulo 
      \sapprox.
    \item\label{it:two}
      If $(\Q_{sr}, \roots) \in \srK{\Q}$ and $\I \models^{\pi_{sr}} 
      \Q_{sr}$ for some match $\pi_{sr}$, then $\I \models \Q$.
  \end{enumerate}
\end{lemma}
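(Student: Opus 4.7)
The plan is to prove the two directions separately, using Lemma~\ref{lem:collapsing} as preprocessing. Direction~\ref{it:two} is essentially bookkeeping; direction~\ref{it:one} is the technical heart. Throughout I fix an underlying forest base $\Jmc$ for the canonical model $\I$.

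For direction~\ref{it:one}, apply Lemma~\ref{lem:collapsing}(\ref{it:querycollapsing}) to $\I \models^\pi \Q$ to obtain a collapsing $\Q_{co} \in \co{\Q}$ together with a match $\pi_{co}$ that is injective modulo $\sapprox$. Every element of $\dom$ has the form $(a, w) \in \indA \times \Nbbm^*$. I then inspect each atom $r(t, t') \din \Q_{co}$: if $(\pi_{co}(t), \pi_{co}(t'))$ is of the form $((a, w), (a, w'))$ (same tree) or $((a, \varepsilon), (b, \varepsilon))$ (both roots), the atom already fits one of the split-match slots of Definition~\ref{def:matches} and I leave it alone. Otherwise clause T2 of Definition~\ref{def:tree} rules out $(\pi_{co}(t), \pi_{co}(t')) \in \IntJ{r}$, so by the canonicity equation the pair lies in $(\IntJ{s})^+$ for some $s \in \transR$ with $s \sssR r$, and any witnessing $\IntJ{s}$-path must (again by T2) cross the root of every tree it enters. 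If exactly one endpoint is a root (say $\pi_{co}(t) = (a, \varepsilon)$, $\pi_{co}(t') = (b, w')$ with $b \neq a$ and $w' \neq \varepsilon$), apply clause~2 of Definition~\ref{def:rewriting} with the intermediate at $(b, \varepsilon)$; if both endpoints are non-roots of distinct trees, apply clause~3 with intermediates at the two roots. The sub-paths between consecutive waypoints all lie in $(\IntJ{s})^+ \subseteq \Int{s}$, so the new $s$-atoms are satisfied by the extended map.

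The delicate point is choosing the intermediate variables coherently so that the extended map $\pi_{sr}$ remains injective modulo $\sapprox$. My policy is: for each root $(a, \varepsilon)$ through which some split must pass, reuse an existing $\sapprox$-class representative of $\Q_{co}$ that $\pi_{co}$ already sends to $(a, \varepsilon)$ if one exists, and otherwise introduce a single fresh variable $v_a$ used in \emph{every} split that crosses $(a, \varepsilon)$, setting $\pi_{sr}(v_a) := (a, \varepsilon)$. Under this policy each root is the image of exactly one $\sapprox$-class of $\Q_{sr}$, so $\pi_{sr}$ stays injective modulo $\sapprox$. Setting $\roots := \{t \in \terms{\Q_{sr}} \mid \pi_{sr}(t) \in \indA \times \{\varepsilon\}\}$ yields the induced root set: after splitting, every role atom of $\Q_{sr}$ is either confined to a single tree or connects two root terms, and injectivity modulo $\sapprox$ forces non-equivalent root terms to distinct roots, so $\reach{t_i}$ and $\reach{t_j}$ for $t_i \nsapprox t_j \in \roots$ stay in different trees and are therefore disjoint. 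Hence $\roots$ is a genuine root splitting and $(\Q_{sr}, \roots) \in \srK{\Q}$.

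For direction~\ref{it:two}, let $\Q_{co} \in \co{\Q}$ be the collapsing witnessing $(\Q_{sr}, \roots) \in \srK{\Q}$. Each atom of $\Q_{co}$ was either preserved verbatim in $\Q_{sr}$ or replaced by a chain of $s$-atoms with $s \sssR r$, $s \in \transR$; in the replaced case, transitivity of $\Int{s}$ (which holds in every model of $\KB$ when $s \in \transR$) combined with $\Int{s} \subseteq \Int{r}$ gives $(\pi_{sr}(t), \pi_{sr}(t')) \in \Int{r}$. Thus the restriction of $\pi_{sr}$ to $\terms{\Q_{co}}$ is a match for $\Q_{co}$, and Lemma~\ref{lem:collapsing}(\ref{it:collapsingquery}) delivers $\I \models \Q$. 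The main obstacle is the coordination in direction~\ref{it:one}: simultaneously preserving the split-match shape, injectivity modulo $\sapprox$, and the root-splitting property for the induced set. The variable-reuse policy above, together with the fact that the split surgery removes every inter-tree short-cut, is exactly what makes the reach-sets land in different trees and so secures the root-splitting condition.
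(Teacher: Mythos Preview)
Your proof is correct and follows essentially the same approach as the paper's own proof: direction~\ref{it:two} via transitivity of the replacement role plus Lemma~\ref{lem:collapsing}(\ref{it:collapsingquery}), and direction~\ref{it:one} by first collapsing, then splitting each cross-tree atom through the intervening root(s) using a transitive sub-role supplied by canonicity. You are in fact more careful than the paper on two points it glosses over: the explicit single-fresh-variable-per-root policy (which is needed to keep $\pi_{sr}$ injective modulo~$\sapprox$ when several atoms cross the same root) and the verification that the induced set $\roots$ really satisfies the disjoint-reach condition of a root splitting.
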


For the first part of the lemma, we proceed exactly as illustrated in 
the example section and use the canonical model \I and the match $\pi$ 
to guide the rewriting steps. We first build a collapsing $\Q_{co} \in 
\co{\Q}$ as described in the proof of Lemma~\ref{lem:collapsing} such 
that $\I \models^{\pi_{co}} \Q_{co}$ for $\pi_{co}$ an injection modulo 
\sapprox. Since \I is canonical, paths between different trees can only 
occur due to non-simple roles, and thus we can replace each role atom 
that uses such a short-cut with two or three role atoms such that these 
roots are explicitly included in the query (cf.\ the query and match in 
Figure~\ref{fig:querymatch} and the obtained split rewriting and with a 
split match in Figure~\ref{fig:splitmatch}). The second part of the 
lemma follows immediately from the fact that we use only transitive 
sub-roles in the replacement.

\begin{lemma}\label{lem:looprewriting}
  Let \I be a model of \KB. 
  \begin{enumerate}
    \item
      If \I is canonical and $\I \models \Q$, then there is a pair 
      $(\Q_{\ell r}, \roots) \in \lrK{\Q}$ and a mapping $\pi_{\ell r}$ 
      such that $\I \models^{\pi_{\ell r}} \Q_{\ell r}$, $\pi_{\ell r}$ 
      is an injection modulo \sapprox, \roots is the root splitting 
      induced by $\pi_{\ell r}$ and, for each $r(t, t) \din \Q_{\ell 
      r}$, $t \in \roots$.
    \item 
      If $(\Q_{\ell r}, \roots) \in \lrK{\Q}$ and $\I 
      \models^{\pi_{\ell r}} \Q_{\ell r}$ for some match $\pi_{\ell r}$, 
      then $\I \models \Q$.
  \end{enumerate}
\end{lemma}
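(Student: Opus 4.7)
My plan is to reduce both parts to the split-rewriting case and to handle only the loop atoms that remain on non-root terms. For part~1, I would first apply Lemma~\ref{lem:splitrewriting}(\ref{it:one}) to \I and $\pi$ to obtain a pair $(\Q_{sr}, \roots) \in \srK{\Q}$ together with a split match $\pi_{sr}$ that is injective modulo \sapprox\ and whose induced root splitting is \roots. Any remaining loop atom $r(t,t) \din \Q_{sr}$ with $t \notin \roots$ then satisfies $\pi_{sr}(t) = (a,w)$ for some $w \neq \varepsilon$; by condition~\ref{it:minimal} of Definition~\ref{def:tree}, no single edge of any forest base \Jmc for \I can relate $(a,w)$ to itself, so $((a,w),(a,w)) \in \Int{r}$ must be realised in \I via the transitive closure of some $s \in \transR$ with $s \sssR r$.

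Hence there is a cycle $(a,w) = x_0, x_1, \ldots, x_n = (a,w)$ whose consecutive pairs all lie in $\IntJ{s}$; again by condition~\ref{it:minimal}, $x_1 = (a, w_1)$ for some neighbor $w_1$ of $w$. Because $s \in \transR$ we have $(\IntJ{s})^+ \subseteq \Int{s}$, so both $((a,w),(a,w_1)) \in \Int{s}$ (from the first edge) and $((a,w_1),(a,w)) \in \Int{s}$ (from the remainder $x_1, \ldots, x_n$ of the cycle). I would then replace $r(t,t)$ by $s(t,t')$ and $s(t',t)$, choosing $t'$ to be an existing term $t'' \in \terms{\Q_{sr}}$ with $\pi_{sr}(t'') = (a, w_1)$ whenever such a term exists, and otherwise introducing a fresh variable $t'$ and extending the match by $\pi_{\ell r}(t') := (a, w_1)$. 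Iterating this construction over all non-root loop atoms in $\Q_{sr}$ yields the required $(\Q_{\ell r}, \roots') \in \lrK{\Q}$ together with an extension $\pi_{\ell r}$ of $\pi_{sr}$. The reuse convention preserves injectivity modulo \sapprox; every remaining loop atom in $\Q_{\ell r}$ has its variable in $\roots'$ by construction; and $\roots'$ is precisely the root splitting induced by $\pi_{\ell r}$.

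Part~2 is the easy direction: given any match $\pi_{\ell r}$ for a loop rewriting $\Q_{\ell r}$ of some split rewriting $\Q_{sr}$, each replacement pair $s(t,t'), s(t',t)$ in $\Q_{\ell r}$ yields, by transitivity of $s$ together with $s \sssR r$, that $(\pi_{\ell r}(t), \pi_{\ell r}(t)) \in \Int{s} \subseteq \Int{r}$. Hence the restriction of $\pi_{\ell r}$ to $\vars{\Q_{sr}}$ is a match for $\Q_{sr}$, and Lemma~\ref{lem:splitrewriting}(\ref{it:two}) finishes the job. The main technical obstacle lies in part~1, in the careful bookkeeping around the new variable $t'$: the reuse-versus-fresh choice must be made consistently across all loops so that $\pi_{\ell r}$ stays injective modulo \sapprox, and the borderline case $w_1 = \varepsilon$, in which $t'$ maps to a root and must be added to $\roots'$, requires verifying that the $\reach{\cdot}$ sets remain pairwise disjoint; this follows because the only new connections from $t'$ are the two replacement atoms back to $t$, which sit inside the same sub-tree already covered by \roots.
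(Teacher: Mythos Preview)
Your plan is essentially the paper's own argument: reduce to the split-rewriting lemma, use the forest base to find a transitive sub-role $s$ realising each non-root loop via a neighbour, replace the loop by the two $s$-atoms (reusing an existing term when possible, otherwise a fresh variable), and extend the match accordingly. The only cosmetic difference is that the paper, instead of appealing to the transitive closure of the ``remainder of the cycle'', observes directly that in a tree any closed walk $d_1,\ldots,d_n$ with $d_1=d_n$ and $d_i\neq d_1$ for $1<i<n$ must satisfy $d_2=d_{n-1}$, so both $(d_1,d_2)$ and $(d_2,d_1)$ already lie in $s^{\Jmc}$; your argument via $(s^{\Jmc})^+\subseteq s^{\I}$ gives the same conclusion.

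One genuine wrinkle in your write-up is the handling of the ``borderline case'' $w_1=\varepsilon$. You propose to enlarge \roots\ to some $\roots'$ containing the fresh variable $t'$, but by definition of $\lrK{\Q}$ the second component of a pair $(\Q_{\ell r},\roots)\in\lrK{\Q}$ is \emph{inherited unchanged} from a tuple $(\Q_{sr},\roots)\in\srK{\Q}$; a variable first introduced at the loop-rewriting stage cannot belong to it. So you cannot simply pass to $\roots'$. If $(a,\varepsilon)$ is already in $\range{\pi_{sr}}$ this is harmless (you reuse the existing root term, which lies in \roots), but when it is not, you need a different argument---e.g.\ going back and choosing the split-rewriting/root-splitting pair so that the relevant root is already present. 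The paper's proof, incidentally, does not address this corner case explicitly either; it simply keeps \roots\ and silently assumes that the chosen neighbour does not enlarge the induced root splitting.
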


The second part is again straightforward, given that we can only use 
transitive sub-roles in the loop rewriting. For the first part, we 
proceed again as described in the examples section and use the canonical 
model \I and the match $\pi$ to guide the rewriting process. We first 
build a split rewriting $\Q_{sr}$ and its root splitting \roots as 
described in the proof of Lemma~\ref{lem:splitrewriting} such that 
$(\Q_{sr}, \roots) \in \srK{\Q}$ and $\I \models^{\pi_{sr}} \Q_{sr}$ for 
a split match $\pi_{sr}$. Since \I is a canonical model, it has a forest 
base \Jmc. In a forest base, non-root nodes cannot be successors of 
themselves, so each such loop is a short-cut due to some transitive 
role. An element that is, say, $r$-related to itself has, therefore, a 
neighbor that is both an $r$- and $\inv{r}$-successor. Depending on 
whether this neighbor is already in the range of the match, we can 
either re-use an existing variable or introduce a new one, when making 
this path explicit (cf.\ the loop rewriting depicted in 
Figure~\ref{fig:looprewriting} obtained from the split rewriting shown 
in Figure~\ref{fig:splitmatch}). 

\begin{lemma}\label{lem:forestrewriting}
  Let \I be a model of \KB. 
  \begin{enumerate}
    \item
      If \I is canonical and $\I \models \Q$, then there is a pair 
      $(\Q_{fr}, \roots) \in \frK{\Q}$ such that $\I \models^{\pi_{fr}} 
      \Q_{fr}$ for a forest match $\pi_{fr}$, \roots is the induced root 
      splitting of $\pi_{fr}$, and $\pi_{fr}$ is an injection modulo 
      \sapprox.
    \item 
      If $(\Q_{fr}, \roots) \in \frK{\Q}$ and $\I \models^{\pi_{fr}} 
      \Q_{fr}$ for some match $\pi_{fr}$, then $\I \models \Q$.
  \end{enumerate}
\end{lemma}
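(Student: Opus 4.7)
The plan is to follow the same two-direction pattern as Lemmas~\ref{lem:collapsing}--\ref{lem:looprewriting}: part~(2) will be obtained by undoing the forest rewriting step and invoking Lemma~\ref{lem:looprewriting}~(2), while part~(1) will build on Lemma~\ref{lem:looprewriting}~(1), using the forest base of the given canonical model $\I$ to dictate both which atoms need to be replaced and which terms appear in the replacement chains.

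For part~(2), suppose $(\Q_{fr},\roots) \in \frK{\Q}$ with $\I \models^{\pi_{fr}} \Q_{fr}$. By definition of $\frK{\Q}$ there is a loop rewriting $(\Q_{\ell r},\roots) \in \lrK{\Q}$ of which $\Q_{fr}$ is a forest rewriting. Each replacement $r(t,t') \mapsto s(t_1,t_2),\ldots,s(t_\ell,t_{\ell+1})$ uses a role $s \in \transR$ with $s \sssR r$, so transitivity of $s$ in $\I$ gives $(\pi_{fr}(t),\pi_{fr}(t')) \in \Int{s} \subseteq \Int{r}$. Hence the restriction of $\pi_{fr}$ to $\terms{\Q_{\ell r}}$ is a match for $\Q_{\ell r}$, and Lemma~\ref{lem:looprewriting}~(2) yields $\I \models \Q$.

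For part~(1), I would first apply Lemma~\ref{lem:looprewriting}~(1) to obtain $(\Q_{\ell r},\roots) \in \lrK{\Q}$ together with an injective-modulo-$\sapprox$ match $\pi_{\ell r}$ whose induced root splitting is $\roots$. Let $\Jmc$ be a forest base for $\I$. Since $\pi_{\ell r}$ is a split match, the image of each sub-query $\sq{\Q_{\ell r},t_r}$ lies in a single tree of $\Jmc$, rooted at some $(a,\varepsilon)$. For every role atom $r(t,t')$ in $\sq{\Q_{\ell r},t_r}$ (or in $\Q_{\ell r}$ when $\roots = \emptyset$) whose image is not a neighbor pair in $\Jmc$, $r$ must be non-simple, so I fix $s \in \transR$ with $s \sssR r$ together with the unique $s$-path $\pi_{\ell r}(t) = u_0,u_1,\ldots,u_n = \pi_{\ell r}(t')$ in $\Jmc$. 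Let $u_{i_1},\ldots,u_{i_k}$ be those interior points (with $0 < i_j < n$) that lie in $\range{\pi_{\ell r}}$, pick representatives $v_{i_j} \in \terms{\Q_{\ell r}}$ with $\pi_{\ell r}(v_{i_j}) = u_{i_j}$, and replace $r(t,t')$ by the chain $s(t,v_{i_1}),s(v_{i_1},v_{i_2}),\ldots,s(v_{i_k},t')$. Transitivity of $s$ ensures each new atom is satisfied in $\I$ under $\pi_{fr} := \pi_{\ell r}$, and no fresh variables are introduced.

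The main obstacle will be verifying all target conditions simultaneously: that $\Q_{fr}$ lies in $\frK{\Q}$ (chain length $\ell \leq |\vars{\Q_{\ell r}}|$ and forest-shapedness w.r.t.~$\roots$) and that $\pi_{fr}$ is a forest match, injective modulo $\sapprox$, with induced root splitting $\roots$. The length bound holds because $k+1 \leq |\range{\pi_{\ell r}}| \leq |\vars{\Q_{\ell r}}|$; the crucial point is that transitivity of $s$ lets every forest-base segment lying strictly between two consecutive in-range points collapse into a single $s$-atom, so the fresh-variable pool $V$ is never touched and injectivity modulo $\sapprox$ is inherited directly from $\pi_{\ell r}$. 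For forest-shapedness, I would exhibit, for each $t_r \in \roots$, a tree mapping $f_{t_r}$ sending each $t \in \terms{\sq{\Q_{fr},t_r}}$ to the position of $\pi_{fr}(t)$ in the $\Jmc$-subtree rooted at $\pi_{fr}(t_r)$, after compressing away every forest-base node not in $\range{\pi_{fr}}$. The replacement step guarantees that every surviving role atom in $\sq{\Q_{fr},t_r}$ connects a pair of consecutive in-range points along the forest-base path, which become neighbors in the codomain of $f_{t_r}$; this same family of mappings witnesses that $\pi_{fr}$ is a forest match, and $\roots$ is still the root splitting induced by $\pi_{fr}$.
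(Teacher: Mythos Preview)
Your treatment of part~(2) is fine and matches the paper's argument.

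For part~(1), however, there is a genuine gap: by insisting that the fresh-variable pool $V$ is ``never touched'' and that $\pi_{fr} := \pi_{\ell r}$, you lose exactly the information the forest rewriting needs. The problem is your claim that ``every surviving role atom \ldots\ connects a pair of consecutive in-range points along the forest-base path, which become neighbors in the codomain of $f_{t_r}$.'' Consecutive \emph{along one path} does not imply neighboring in the compressed tree. Take the situation the paper itself uses as illustration (cf.\ Figures~\ref{fig:treeification1} and~\ref{fig:treeification2}): the match hits $(a,\varepsilon)$, $(a,111)$, $(a,12)$, and the query has role atoms between all three pairs. The forest-base path from $(a,111)$ to $(a,12)$ is $(a,111),(a,11),(a,1),(a,12)$; none of the interior points $(a,11),(a,1)$ are in $\range{\pi_{\ell r}}$, so your procedure leaves the atom between $t_{111}$ and $t_{12}$ as a single $s$-atom. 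But in your compressed tree, $(a,111)$ and $(a,12)$ are both children of $(a,\varepsilon)$ and hence siblings, not neighbors. The resulting query is still a $3$-cycle, so $\Q_{fr}$ is not forest-shaped w.r.t.\ $\roots$, $(\Q_{fr},\roots) \notin \frK{\Q}$, and $\pi_{fr}$ is not a forest match.

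What is missing is precisely the step the paper performs: one must close $\range{\pi_{\ell r}}$ under longest common prefixes, introduce a fresh variable for each LCP not already in range, and route the replacement chains through these new variables as well. Only then does every surviving atom connect genuine neighbors in the compressed tree, and the binary-tree argument (at most $n$ inner nodes for $n$ leaves) gives the bound $\card{V} \leq \card{\vars{\Q_{\ell r}}}$ required by Definition~\ref{def:rewriting}. Your argument can be repaired by inserting this LCP-closure step; the rest of your outline then goes through essentially as in the paper.
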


The main challenge is again the proof of (1) and we just give a short 
idea of it here. At this point, we know from 
Lemma~\ref{lem:looprewriting} that we can use a query $\Q_{\ell r}$ for 
which there is a root splitting \roots and a split match $\pi_{\ell r}$. 
Since $\pi_{\ell r}$ is a split match, the match for each such sub-query 
is restricted to a tree and thus we can transform each sub-query of 
$\Q_{\ell r}$ induced by a term $t$ in the root choice separately. The 
following example is meant to illustrate why the given bound of 
$\card{\vars{\Q}}$ on the number of new variables and role atoms that 
can be introduced in a forest rewriting suffices. 
Figure~\ref{fig:treeification1} depicts the representation of a tree 
from a canonical model, where we use only the second part of the names 
for the elements, e.g., we use just $\varepsilon$ instead of $(a, 
\varepsilon)$. For simplicity, we also do not indicate the concepts and 
roles that label the nodes and edges, respectively. We use black color 
to indicate the nodes and edges that are used in the match for a query 
and dashed lines for short-cuts due to transitive roles. In the example, 
the grey edges are also those that belong to the forest base and the 
query match uses only short-cuts.

\begin{figure}[htb]
  \centering
  \input{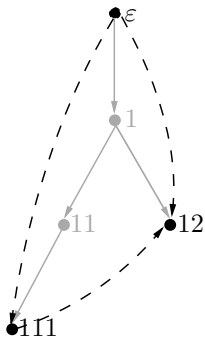}
  \caption{A part of a representation of a canonical model, where the 
  black nodes and edges are used in a match for a query and dashed edges 
  indicate short-cuts due to transitive roles.}
  \label{fig:treeification1}
\end{figure}

The forest rewriting aims at making the short-cuts more explicit by 
replacing them with as few edges as necessary to obtain a tree match. In 
order to do this, we need to include the ``common ancestors'' in the 
forest base between each two nodes used in the match. For $w, w' \in 
\Nbbm^*$, we therefore define the \emph{longest common prefix} (LCP) of 
$w$ and $w'$ as the longest $\hat{w} \in \Nbbm^*$ such that $\hat{w}$ is 
a prefix of both $w$ and $w'$. For a forest rewriting, we now determine 
the LCPs of any two nodes in the range of the match and add a variable 
for those LCPs that are not yet in the range of the match to the set $V$ 
of new variables used in the forest rewriting. In the example from 
Figure~\ref{fig:treeification1} the set $V$ contains a single variable 
$v_1$ for the node $1$. 

We now explicate the short-cuts as follows: for any edge used in the 
match, e.g., the edge from $\varepsilon$ to $111$ in the example, we 
define its \emph{path} as the sequence of elements on the path in the 
forest base, e.g., the path for the edge from $\varepsilon$ to $111$ is 
$\varepsilon, 1, 11, 111$. The \emph{relevant path} is obtained by 
dropping all elements from the path that are not in the range of the 
mapping or correspond to a variable in the set $V$, resulting in a 
relevant path of $\varepsilon, 1, 111$ for the example. We now replace 
the role atom that was matched to the edge from $\varepsilon$ to $111$ 
with two role atoms such that the match uses the edge from $\varepsilon$ 
to $1$ and from $1$ to $111$. An appropriate transitive sub-role exists 
since otherwise there could not be a short-cut. Similar arguments can be 
used to replace the role atom mapped to the edge from $111$ to $12$ and 
for the one that is mapped to the edge from $\varepsilon$ to $12$, 
resulting in a match as represented by Figure~\ref{fig:treeification2}. 
The given restriction on the cardinality of the set $V$ is no limitation 
since the number of LCPs in the set $V$ is maximal if there is no pair 
of nodes such that one is an ancestor of the other. We can see these 
nodes as $n$ leaf nodes of a tree that is at least binarily branching. 
Since such a tree can have at most $n$ inner nodes, we need at most $n$ 
new variables for a query in $n$ variables. 

\begin{figure}[htb]
  \centering
  \input{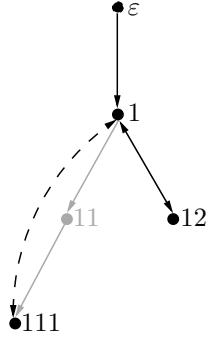}
  \caption{The match for a forest rewriting obtained from the example 
  given in Figure~\ref{fig:treeification1}.}
  \label{fig:treeification2}
\end{figure}

For the bound on the number of role atoms that can be used in the 
replacement of a single role atom, consider, for example, the cyclic 
query 
$$
\Q = \{r(x_1, x_2), r(x_2, x_3), r(x_3, x_4), t(x_1, x_4)\}, 
$$
for the knowledge base \KBDef with $\TB = \emptyset, \RB = \{r 
\sqsubseteq t\}$ with $t \in \transR$ and $\AB = \{(\exists r.(\exists 
r.(\exists r.\top)))(a)\}$. It is not hard to check that $\KB \models 
\Q$. Similarly to our running example from the previous section, there 
is also a single rewriting that is true in each canonical model of the 
KB, which is obtained by building only a forest rewriting and doing 
nothing in the other rewriting steps, except for choosing the empty set 
as root splitting in the split rewriting step. In the forest rewriting, 
we can explicate the short-cut used in the mapping for $t(x_1, x_4)$ by 
replacing $t(x_1, x_4)$ with $t(x_1, x_2), t(x_2, x_3), t(x_3, x_4)$. 

By using Lemmas~\ref{lem:collapsing} to \ref{lem:forestrewriting}, we 
get the following theorem, which shows that we can use the ground 
queries in \groundings{\Q} and the queries in \trees{\Q} in order to 
check whether \KB entails \Q, which is a well understood problem. 

\begin{theorem}\label{thm:union} 
  Let \KB be a \SHIQ knowledge base, \Q a Boolean conjunctive query, and 
  $\{\Q_1, \ldots, \Q_\ell\} = \trees{\Q} \cup \groundings{\Q}$. Then 
  $\KB \models \Q$ iff $\KB \models \Q_1 \vee \ldots \vee \Q_\ell$.
\end{theorem}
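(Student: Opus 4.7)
The plan is to prove both directions by chaining the correctness lemmas for the individual rewriting stages (Lemmas \ref{lem:collapsing}--\ref{lem:forestrewriting}) with the rolling-up lemma (Lemma \ref{lem:conceptsandqueries}) and Lemma \ref{lem:canonicalcountermodels} on canonical countermodels; the forward direction of the forest rewriting lemma already subsumes collapsing, splitting, and loop rewriting, so the work reduces to bridging between forest-shaped queries and the rolled-up ground/tree queries of $\trees{\Q} \cup \groundings{\Q}$. For the ``if'' direction, let $\I \models \KB$ and suppose $\I \models \Q_i$ for some $i$. If $\Q_i = (\con{\Q_{fr},v})(v) \in \trees{\Q}$, then $(\Q_{fr},\emptyset) \in \frK{\Q}$, $\Q_{fr}$ is tree-shaped, and Lemma \ref{lem:conceptsandqueries} directly yields $\I \models \Q_{fr}$. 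If $\Q_i = \grounding{\Q_{fr},\roots,\tau} \in \groundings{\Q}$, then for each $a \in \range{\tau}$ the atom $(\con{\sq{\Q_{fr},a},a})(a)$ in $\Q_i$ gives, via Lemma \ref{lem:conceptsandqueries}, a match of the tree-shaped sub-query $\sq{\Q_{fr},a}$ sending $a$ to $\Int{a}$; the remaining ground role atoms $r(\tau(t),\tau(t'))$ are satisfied outright, and since $\roots$ is a root splitting the sub-queries share only their roots, so these partial matches glue into a single match witnessing $\I \models \Q_{fr}$. In either case the backward parts of Lemmas \ref{lem:forestrewriting}, \ref{lem:looprewriting}, \ref{lem:splitrewriting}, and \ref{lem:collapsing} chain to yield $\I \models \Q$.

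For the ``only if'' direction, assume $\KB \models \Q$. By Lemma \ref{lem:canonicalcountermodels} it suffices to show that every canonical model $\I$ of $\KB$ satisfies some $\Q_i$. Given such an $\I$, Lemma \ref{lem:forestrewriting}(1) supplies $(\Q_{fr},\roots) \in \frK{\Q}$ and a forest match $\pi_{fr}$, injective modulo $\sapprox$, with $\roots$ its induced root splitting. If $\roots = \emptyset$, then $\inds{\Q}=\emptyset$ (since $\inds{\Q} \subseteq \roots$), $\Q_{fr}$ is tree-shaped, and applying Lemma \ref{lem:conceptsandqueries} at any $v \in \vars{\Q_{fr}}$ gives $\pi_{fr}(v) \in \Int{\con{\Q_{fr},v}}$, whence $\I \models (\con{\Q_{fr},v})(v) \in \trees{\Q}$. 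Otherwise define $\tau \colon \roots \to \indA$ by $\tau(t) := a$ where $\pi_{fr}(t) = (a,\varepsilon)$: well-definedness follows from the split-match condition, $\tau(t)=\tau(t')$ iff $t \sapprox t'$ follows from injectivity modulo $\sapprox$, and $\tau(a)=a$ for named individuals since $\pi_{fr}(a)=\Int{a}=(a,\varepsilon)$. Hence $\grounding{\Q_{fr},\roots,\tau} \in \groundings{\Q}$, and to see $\I \models \grounding{\Q_{fr},\roots,\tau}$ I observe that the ground role atoms $r(\tau(t),\tau(t'))$ are satisfied because $(\pi_{fr}(t),\pi_{fr}(t'))\in\Int{r}$, while for each $a\in\range{\tau}$ the forest-match property confines $\pi_{fr}$ on the terms of $\sq{\Q_{fr},a}$ to the tree at $(a,\varepsilon)$, so Lemma \ref{lem:conceptsandqueries} delivers $\Int{a} \in \Int{\con{\sq{\Q_{fr},a},a}}$.

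The main obstacle I expect is the bookkeeping for the non-empty-roots case: verifying all clauses of the ground-mapping definition for $\tau$ and, more subtly, that the rolled-up query concept at each root $a$ is witnessed precisely at $\Int{a}$ rather than at some arbitrary element of its extension. This hinges on the forest-match property, which forces each tree sub-query to be mapped entirely within the single tree rooted at its assigned ABox node and thus lets the rolling-up lemma apply with $t_r = a$.
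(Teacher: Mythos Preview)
Your proposal is correct and follows essentially the same approach as the paper's proof: both directions split on whether $\Q_i$ comes from $\trees{\Q}$ or $\groundings{\Q}$, invoke Lemma~\ref{lem:conceptsandqueries} to pass between tree-shaped (sub-)queries and their rolled-up concepts, use Lemma~\ref{lem:forestrewriting} as the bridge between $\Q$ and $\Q_{fr}$, and appeal to Lemma~\ref{lem:canonicalcountermodels} to restrict the ``only if'' direction to canonical models. Your explicit verification that $\tau$ is a ground mapping and your remark that the query concept must be witnessed at $\Int{a}$ (not merely somewhere) are points the paper handles more tersely; note also that in the ``if'' direction Lemma~\ref{lem:forestrewriting}(2) alone already yields $\I \models \Q$ from $\I \models \Q_{fr}$, so the chain through Lemmas~\ref{lem:looprewriting}--\ref{lem:collapsing} is redundant.
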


We now give upper bounds on the size and number of queries in \trees{\Q} 
and \groundings{\Q}. As before, we use $\card{S}$ to denote the 
\emph{cardinality} of a set $S$. The \emph{size} $|\KB|$ ($|\Q|$) of a 
knowledge base \KB (a query \Q) is simply the number of symbols needed 
to write it over the alphabet of constructors, concept names, and role 
names that occur in \KB (\Q), where numbers are encoded in binary. 
Obviously, the number of atoms in a query is bounded by its size, hence 
$\card{\Q} \leq |\Q|$ and, for simplicity, we use $n$ as the size 
and the cardinality of \Q in what follows. 

\begin{lemma}\label{lem:counting}
  Let \Q be a Boolean conjunctive query, \KBDef a \SHIQ knowledge base, 
  $|\Q| := n$ and $|\KB| := m$. Then there is a polynomial $p$ such that
  \begin{enumerate}
    \item\label{it:co} 
      $\card{\co{\Q}} \leq 2^{p(n)}$ and, for each $\Q' \in 
      \co{\Q}$, $|\Q'| \leq p(n)$, 
    \item\label{it:sr}
      $\card{\srK{\Q}} \leq 2^{p(n) \cdot \log p(m)}$, and, for 
      each $\Q' \in \srK{\Q}$, $|\Q'| \leq p(n)$, 
    \item\label{it:lr}
      $\card{\lrK{\Q}} \leq 2^{p(n) \cdot \log p(m)}$, and, for 
      each $\Q' \in \lrK{\Q}$, $|\Q'| \leq p(n)$, 
    \item\label{it:fr}
      $\card{\frK{\Q}} \leq 2^{p(n) \cdot \log p(m)}$, and, for 
      each $\Q' \in \frK{\Q}$, $|\Q'| \leq p(n)$, 
    \item\label{it:trees} 
      $\card{\trees{\Q}} \leq 2^{p(n) \cdot \log p(m)}$, and, for 
      each $\Q' \in \trees{\Q}$, $|\Q'| \leq p(n)$, and 
    \item\label{it:groundings} 
      $\card{\groundings{\Q}} \leq 2^{p(n) \cdot \log p(m)}$, and, 
      for each $\Q' \in \groundings{\Q}$, $|\Q'| \leq p(n)$.
  \end{enumerate}
\end{lemma}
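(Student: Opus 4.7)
The plan is to prove each of the six bounds by direct combinatorial counting in the stated order, so that each item can invoke the preceding ones. Throughout, let $n = |\Q|$ and $m = |\KB|$, and let $p$ denote a polynomial whose precise form may grow from one item to the next (so the symbol $p$ in the statement is not literally the same polynomial for every line). For item~\ref{it:co}, a collapsing of \Q is determined by the partition of $\terms{\Q}$ that its added equality atoms induce modulo \sapprox. The number of partitions of a set of size at most $n$ is bounded by the Bell number and hence by $n^n = 2^{n \log n}$, and each collapsing adds at most $n^2$ equality atoms, keeping the size polynomial in $n$.

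For items~\ref{it:sr} and \ref{it:lr}, every split or loop rewriting arises from a query in the preceding set by choosing, for each of at most $n$ role atoms, one of polynomially many options: either do nothing, or select a transitive sub-role (the number of roles in \RB is bounded by $m$) together with a bounded number of fresh or reused variable names. This yields at most $p(m)^n = 2^{n \log p(m)}$ rewriting choices per preceding query, plus at most $2^n$ choices of root splitting in the split step, and combining with the preceding bound gives the target $2^{p(n) \log p(m)}$. Each replacement adds at most three atoms and a constant number of new variables per original atom, so sizes remain polynomial in $n$.

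The main obstacle is item~\ref{it:fr}. A forest rewriting may replace each role atom by a sequence of at most $|\vars{\Q}| \leq n$ role atoms whose intermediate terms are drawn from $\vars{\Q} \cup V$ with $|V| \leq n$, giving at most $1 + p(m) \cdot (2n)^n$ options per atom and hence at most $2^{n \log p(m) + n^2 \log(2n)}$ forest rewritings per loop rewriting. The subtlety is that the $n^2 \log(2n)$ term must be absorbed into the product form $p(n) \log p(m)$ claimed by the lemma; this succeeds because we may assume $\log p(m) \geq 1$, so a polynomial-in-$n$ additive term can always be folded into a larger polynomial factor $p(n)$ multiplying $\log p(m)$. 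Each forest rewriting has at most $n^2$ new atoms compared to its parent, so sizes remain polynomial.

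For items~\ref{it:trees} and \ref{it:groundings}, a tree query comes from a forest rewriting with empty root splitting together with a choice of one variable $v \in \vars{\Q_{fr}}$ (at most $n$ choices), while a ground query comes from a forest rewriting with non-empty root splitting together with a ground mapping $\tau \colon \roots \to \indA$ (at most $|\indA|^{|\roots|} \leq m^n = 2^{n \log m}$ choices). Both factors are easily absorbed into $2^{p(n) \log p(m)}$. Rolling up a tree-shaped sub-query of size at most $n$ yields a concept atom of size polynomial in $n$ by Definition~\ref{def:rollingUp}, so every query in \trees{\Q} and in \groundings{\Q} has size polynomial in $n$, completing the proof.
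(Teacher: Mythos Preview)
Your proposal is correct and follows essentially the same combinatorial counting approach as the paper's proof: item by item, bound the number of local choices per atom (do-nothing vs.\ pick a transitive sub-role and fresh/reused variables), raise to the number of atoms, and multiply by the bound from the preceding stage. The only place you add something the paper omits is your explicit remark for item~\ref{it:fr} that the additive $n^2\log(2n)$ term can be folded into $p(n)\cdot\log p(m)$ by choosing $p$ so that $\log p(m)\ge 1$; the paper simply writes ``We can use similar arguments as above'' there, so your version is slightly more careful but not a different route.
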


As a consequence of the above lemma, there is a bound on the number of 
queries in \groundings{\Q} and \trees{\Q} and it is not hard to see that 
the two sets can be computed in time polynomial in $m$ and exponential 
in $n$. 

In the next section, we present an algorithm that decides entailment of 
unions of conjunctive queries, where each of the queries is either a 
ground query or consists of a single concept atom $C(x)$ for an 
existentially quantified variable $x$. By Theorem~\ref{thm:union} and 
Lemma~\ref{lem:counting}, such an algorithm is a decision procedure for 
arbitrary unions of conjunctive queries.

\subsection{Summary and Discussion}

In this section, we have presented the main technical foundations for 
answering (unions of) conjunctive queries. It is known that queries that 
contain non-simple roles in cycles among existentially quantified 
variables are difficult to handle. By applying the rewriting steps from 
Definition~\ref{def:rewriting}, we can rewrite such cyclic conjunctive 
queries into a set of acyclic and/or ground queries. Both types of 
queries are easier to handle and algorithms for both types exist. At 
this point, any reasoning algorithm for \SHIQR knowledge base 
consistency can be used for deciding query entailment. In order to 
obtain tight complexity results, we present in the following section a 
decision procedure that is based on an extension of the translation to 
looping tree automata given by \citeA{Tobi01a}. 

It is worth mentioning that, for queries with only simple roles, our 
algorithm behaves exactly as the existing rewriting algorithms (i.e., 
the rolling-up and tuple graph technique) since, in this case, only the 
collapsing step is applicable. The need for identifying variables was 
first pointed out in the work of \citeA{HSTT99a} and is also required 
(although not mentioned) for the algorithm proposed by \citeA{CaDL98a}. 

The new rewriting steps (split, loop, and forest rewriting) are only 
required for and applicable to non-simple roles and, when replacing a 
role atom, only transitive sub-roles of the replaced role can be used. 
Hence the number of resulting queries is in fact not determined by the 
size of the whole knowledge base, but by the number of transitive 
sub-roles for the non-simple roles in the query. Therefore, the number 
of resulting queries really depends on the number of transitive roles 
and the depth of the role hierarchy for the non-simple roles in the 
query, which can, usually, expected to be small.

\section{The Decision Procedure}
\label{sect:decisionprocedure}

We now devise a decision procedure for entailment of unions of Boolean 
conjunctive queries that uses, for each disjunct, the queries obtained 
in the rewriting process as defined in the previous section. Detailed 
proofs for the lemmas and theorems in this section can again be found in 
the appendix. For a knowledge base \KB and a union of Boolean 
conjunctive queries $\Q_1 \vee \ldots \vee \Q_\ell$, we show how we can 
use the queries in $\trees{\Q_i}$ and $\groundings{\Q_i}$ for $1 
\leq i \leq \ell$ in order to build a set of knowledge bases 
$\KB_1, \dots, \KB_n$ such that $\KB \models \Q_1 \vee \ldots \vee 
\Q_\ell$ iff all the $\KB_i$ are inconsistent. This gives rise to two 
decision procedures: a deterministic one in which we enumerate all 
$\KB_i$, and which we use to derive a tight upper bound for the combined 
complexity; and a non-deterministic one in which we guess a $\KB_i$, and 
which yields a tight upper bound for the data complexity. Recall that, 
for combined complexity, the knowledge base \KB and the queries $\Q_i$ 
both count as input, whereas for the data complexity only the ABox \AB 
counts as an input, and all other parts are assumed to be fixed.

\subsection{A Deterministic Decision Procedure for Query Entailment in 
\SHIQ}\label{sect:deterministic}

We first define the deterministic version of the decision procedure and 
give an upper bound for its combined complexity. The given algorithm 
takes as input a union of connected conjunctive queries and works under 
the unique name assumption (UNA). We show afterwards how it can be 
extended to an algorithm that does not make the UNA and that takes 
arbitrary UCQs as input, and that the complexity results carry over. 

We construct a set of knowledge bases that extend the original knowledge 
base \KB both w.r.t.\ the TBox and ABox. The extended knowledge bases 
are such that a given KB \KB entails a query \Q iff all the extended KBs 
are inconsistent. We handle the concepts obtained from the tree-shaped 
queries differently to the ground queries: the axioms we add to the TBox 
prevent matches for the tree-shaped queries, whereas the extended ABoxes 
contain assertions that prevent matches for the ground queries. 

\begin{definition}
  Let \KBDef be a \SHIQ knowledge base and $\Q = \Q_1 \vee \ldots \vee 
  \Q_\ell$ a union of Boolean conjunctive queries. We set  
  \begin{enumerate}
    \item
      $T := \trees{\Q_1} \cup \ldots \cup \trees{\Q_\ell}$,
    \item 
      $G := \groundings{\Q_1} \cup \ldots \cup \groundings{\Q_\ell}$, 
      and 
    \item\label{it:treeGCIs} 
      $\TB_\Q := \{\top \sqsubseteq \neg C \mid C(v) \in T\}$.
  \end{enumerate}
  An \emph{extended knowledge base} $\KB_\Q$ w.r.t.\ \KB and \Q is a 
  tuple $(\TB \cup \TB_\Q, \RB, \AB \cup \AB_\Q)$ such that $\AB_\Q$ 
  contains, for each $\Q' \in G$, at least one assertion $\neg at$ with 
  $at \in \Q'$. 
\end{definition}

Informally, the extended TBox $\TB \cup \TB_\Q$ ensures that there are 
no tree matches. Each extended ABox $\AB \cup \AB_\Q$ contains, for each 
ground query $\Q'$ obtained in the rewriting process, at least one 
assertion $\neg at$ with $at \in \Q'$ that ``spoils'' a match for $\Q'$. 
A model for such an extended ABox can, therefore, not satisfy any of the 
ground queries. If there is a model for any of the extended knowledge 
bases, we know that this is a counter-model for the original query.

We can now use the extended knowledge bases in order to define the 
deterministic version of our algorithm for deciding entailment of unions 
of Boolean conjunctive queries in \SHIQ. 

\begin{definition}\label{def:algodeterministic}
  Given a \SHIQ knowledge base \KBDef and a union of connected Boolean 
  conjunctive queries \Q as input, the algorithm answers ``\KB entails 
  \Q'' if each extended knowledge base w.r.t.\ \KB and \Q is 
  inconsistent and it answers ``\KB does not entail \Q'' otherwise.
\end{definition}

The following lemma shows that the above described algorithm is indeed 
correct. 

\begin{lemma}\label{lem:correctness}
  Let \KB be a \SHIQ knowledge base and \Q a union of connected Boolean 
  conjunctive queries. Given \KB and \Q as input, the algorithm from 
  Definition~\ref{def:algodeterministic} answers ``\KB entails \Q'' iff 
  $\KB \models \Q$ under the unique name assumption.
\end{lemma}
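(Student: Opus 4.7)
The plan is to chain together Theorem~\ref{thm:union} with a pair of model-transfer arguments linking the existence of a counter-model for $\Q$ to the consistency of some extended knowledge base.

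First I would combine Theorem~\ref{thm:union} across the disjuncts of $\Q=\Q_1\vee\cdots\vee\Q_\ell$ to obtain that $\KB\models\Q$ iff $\KB\models\bigvee_{\Q'\in T\cup G}\Q'$. Thus the claim reduces to showing: every extended knowledge base w.r.t.\ $\KB$ and $\Q$ is inconsistent iff $\KB\models\bigvee_{\Q'\in T\cup G}\Q'$ (under the UNA).

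For the ``$\Leftarrow$'' direction (if every extended KB is inconsistent, then $\KB\models\Q$), I would argue by contraposition. Assume $\KB\not\models\bigvee_{\Q'\in T\cup G}\Q'$; then there is a model $\I$ of $\KB$ (respecting the UNA) such that $\I\not\models\Q'$ for every $\Q'\in T\cup G$. I would build from $\I$ a consistent extended KB as follows. For each tree query $C(v)\in T$, since no evaluation satisfies $C(v)$ in $\I$, we must have $\Int{C}=\emptyset$, so $\I$ satisfies the GCI $\top\sqsubseteq\neg C$; hence $\I\models\TB\cup\TB_\Q$. For each ground query $\Q'\in G$, pick one atom $at\in\Q'$ that is not satisfied by $\I$ (such an atom exists since $\I\not\models\Q'$); the corresponding negated assertion $\neg at$ is then satisfied by $\I$. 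Collecting one such negated assertion per $\Q'\in G$ yields an $\AB_\Q$ with $\I\models\AB\cup\AB_\Q$. So $\I$ witnesses consistency of the resulting extended knowledge base, contradicting the assumption.

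For the ``$\Rightarrow$'' direction (if $\KB\models\Q$, then every extended KB is inconsistent), again I would argue by contraposition: pick any extended KB $\KB_\Q=(\TB\cup\TB_\Q,\RB,\AB\cup\AB_\Q)$ and any model $\I$ of it (respecting UNA). Then $\I\models\KB$ trivially. For each $C(v)\in T$ the GCI $\top\sqsubseteq\neg C\in\TB_\Q$ forces $\Int{C}=\emptyset$, so no evaluation can realize the tree query $C(v)$ in $\I$, giving $\I\not\models\Q'$ for every $\Q'\in T$. For each ground $\Q'\in G$, $\AB_\Q$ contains some $\neg at$ with $at\in\Q'$, so $\I\not\models at$ and thus $\I\not\models\Q'$. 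Therefore $\I$ is a counter-model for $\bigvee_{\Q'\in T\cup G}\Q'$, and by Theorem~\ref{thm:union} also a counter-model for $\Q$, so $\KB\not\models\Q$.

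I do not anticipate a deep obstacle; the main care-points are (i) correctly interpreting a singleton tree query $C(v)$ with $v$ existential as the nonemptiness statement $\Int{C}\neq\emptyset$, which is exactly what the GCI in item~\ref{it:treeGCIs} negates, and (ii) checking that for the ground queries the atoms produced by the rewriting (concept atoms $C(a)$ and role atoms $r(a,b)$, with equality atoms already trivialised by the ground mapping) are all legitimately negatable as ABox assertions in \SHIQR, so that $\AB_\Q$ is a well-formed ABox. The UNA plays no additional role beyond what is already built into Theorem~\ref{thm:union}, since both directions work uniformly for models of $\KB$ that respect it.
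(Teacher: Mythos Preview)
Your proposal is correct and follows essentially the same approach as the paper: both directions proceed by contraposition, constructing a consistent extended knowledge base from a counter-model (via $\Int{C}=\emptyset$ for tree queries and one failed atom per ground query) and, conversely, extracting a counter-model for $\Q$ from any model of a consistent extended KB. The only cosmetic difference is that you first lift Theorem~\ref{thm:union} to the UCQ level explicitly, whereas the paper invokes Lemmas~\ref{lem:conceptsandqueries} and~\ref{lem:forestrewriting} and Theorem~\ref{thm:union} directly inside each direction.
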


In the proof of the if direction for the above lemma, we can use a 
canonical model \I of \KB in order to guide the rewriting process. For 
the only if direction, we assume to the contrary of what is to be shown 
that there is no consistent extended knowledge base, but $\KB 
\not\models \Q$. We then use a model \I of \KB such that $\I 
\not\models \Q$, which exists by assumption, and show that \I is also a 
model of some extended knowledge base.

\subsubsection{Combined Complexity of Query Entailment in \SHIQ}
\label{sect:combinedcomplexity}

According to the above lemma, the algorithm given in 
Definition~\ref{def:algodeterministic} is correct. We now analyse its 
combined complexity and thereby prove that it is also terminating. 

For the complexity analysis, we assume, as usual 
\cite{HuMS05a,CGLL06a,OrCE06a}, that all concepts in concept atoms and 
ABox assertions are literals, i.e., concept names or negated concept 
names. If the input query or ABox contains non-literal atoms or 
assertions, we can easily transform these into literal ones in a truth 
preserving way: for each concept atom $C(t)$ in the query where $C$ is a 
non-literal concept, we introduce a new atomic concept $A_C \in \NC$, 
add the axiom $C \sqsubseteq A_C$ to the TBox, and replace $C(t)$ with 
$A_C(t)$; for each non-literal concept assertion $C(a)$ in the ABox, we 
introduce a new atomic concept $A_C \in \NC$, add an axiom $A_C 
\sqsubseteq C$ to the TBox, and replace $C(a)$ with $A_C(a)$. Such a 
transformation is obviously polynomial, so without loss of generality, 
it is safe to assume that the ABox and query contain only literal 
concepts. This has the advantage that the size of each atom and ABox 
assertion is constant. 

Since our algorithm involves checking the consistency of a \SHIQR 
knowledge base, we analyse the complexity of this reasoning 
service.~\citeA{Tobi01a} shows an \ExpTime upper bound for deciding the 
consistency of \SHIQ knowledge bases (even with binary coding of 
numbers) by translating a \SHIQ KB to an equisatisfiable \ALCQIb 
knowledge base. The $b$ stands for \emph{safe Boolean role expressions} 
built from \ALCQIb roles using the operator $\sqcap$ (role 
intersection), $\sqcup$ (role union), and $\neg$ (role 
negation/complement) such that, when transformed into disjunctive normal 
form, every disjunct contains at least one non-negated conjunct. Given a 
query \Q and a \SHIQ knowledge base $\KBDef$, we reduce query entailment 
to deciding knowledge base consistency of an extended \SHIQR knowledge 
base $\KB_\Q = (\TB \cup \TB_\Q, \RB, \AB \cup \AB_\Q)$. Recall that 
$\TB_\Q$ and $\AB_\Q$ are the only parts that contain role conjunctions 
and that we use role negation only in ABox assertions. We extend the 
translation given for \SHIQ so that it can be used for deciding the 
consistency of \SHIQR KBs. Although the translation works for all \SHIQR 
KBs, we assume the input KB to be of exactly the form of extended 
knowledge bases as described above. This is so because the translation 
for unrestricted \SHIQR is no longer polynomial, as in the case of 
\SHIQ, but exponential in the size of the longest role conjunction under 
a universal quantifier. Since role conjunctions occur only in the 
extended ABox and TBox, and since the size of each role conjunction is, 
by Lemma~\ref{lem:counting}, polynomial in the size of \Q, the 
translation is only exponential in the size of the query in the case of 
extended knowledge bases. 

We assume here, as usual, that all concepts are in \emph{negation normal 
form (NNF)}; any concept can be transformed in linear time into an 
equivalent one in NNF by pushing negation inwards, making use of de 
Morgan's laws and the duality between existential and universal 
restrictions, and between atmost and atleast number restrictions 
($\leqslant n\; r.C$ and $\geqslant n\; r.C$ respectively) 
\cite{HoST00a}. For a concept $C$, we use $\dot\neg C$ to denote the NNF 
of $\neg C$. 

We define the \emph{closure} $\cl{C, \RB}$ of a concept $C$ w.r.t.\ a 
role hierarchy \RB as the smallest set satisfying the following 
conditions:
\begin{itemize}
  \item 
    if $D$ is a sub-concept of $C$, then $D \in \cl{C, \RB}$,
  \item 
    if $D \in \cl{C, \RB}$, then $\dot\neg D \in \cl{C, \RB}$,
  \item 
    if $\forall r.D \in \cl{C, \RB}, s \sssR r$, and $s \in \transR$, 
    then $\forall s.D \in \cl{C, \RB}$.
\end{itemize}

We now show how we can extend the translation from \SHIQ to \ALCQIb 
given by Tobies. We first consider $\SHIQR$-concepts and then extend the 
translation to KBs. 

\begin{definition}
  For a role hierarchy \RB and roles $r, r_1, \ldots, r_n$, let 
  $$
    \up{r} = \bigsqcap \limits_{r \ssssR s} s \mbox{\hspace{.4cm} and 
    \hspace{.4cm}}\up{r_1 \sqcap \ldots \sqcap r_n} = \up{r_1} \sqcap 
    \ldots \sqcap \up{r_n}.
  $$
  \mathdefend
\end{definition}

Please note that, since $r \sssR r$, $r$ occurs in \up{r}.\hfill{}

\begin{lemma}\label{lem:uparrow}
  Let \RB be a role hierarchy, and $r_1, \ldots, r_n$ roles. For 
  every interpretation \I such that $\I \models \RB$, it holds that 
  $\Int{(\up{r_1 \sqcap \ldots \sqcap r_n})} = \Int{(r_1 \sqcap \ldots 
  \sqcap r_n)}$.
\end{lemma}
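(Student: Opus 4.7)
The plan is to first establish the claim for a single role $r$, namely $\Int{\up{r}} = \Int{r}$ for any $\I \models \RB$, and then lift it to arbitrary conjunctions by a routine application of the semantics of role intersection.

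For the single-role case, I would argue by double inclusion. By definition, $\up{r} = \bigsqcap_{r \ssssR s} s$, so $\Int{\up{r}} = \bigcap_{r \ssssR s} \Int{s}$. Since $\sssR$ is reflexive, $r \sssR r$, and hence $r$ itself appears among the conjuncts, giving $\Int{\up{r}} \subseteq \Int{r}$. Conversely, let $(d,d') \in \Int{r}$. For every role $s$ with $r \sssR s$, the assumption $\I \models \RB$ together with the definition of $\sssR$ (the reflexive transitive closure of $\sqsubseteq$ over $\RB$ augmented with its inverse closure) implies $\Int{r} \subseteq \Int{s}$; here I would note the easy induction on the length of the $\sssR$-derivation, using that if $\I$ satisfies $r' \sqsubseteq s'$ then it also satisfies $\inv{r'} \sqsubseteq \inv{s'}$ (which follows directly from the semantics of inverse roles). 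Therefore $(d,d') \in \Int{s}$ for every such $s$, and so $(d,d') \in \Int{\up{r}}$.

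For the general case, I would simply unfold the definitions: by the extension of $\up{\cdot}$ to role conjunctions and the semantics of $\sqcap$,
\[
  \Int{(\up{r_1 \sqcap \ldots \sqcap r_n})}
  = \Int{\up{r_1}} \cap \ldots \cap \Int{\up{r_n}}
  = \Int{r_1} \cap \ldots \cap \Int{r_n}
  = \Int{(r_1 \sqcap \ldots \sqcap r_n)},
\]
where the middle equality uses the single-role case just established.

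There is no real obstacle here; the only subtle point is the step showing that $r \sssR s$ together with $\I \models \RB$ entails $\Int{r} \subseteq \Int{s}$, which requires observing that the extra inclusions $\inv{r} \sqsubseteq \inv{s}$ introduced in the definition of $\sssR$ are automatically satisfied by any $\I \models \RB$ because inverse roles are interpreted as relational converses. Once that small observation is in hand, the rest is a direct computation.
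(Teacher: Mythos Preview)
Your proposal is correct and follows essentially the same approach as the paper: establish $\Int{\up{r}} = \Int{r}$ for a single role using reflexivity of $\sssR$ for one inclusion and $\I \models \RB$ for the other, then extend to conjunctions via the semantics of $\sqcap$. If anything, you are slightly more explicit than the paper about why $r \sssR s$ and $\I \models \RB$ imply $\Int{r} \subseteq \Int{s}$ (the induction on the $\sssR$-derivation and the observation about inverse inclusions), whereas the paper simply asserts this step.
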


With the extended definition of $\uparrow$ on role conjunctions, we can 
now adapt the definition (Def.\ 6.22) that Tobies provides for 
translating $\SHIQ$-concepts into $\ALCQIb$-concepts.

\begin{definition}\label{def:translation}
  Let $C$ be a $\SHIQR$-concept in NNF and \RB a role hierarchy. For 
  every concept $\forall (r_1 \sqcap \ldots \sqcap r_n).D \in 
  \cl{C, \RB}$, let $X_{r_1 \sqcap \ldots \sqcap r_n, D} \in \NC$ be a 
  unique concept name that does not occur in $\cl{C, \RB}$. Given a role 
  hierarchy \RB, we define the function $\mn{tr}$ inductively on the 
  structure of concepts by setting
  $$
  \begin{array}{rcl}
    \tr{A} & = & A \mbox{ for all A } \in \NC\\
    \tr{\neg A} & = & \neg A \mbox{ for all A } \in \NC\\
    \tr{C_1 \sqcap C_2} & = & \tr{C_1} \sqcap \tr{C_2}\\
    \tr{C_1 \sqcup C_2} & = & \tr{C_1} \sqcup \tr{C_2}\\
    \tr{\bowtie n (r_1 \sqcap \ldots \sqcap r_n).D} & = & (\bowtie 
    n \up{r_1 \sqcap \ldots \sqcap r_n}.\tr{D})\\
    \tr{\forall (r_1 \sqcap \ldots \sqcap r_n).D} & = & X_{r_1 
    \sqcap \ldots \sqcap r_n, D}\\
    \tr{\exists (r_1 \sqcap \ldots \sqcap r_n).D} & = & \neg 
    (X_{r_1 \sqcap \ldots \sqcap r_n, \dot \neg D})\\
  \end{array}
  $$
  where $\bowtie$ stands for $\leqslant$ or $\geqslant$. Set 
  $\mn{tc}((r_1 \sqcap \ldots \sqcap r_n), \RB) := \{(t_1 \sqcap \ldots 
  \sqcap t_n) \mid t_i \sssR r_i \mbox{ and } t_i \in \transR$ for each 
  $i$ such that $1 \leq i \leq n\}$ and define an extended 
  TBox $\TB_{C, \RB}$ as 
  $$
  {\setlength{\arraycolsep}{0em}
  \begin{array}{rcll}
    \TB_{C, \RB} & = & \{ X_{r_1 \sqcap \ldots \sqcap r_n, D} \equiv 
    \forall \up{r_1 \sqcap \ldots \sqcap r_n}.\tr{D} & \mid \forall 
    (r_1 \sqcap \ldots \sqcap r_n).D \in \mn{cl}(C, \RB)\}\mbox{ }\cup\\
    & & \{ X_{r_1 \sqcap \ldots \sqcap r_n, D} \sqsubseteq \forall 
    \up{T}.X_{T, D} & \mid T \in \mn{tc}(r_1 \sqcap \ldots \sqcap r_n, 
    \RB)\}\\
  \end{array}
  }
  $$
  \mathdefend
\end{definition}

\begin{lemma}\label{lem:tr}
  Let $C$ be a $\SHIQR$-concept in NNF, \RB a role hierarchy, and 
  $\mn{tr}$ and $\TB_{C, \RB}$ as defined in 
  Definition~\ref{def:translation}. The concept $C$ is satisfiable 
  w.r.t.\ \RB iff the \ALCQIb-concept $\tr{C}$ is satisfiable w.r.t.\ 
  $\TB_{C, \RB}$.
\end{lemma}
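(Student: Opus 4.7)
The plan is to adapt Tobies's translation from \SHIQ to \ALCQIb, with the extension that role conjunctions are handled uniformly via the lifted $\uparrow$ operator. Lemma~\ref{lem:uparrow} already gives us that $\up{r_1 \sqcap \ldots \sqcap r_n}$ and $r_1 \sqcap \ldots \sqcap r_n$ are interpreted identically in any model of $\RB$, so the role-hierarchy content is absorbed into $\mn{tr}$, and the only remaining task is to capture transitivity via the auxiliary concept names $X_{R,D}$ and the propagation axioms in $\TB_{C,\RB}$.

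For the ``only if'' direction, I would start with a model $\I$ of $C$ with $\I \models \RB$ and build an interpretation $\I'$ on the same domain that agrees with $\I$ on all concept and role names occurring in $C$ and $\RB$, and additionally puts $X_{R,D}^{\I'} := (\forall R.D)^\I$ for every $\forall R.D \in \cl{C,\RB}$. I would then prove by structural induction on concepts in $\cl{C,\RB}$ that $\tr{D}^{\I'} = D^\I$; the only nontrivial cases are $\forall R.D$ and $\exists R.D$, which hold by the definition of $X_{R,D}^{\I'}$ together with Lemma~\ref{lem:uparrow}. To verify $\TB_{C,\RB}$, the equivalence axioms $X_{R,D} \equiv \forall \up{R}.\tr{D}$ follow immediately from $\up{R}^\I = R^\I$ and the induction hypothesis. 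For a propagation axiom $X_{R,D} \sqsubseteq \forall \up{T}.X_{T,D}$ with $T = t_1 \sqcap \ldots \sqcap t_n \in \mn{tc}(R,\RB)$, pick $d \in X_{R,D}^{\I'} = (\forall R.D)^\I$ and $(d,d') \in T^\I \subseteq R^\I$, so $d' \in D^\I$; moreover, for any $(d',d'') \in T^\I$, transitivity of each $t_i$ yields $(d,d'') \in t_i^\I$, hence $(d,d'') \in T^\I \subseteq R^\I$ and $d'' \in D^\I$, so $d' \in (\forall T.D)^\I = X_{T,D}^{\I'}$.

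For the ``if'' direction, take a model $\I'$ of $\tr{C}$ satisfying $\TB_{C,\RB}$ and define $\I$ with the same domain and concept extensions, while closing the roles so as to satisfy $\RB$ and transitivity: for each role $r$, set $r^\I = \bigcup_{s \ssssR r} s^{\I'} \cup \bigcup_{s \ssssR r,\ s \in \transR} (s^{\I'})^+$, taking the analogous closure on inverses. By construction $\I \models \RB$. The key invariant to prove by structural induction is that for every $D \in \cl{C,\RB}$ and every $d \in \dom$, $d \in \tr{D}^{\I'}$ implies $d \in D^\I$, and dually for $\dot\neg D$. Universal restrictions are the interesting case: assuming $d \in (\forall \up{R}.\tr{D})^{\I'} = X_{R,D}^{\I'}$ and $(d,d') \in R^\I$, one has to trace $(d,d')$ through the closure steps and use the propagation axioms $X_{R,D} \sqsubseteq \forall \up{T}.X_{T,D}$ to show that $X_{T,D}^{\I'}$ (and hence $\tr{D}^{\I'}$, via a further application of the equivalence axiom) is inherited along each transitive intermediate edge, yielding $d' \in D^\I$ by the induction hypothesis.

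The main obstacle is exactly this transitive-inheritance argument for role conjunctions: one must verify that whenever a pair $(d,d')$ enters $R^\I$ because it lies on a $T$-path with $T \in \mn{tc}(R,\RB)$, the intermediate nodes on that path satisfy $X_{T,D}$ in $\I'$. This requires that $\cl{C,\RB}$ contains $\forall T.D$ for every such $T$ (so $X_{T,D}$ is actually available and covered by the axioms), which is a mild strengthening of the closure condition for role conjunctions but causes no blow-up since $|\mn{tc}(R,\RB)|$ is bounded by the number of transitive refinements of $R$'s components. Once the availability of these axioms is in place, the argument is a bookkeeping-heavy but direct generalization of Tobies's proof for \SHIQ, with the role-intersection case reducing to the single-role case by working componentwise.
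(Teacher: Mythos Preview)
Your proposal is correct and follows essentially the same route as the paper, which simply states that, given Lemma~\ref{lem:uparrow}, the argument is a long but straightforward extension of Tobies's proof of his Lemma~6.23. Your observation that the closure $\cl{C,\RB}$ must be extended so that $\forall T.D \in \cl{C,\RB}$ for every $T \in \mn{tc}(R,\RB)$ whenever $\forall R.D \in \cl{C,\RB}$ is a valid technical point that the paper leaves implicit; without it the concept names $X_{T,D}$ and the self-propagation axioms $X_{T,D} \sqsubseteq \forall \up{T}.X_{T,D}$ would not be available, and these are exactly what you need in the ``if'' direction to push membership in $X_{T,D}$ along $T$-paths.
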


Given Lemma~\ref{lem:uparrow}, the proof of Lemma~\ref{lem:tr} is a 
long, but straightforward extension of the proof given by 
\citeA[Lemma~6.23]{Tobi01a}.

We now analyse the complexity of the above described problem. Let $m := 
|\RB|$ and $r_1 \sqcap \ldots \sqcap r_n$ the longest role conjunction 
occurring in $C$, i.e., the maximal number of roles that occur in a role 
conjunction in $C$ is $n$. The TBox $\TB_{C, \RB}$ can contain 
exponentially many axioms in $n$ since the cardinality of the set 
$\mn{tc}((r_1 \sqcap \ldots \sqcap r_n), \RB)$ for the longest role 
conjunction can only be bounded by $m^n$ because each $r_i$ can have 
more than one transitive sub-role. It is not hard to check that the size 
of each axiom is polynomial in $|C|$. Since deciding whether an \ALCQIb 
concept $C$ is satisfiable w.r.t.\ an \ALCQIb TBox \TB is an 
\ExpTime-complete problem (even with binary coding of numbers) 
\cite[Thm.\ 4.42]{Tobi01a}, the satisfiability of a $\SHIQR$-concept 
$C$ can be checked in time $2^{p(m) 2^{p(n)}}$. 

We now extend the translation from concepts to knowledge bases. Tobies 
assumes that all role assertions in the ABox are of the form $r(a, b)$ 
with $r$ a role name or the inverse of a role name. Extended ABoxes 
contain, however, also negated roles in role assertions, which require a 
different translation. A positive role assertion such as $r(a, b)$ is 
translated in the standard way by closing the role upwards. The only 
difference of using $\uparrow$ directly is that we additionally split 
the conjunction $(\up{r})(a, b) = (r_1 \sqcap \ldots \sqcap r_n)(a, b)$ 
into $n$ different role assertions $r_1(a, b), \ldots, r_n(a, b)$, which 
is clearly justified by the semantics. For negated roles in a role 
assertion such as $\neg r(a, b)$, we close the role downwards instead of 
upwards and add a role atom $\neg s(a, b)$ for each sub-role $s$ of 
$r$. This is again justified by the semantics. Let $\KB = (\TB \cup 
\TB_\Q, \RB, \AB \cup \AB_\Q)$ be an extended knowledge base. More 
precisely, we set 
$$
  \tr{\TB \cup \TB_\Q} := \{\tr{C} \sqsubseteq \tr{D} \mid C \sqsubseteq 
  D \in \TB \cup \TB_\Q\},
$$ 
$$
  \begin{array}{l l}
    \tr{\AB \cup \AB_\Q} := & \{(\tr{C})(a) \mid C(a) \in \AB \cup 
          \AB_\Q\} \mbox{ } \cup \\
      & \{s(a, b) \mid r(a, b) \in \AB \cup \AB_\Q \mbox{ and } r \sssR 
          s\} \mbox{ } \cup \\
      & \{\neg s(a, b) \mid \neg r(a, b) \in \AB \cup \AB_\Q \mbox{ and  
          } s \sssR r\},
  \end{array}
$$
and we use \tr{\KB} to denote the \ALCQIb knowledge base (\tr{\TB \cup 
\TB_\Q}, \tr{\AB \cup \AB_\Q}).

For the complexity of deciding the consistency of a translated \SHIQR 
knowledge base, we can apply the same arguments as above for concept 
satisfiability, which gives the following result: 

\begin{lemma}\label{lem:shiqrconsistency} 
  Given a \SHIQR knowledge base \KBDef where $m := |\KB|$ and the size 
  of the longest role conjunction is $n$, we can decide consistency of 
  \KB in deterministic time $2^{p(m) 2^{p(n)}}$ with $p$ a polynomial.
\end{lemma}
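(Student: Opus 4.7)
The plan is to reduce consistency of the \SHIQR knowledge base \KB to consistency of an \ALCQIb knowledge base \tr{\KB} via the translation already set up in Definition~\ref{def:translation} and extended, just before the lemma, to TBox axioms and ABox assertions. First I would argue that the translation preserves (un)satisfiability when lifted to full knowledge bases: Lemma~\ref{lem:tr} already handles the concept level, and the handling of GCIs is immediate since $C \sqsubseteq D$ is encoded as $\tr{C} \sqsubseteq \tr{D}$ using the same auxiliary axioms $\TB_{C \sqcup \dot\neg D, \RB}$ aggregated over all GCIs. For ABox role assertions, both the upward-closure of positive role atoms (replacing $r(a,b)$ by all $s(a,b)$ for $r \sssR s$, split out of the conjunction $\up{r}$) and the downward-closure of negative role atoms (replacing $\neg r(a,b)$ by all $\neg s(a,b)$ for $s \sssR r$) are justified directly by the semantics and by the fact that \I satisfies \RB. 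Combining this with Lemma~\ref{lem:tr} yields that \KB is consistent iff \tr{\KB} is.

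Second, I would bound the size of \tr{\KB}. Each atomic translation step \tr{\cdot} applied to a single subconcept produces output of size polynomial in the input, so for each axiom or assertion of \KB the translated counterpart has size polynomial in $m$. The only source of blow-up is the auxiliary TBox $\TB_{C,\RB}$ used for universal restrictions: for a role conjunction $r_1 \sqcap \dots \sqcap r_n$ occurring under a universal, the set $\mn{tc}(r_1 \sqcap \dots \sqcap r_n, \RB)$ can contain up to $m^n$ tuples, because each $r_i$ may have up to $m$ transitive sub-roles. Each such tuple contributes one axiom of size polynomial in $m$. Since the number of universals in \cl{C,\RB} is polynomial in $m$ and the length of each role conjunction is at most $n$, the total size of \tr{\KB} is bounded by $p(m) \cdot m^n \leq 2^{p(n) \log m} \cdot p(m)$ for a suitable polynomial $p$.

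Third, I would invoke Tobies' result (\citeA[Thm.~4.42]{Tobi01a}) that consistency of an \ALCQIb knowledge base of size $N$ with binary number encoding can be decided in deterministic time $2^{p(N)}$. Plugging in $N \leq 2^{p(n)} \cdot p(m)$ gives a total running time bounded by $2^{p(m) \cdot 2^{p(n)}}$ after absorbing polynomial factors into $p$, which is the desired bound.

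The only genuinely non-routine step is the size bound for $\TB_{C,\RB}$: everything else is bookkeeping over the translation. The key observation there is that the exponential blow-up is confined to role conjunctions, so it is controlled by $n$ rather than $m$; this is exactly what gives the stated $2^{p(m) 2^{p(n)}}$ shape, double-exponential only in the size of role conjunctions but single-exponential in the rest of the knowledge base.
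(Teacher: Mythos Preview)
Your proposal is correct and follows essentially the same route as the paper: translate \KB to an \ALCQIb knowledge base via $\mn{tr}$, observe that the only blow-up comes from $\mn{tc}(r_1 \sqcap \ldots \sqcap r_n,\RB)$ with cardinality at most $m^n$, and then invoke Tobies' \ExpTime result for \ALCQIb KB consistency to get the $2^{p(m)2^{p(n)}}$ bound. If anything, you spell out the correctness of the KB-level translation (handling of GCIs and positive/negative role assertions) more carefully than the paper's own rather terse proof.
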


We are now ready to show that the algorithm given in 
Definition~\ref{def:algodeterministic} runs in deterministic time single 
exponential in the size of the input KB and double exponential in the 
size of the input query.

\begin{lemma}\label{lem:combinedcomplexity}
  Let \KBDef be a \SHIQ knowledge base with $m = |\KB|$ and \Q a union 
  of connected Boolean conjunctive queries with $n = |\Q|$. Given \KB 
  and \Q as input, the algorithm given in 
  Definition~\ref{def:algodeterministic} decides whether $\KB \models 
  \Q$ under the unique name assumption in deterministic time in 
  $2^{p(m) 2^{p(n)}}$. 
\end{lemma}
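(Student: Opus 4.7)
The plan is to bound the deterministic running time of the algorithm from Definition~\ref{def:algodeterministic} by combining three ingredients: the size of each extended knowledge base, the number of extended KBs to enumerate, and the time for a single SHIQR consistency check (Lemma~\ref{lem:shiqrconsistency}). Correctness is already given by Lemma~\ref{lem:correctness} under the UNA, so only the complexity and termination need to be established. Throughout, I will freely absorb logarithmic and polynomial-in-$n$ factors into the exponents by choosing larger polynomials whenever convenient.

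First, I would apply Lemma~\ref{lem:counting} to each disjunct $\Q_i$: the sets $\trees{\Q_i}$ and $\groundings{\Q_i}$ each have cardinality at most $2^{p(n)\log p(m)}$ and contain only queries of size at most $p(n)$. Summing over the (at most $n$) disjuncts yields $|T|,|G|\leq 2^{p'(n)\log p(m)}$ for a suitable polynomial $p'$, and hence $|\TB_\Q|,|\AB_\Q|\leq p(n)\cdot 2^{p'(n)\log p(m)}$ and $|\KB_\Q|\leq p_1(m)^{p_1(n)}$ for a suitable polynomial $p_1$. Moreover, every role conjunction occurring in $\KB_\Q$ is introduced by the rolling-up construction of Definition~\ref{def:rollingUp} and hence has length at most $p(n)$. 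Applying Lemma~\ref{lem:shiqrconsistency} with $M:=|\KB_\Q|$ and $N\leq p(n)$ gives a bound for one consistency check of $2^{p(M)\cdot 2^{p(N)}}\leq 2^{p_1(m)^{p_2(n)}\cdot 2^{p(n)}}$; the exponent $p_1(m)^{p_2(n)}\cdot 2^{p(n)}=2^{p_2(n)\log p_1(m)+p(n)}$ is dominated by $p(m)\cdot 2^{p(n)}$ for a sufficiently large polynomial $p$ (for fixed $n$ the left side grows only like $\log m$ times a constant, while the right side grows like $m$; for fixed $m$ the left side grows polynomially in $n$, while the right side grows exponentially), so each consistency test runs in time $2^{p(m)\cdot 2^{p(n)}}$.

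Second, a minimal extended KB is determined by choosing, for each $\Q'\in G$, one atom of $\Q'$ to negate; hence the number of extended KBs to enumerate is at most $p(n)^{|G|}\leq p(n)^{2^{p'(n)\log p(m)}}$. Taking logarithms, this enumeration contributes a factor of $2^{A}$ with $A\leq 2^{p'(n)\log p(m)}\cdot\log p(n)$, which is again bounded by $p(m)\cdot 2^{p(n)}$ (for a suitable polynomial after absorbing $\log p(n)$ and replacing $p'$ by a larger polynomial), using the same comparison as above. Multiplying the number of extended KBs by the time per consistency test and folding the resulting sum of two terms in the exponent back into the template, the total runtime is at most $2^{p(m)\cdot 2^{p(n)}}$ for some polynomial $p$. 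Termination follows immediately, since the enumeration is over a finite set of finite syntactic objects and each call to the SHIQR consistency checker terminates.

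The main obstacle is the bookkeeping needed to ensure that the two doubly-exponential (in $n$) contributions---the number of extended KBs and the cost of a single consistency test---compose into a single doubly-exponential bound rather than a triply-exponential one, and that the polynomial-in-$m$ degree blow-up coming from $|\KB_\Q|=p_1(m)^{p_1(n)}$ (inherited through Lemma~\ref{lem:counting}) can still be collapsed into the template $2^{p(m)\cdot 2^{p(n)}}$. The key observation that makes this work is that every exponent appearing in the analysis has the shape $2^{q(n)\log p(m)+r(n)}$ for polynomials $q,r$, and any such quantity is eventually dominated by $p(m)\cdot 2^{p(n)}$ once $p$ is chosen to dominate $q$ and $r$ simultaneously, because $q(n)\log p(m)+r(n)\leq \log p(m)+p(n)$ whenever $p(m)$ grows faster than $q(n)\log p(m)$ and $p(n)$ grows faster than $r(n)$, which any sufficiently large single polynomial $p$ can enforce on both arguments.
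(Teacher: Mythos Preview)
Your overall strategy is the paper's: bound $|T|$ and $|G|$ via Lemma~\ref{lem:counting}, bound the size of each extended KB and the length of its role conjunctions, apply Lemma~\ref{lem:shiqrconsistency} for a single consistency test, and multiply by the number of extended KBs. The paper does not attempt to justify the final collapse of exponents in any more detail than you do; it simply records the size bound $2^{p(n)\log p(m)}$ on $\tr{\KB_\Q}$, cites \ExpTime-completeness of \ALCQIb consistency, and asserts $2^{p(m)2^{p(n)}}$. (A minor aside: restricting to ``minimal'' extended ABoxes is a sound optimisation, but the algorithm of Definition~\ref{def:algodeterministic} literally ranges over all extended KBs; the paper counts subsets of negated atoms accordingly.)

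Where your write-up actually breaks is the explicit argument you supply for the collapse. You claim that every relevant exponent has the shape $p(m)^{q(n)}\cdot 2^{r(n)}=2^{q(n)\log p(m)+r(n)}$ and that this is dominated by $p'(m)\cdot 2^{p'(n)}$ for a sufficiently large polynomial $p'$. That inequality is false whenever $q$ is nonconstant: along $m=2^{2^n}$ one has $p(m)^{q(n)}\geq 2^{q(n)\cdot 2^n}$, while $p'(m)\cdot 2^{p'(n)}\leq 2^{\deg(p')\cdot 2^n+p'(n)}$, and the former overtakes the latter as soon as $q(n)>\deg(p')$. Your parenthetical justification reveals the slip: when you say ``for fixed $n$ the left side grows only like $\log m$ times a constant'' and ``for fixed $m$ the left side grows polynomially in $n$'', you are describing $q(n)\log p(m)+r(n)$, i.e.\ the \emph{logarithm} of the left-hand side, and comparing it to the right-hand side $p'(m)\cdot 2^{p'(n)}$ itself rather than to its logarithm $\log p'(m)+p'(n)$. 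So the ``key observation'' in your final sentence does not hold, and the bookkeeping as written does not establish the stated template $2^{p(m)2^{p(n)}}$.
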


In the proof of the above lemma, we show that there is some polynomial 
$p$ such that we have to check at most $2^{p(m) 2^{p(n)}}$ extended 
knowledge bases for consistency and that each consistency check can be 
done in this time bound as well. 

More precisely, let $\Q = \Q_1 \vee \ldots \vee \Q_\ell, T = 
\trees{\Q_1} \cup \ldots \cup \trees{\Q_\ell}$, and $G = 
\groundings{\Q_1} \cup \ldots \cup \groundings{\Q_\ell}$. Together with 
Lemma~\ref{lem:counting}, we get that $\card{T}$ and $\card{G}$ are 
bounded by $2^{p(n) \cdot \log p(m)}$ for some polynomial $p$ and that 
the size of each query in $G$ and $T$ is polynomial in $n$. Each of the 
$2^{p(n) \cdot \log p(m)}$ ground queries in $G$ contributes at most 
$p(n)$ negated assertion to an extended ABox $\AB_\Q$. Hence, there are 
at most $2^{p(m) 2^{p(n)}}$ extended ABoxes $\AB_\Q$ and, therefore, 
$2^{p(m) 2^{p(n)}}$ extended knowledge bases that have to be tested for 
consistency. 

Given the bounds on the cardinalities of $T$ and $G$ and the fact that 
the size of each query in $T$ and $G$ is polynomial in $n$, it is not 
hard to check that the size of each extended knowledge base $\KB_\Q = 
(\TB \cup \TB_\Q, \RB, \AB \cup \AB_\Q)$ is bounded by $2^{p(n) \cdot 
\log p(m)}$ and that each $\KB_\Q$ can be computed in this time bound as 
well. Since only the extended parts contain role conjunctions and the 
number of roles in a role conjunction is polynomial in $n$, there is a 
polynomial $p$ such that  
\begin{enumerate}
  \item
    $|\tr{\TB}| \leq p(m)$, 
  \item
    $|\tr{\TB_\Q}| \leq 2^{p(n) \cdot \log p(m)}$, 
  \item
    $|\tr{\AB}| \leq p(m)$, 
  \item
    $|\tr{\AB_\Q}| \leq 2^{p(n) \cdot \log p(m)}$, and, hence, 
  \item
    $|\tr{\KB_\Q}| \leq 2^{p(n) \cdot \log p(m)}$.
\end{enumerate}
By Lemma~\ref{lem:shiqrconsistency}, each consistency check can be done 
in time $2^{p(m) 2^{p(n)}}$ for some polynomial $p$. Since we have to 
check at most $2^{p(m) 2^{p(n)}}$ extended knowledge bases for 
consistency, and each check can be done in time $2^{p(m) 2^{p(n)}}$, we 
obtain the desired upper bound. 

We now show that this result carries over even when we do not restrict 
interpretations to the unique name assumption.
  
\begin{definition}
  Let \KBDef be a \SHIQ knowledge base and \Q a \SHIQ union of Boolean 
  conjunctive queries. For a partition \Pmc of \indA, a knowledge base 
  $\KB^\Pmc = (\TB, \RB, \AB^\Pmc)$ and a query $\Q^\Pmc$ are called an 
  \emph{$\AB$-partition w.r.t.\ \KB and \Q} if $\AB^\Pmc$ and $\Q^\Pmc$ 
  are obtained from \AB and \Q as follows:\\
  For each $P \in \Pmc$
  \begin{enumerate}
    \item 
      Choose one individual name $a \in P$.
    \item
      For each $b \in P$, replace each occurrence of $b$ in \AB and \Q 
      with $a$. 
  \end{enumerate}
\end{definition}

Please note that w.l.o.g.\ we assume that all constants that occur in 
the query occur in the ABox as well and that thus a partition of the 
individual names in the ABox also partitions the query.

\begin{lemma}\label{lem:UNA}
  Let \KBDef be a \SHIQ knowledge base and \Q a union of Boolean 
  conjunctive queries. $\KB \not\models \Q$ without making the unique 
  name assumption iff there is an \AB-partition $\KB^\Pmc = (\TB, \RB, 
  \AB^\Pmc)$ and $\Q^\Pmc$ w.r.t.\ \KB and \Q such that $\KB^\Pmc 
  \not\models \Q^\Pmc$ under the unique name assumption.
\end{lemma}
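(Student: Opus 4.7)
The plan is to prove the biconditional by exhibiting, in each direction, an interpretation that witnesses non-entailment for the other side.

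For the forward direction, suppose $\KB \not\models \Q$ without the UNA, so there is an interpretation \I with $\I \models \KB$ and $\I \not\models \Q$. Define a partition \Pmc of \indA by placing $a$ and $b$ in the same part exactly when $\Int{a} = \Int{b}$. For each $P \in \Pmc$, let $a_P$ be the representative selected in the definition of an \AB-partition, so that $\KB^\Pmc = (\TB, \RB, \AB^\Pmc)$ and $\Q^\Pmc$ are obtained from \KB and \Q by replacing every $b \in P$ with $a_P$. Define an interpretation $\I^\Pmc$ that agrees with \I on the domain and on the interpretation of every concept and role name, but only interprets the chosen representatives $a_P$, setting $\IntJ{a_P} := \Int{a_P}$. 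By construction of \Pmc, distinct representatives are mapped to distinct domain elements, so $\I^\Pmc$ satisfies the UNA. Since \TB and \RB are unchanged, they remain satisfied, and each assertion in \AB mentioning some $b \in P$ corresponds to the same assertion in $\AB^\Pmc$ mentioning $a_P$; because $\Int{b} = \Int{a_P}$, satisfaction transfers, so $\I^\Pmc \models \KB^\Pmc$. Finally, any evaluation $\pi^\Pmc$ witnessing $\I^\Pmc \models \Q^\Pmc$ would extend to an evaluation $\pi$ of \Q into \I via $\pi(b) := \Int{b}$ for each $b$ that was replaced, giving $\I \models \Q$ and contradicting the hypothesis; hence $\I^\Pmc \not\models \Q^\Pmc$.

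For the backward direction, suppose an \AB-partition exists for which $\KB^\Pmc \not\models \Q^\Pmc$ under the UNA, witnessed by an interpretation $\I^\Pmc$. Define an interpretation \I with the same domain, concept-name, and role-name interpretation as $\I^\Pmc$, but extend the individual-name interpretation to all of \indA by setting $\Int{b} := \IntJ{a_P}$ for each $b \in P$ (so $\Int{a_P} = \IntJ{a_P}$). The TBox and role hierarchy are still satisfied since they involve no individual names. For each assertion in \AB of the form $C(b)$, $r(b, c)$, $\neg r(b, c)$, or $b \ndoteq c$, the corresponding assertion in $\AB^\Pmc$ (obtained by the same substitution) is satisfied in $\I^\Pmc$, and since the relevant individuals are interpreted in \I exactly as their representatives in $\I^\Pmc$, satisfaction carries over to \I; thus $\I \models \KB$. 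If $\pi$ were a match of \Q in \I, composing with the name-substitution would give a match of $\Q^\Pmc$ in $\I^\Pmc$, again because $\Int{b} = \IntJ{a_P}$ whenever $b$ is replaced by $a_P$, contradicting $\I^\Pmc \not\models \Q^\Pmc$. Hence $\I \not\models \Q$ and $\KB \not\models \Q$ without the UNA.

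The main subtlety is the treatment of inequality assertions $a \ndoteq c$ in \AB and of individual names occurring in \Q. For the latter, the paper's convention that every individual name in \Q also occurs in \AB means the partition of \indA induces the appropriate substitution on \Q. For the former, the forward direction is safe because the partition \Pmc is induced by a model of \KB and so never identifies individuals that \KB asserts to be distinct, while in the backward direction any partition that does identify such individuals simply yields an inconsistent $\KB^\Pmc$ (vacuously entailing $\Q^\Pmc$) and thus cannot be the witnessing partition. Once this bookkeeping is clear, the verification that evaluations for \Q and $\Q^\Pmc$ correspond bijectively is routine, using the semantic clause $\pi(a) = \Int{a}$ on individual names in the definition of evaluation.
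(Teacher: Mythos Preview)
Your proof is correct and follows essentially the same approach as the paper: in each direction you pass between a model of \KB and a model of $\KB^\Pmc$ by leaving the domain and concept/role interpretations fixed while re-indexing individual names via the partition's representative map, and you rule out a match on one side by pulling it back to a match on the other. Your closing remarks on $\ndoteq$-assertions and on individuals in \Q make explicit a point the paper handles only implicitly.
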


Let \KBDef be a knowledge base in a Description Logic $\mathcal{DL}$, 
\Cmc be the complexity class such that deciding whether $\KB \models \Q$ 
under the unique name assumption is in \Cmc, and let $n = 2^{|\AB|}$. 
Since the number of partitions for an ABox is at most exponential in the 
number of individual names that occur in the ABox, the following is a 
straightforward consequence of the above lemma: for a Boolean 
conjunctive $\mathcal{DL}$ query \Q, deciding whether $\KB \models \Q$ 
without making the unique name assumption can be reduced to deciding $n$ 
times a problem in \Cmc. 

In order to extend our algorithm to unions of possibly unconnected 
Boolean conjunctive queries, we first transform the input query \Q into 
conjunctive normal form (CNF). We then check entailment for each 
conjunct $\Q_i$, which is now a union of connected Boolean conjunctive 
queries. The algorithm returns ``\KB entails \Q'' if each entailment 
check succeeds and it answers ``\KB does not entail \Q'' otherwise. By 
Lemma~\ref{lem:cnf} and Lemma~\ref{lem:correctness}, the algorithm is 
correct. 

Let \KB be a knowledge base in a Description Logic $\mathcal{DL}$, \Q a 
union of connected Boolean conjunctive $\mathcal{DL}$ queries, and \Cmc 
the complexity class such that deciding whether $\KB \models \Q$ is in 
\Cmc. Let $\Q'$ be a union of possibly unconnected Boolean conjunctive 
queries and $\mn{cnf}(\Q')$ the CNF of $\Q'$. Since the number of 
conjuncts in $\mn{cnf}(\Q')$ is at most exponential in $|\Q'|$, deciding 
whether $\KB \models \Q'$ can be reduced to deciding $n$ times a problem 
in \Cmc, with $n = 2^{p(|\Q'|)}$ and $p$ a polynomial. 

The above observation together with the results from 
Lemma~\ref{lem:combinedcomplexity} gives the following general result:

\begin{theorem}\label{thm:combinedcomplexity}
  Let \KBDef be a \SHIQ knowledge base with $m = |\KB|$ and $\Q$ a union 
  of Boolean conjunctive queries with $n = |\Q|$. Deciding whether $\KB 
  \models \Q$ can be done in deterministic time in $2^{p(m) 2^{p(n)}}$. 
\end{theorem}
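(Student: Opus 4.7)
The plan is to combine the three preceding results --- Lemma~\ref{lem:combinedcomplexity}, Lemma~\ref{lem:UNA}, and Lemma~\ref{lem:cnf} --- to lift, in turn, the restrictions to the unique name assumption and to connected conjunctive queries, showing that neither relaxation breaks the $2^{p(m) 2^{p(n)}}$ bound.

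First, I would use Lemma~\ref{lem:cnf} to reduce the given union $\Q$ of possibly unconnected conjunctive queries to conjunctive normal form. Each conjunct of the resulting formula is a union of connected conjunctive queries, and $\KB \models \Q$ iff $\KB$ entails each conjunct. The number of conjuncts is bounded by $2^{q(n)}$ for some polynomial $q$, and each conjunct has size polynomial in $n$, so we obtain at most $2^{q(n)}$ entailment subproblems each of the form ``$\KB \models \Q'$'' for a union of connected conjunctive queries $\Q'$ with $|\Q'| \le q(n)$.

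Second, for each such subproblem, I would apply Lemma~\ref{lem:UNA}: $\KB \not\models \Q'$ without UNA iff there exists some $\AB$-partition $\KB^{\Pmc}$, $\Q^{\Pmc}$ with $\KB^{\Pmc} \not\models \Q^{\Pmc}$ under UNA. The number of partitions of $\indA$ is at most $2^{|\AB|} \le 2^{m}$, each partitioned knowledge base has size at most $m$, and each partitioned query has size polynomial in $n$. Hence, for every conjunct of the CNF, we obtain at most $2^m$ UNA-entailment subproblems that can be enumerated in time $2^m \cdot \mn{poly}(m,n)$.

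Third, for each partitioned pair $(\KB^{\Pmc}, \Q^{\Pmc})$, I would invoke Lemma~\ref{lem:combinedcomplexity}, which decides UNA entailment of a union of connected conjunctive queries in deterministic time $2^{p'(m)\, 2^{p'(n)}}$ for some polynomial $p'$. The overall deterministic running time is therefore bounded by
\[
  2^{q(n)} \cdot 2^{m} \cdot 2^{p'(m)\, 2^{p'(n)}},
\]
which is absorbed into $2^{p(m)\, 2^{p(n)}}$ by choosing $p$ sufficiently larger than $p'$ (e.g.\ $p(x) \ge p'(x) + x + q(x) + 1$). I do not expect any genuine obstacle here: the theorem is essentially a bookkeeping composition of the three lemmas, and the only mildly delicate point is confirming that the multiplicative overheads $2^{q(n)}$ (from CNF blow-up) and $2^m$ (from enumerating ABox partitions) get swallowed by the already double-exponential bound from Lemma~\ref{lem:combinedcomplexity}.
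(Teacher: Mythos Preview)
Your proposal is correct and follows essentially the same route as the paper: it combines Lemma~\ref{lem:combinedcomplexity} (the bound under UNA for connected queries) with Lemma~\ref{lem:UNA} (enumerating $\AB$-partitions to drop UNA) and Lemma~\ref{lem:cnf} (CNF transformation to drop connectedness), and then observes that the multiplicative overheads are absorbed by the double-exponential bound. One very minor imprecision: the number of set partitions of $\indA$ is a Bell number, which can exceed $2^{\card{\indA}}$, but it is still bounded by $2^{p(m)}$ for a polynomial $p$ (e.g.\ via $\card{\indA}^{\card{\indA}}$), so this does not affect your final bound; the paper is equally loose on this point.
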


A corresponding lower bound follows from the work by \citeA{Lutz07a}. 
Hence the above result is tight. The result improves the known 
co-3\NExpTime upper bound for the setting where the roles in the query 
are restricted to simple ones \cite{OrCE06b}. 

\begin{corollary}
  Let \KB be a \SHIQ knowledge base with $m = |\KB|$ and \Q a union of 
  Boolean conjunctive queries with $n = |\Q|$. Deciding whether $\KB 
  \models \Q$ is a 2\ExpTime-complete problem. 
\end{corollary}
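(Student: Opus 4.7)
The plan is to obtain the corollary by combining the upper bound established in Theorem~\ref{thm:combinedcomplexity} with a matching lower bound from the literature, observing that both bounds meet at the 2\ExpTime level.

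For the upper bound, I would simply invoke Theorem~\ref{thm:combinedcomplexity}, which gives a deterministic algorithm running in time $2^{p(m) 2^{p(n)}}$ for some polynomial $p$, where $m=|\KB|$ and $n=|\Q|$. To see that this is indeed in 2\ExpTime in the combined input size $N:=m+n$, note that $p(m)\cdot 2^{p(n)} \leq 2^{q(N)}$ for a suitable polynomial $q$ (since $p(m)\leq 2^{p(m)}$ and products of exponentials remain exponential), and hence $2^{p(m)2^{p(n)}} \leq 2^{2^{q(N)}}$, which is the defining bound for 2\ExpTime. Thus conjunctive query entailment in \SHIQ is in 2\ExpTime with respect to combined complexity.

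For the matching lower bound, I would cite the result of \citeA{Lutz07a}, which establishes 2\ExpTime-hardness of conjunctive query entailment already for the Description Logic \ALCI. Since \ALCI is a syntactic fragment of \SHIQ (it arises by disallowing role hierarchies, transitive roles, and qualified number restrictions), every \ALCI knowledge base is a \SHIQ knowledge base and every \ALCI conjunctive query is a \SHIQ conjunctive query, with identical semantics. A Boolean conjunctive query is trivially a (singleton) union of Boolean conjunctive queries, so the hardness transfers directly to our setting without any reduction effort.

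Combining the upper bound of Theorem~\ref{thm:combinedcomplexity} with this lower bound yields the claimed 2\ExpTime-completeness. There is no real technical obstacle here: all the work has been done, and this step is essentially bookkeeping. The only minor subtlety worth spelling out is the parsing of the bound $2^{p(m)2^{p(n)}}$ as a 2\ExpTime bound when $m$ and $n$ are both regarded as parts of the input, which is the short calculation indicated above.
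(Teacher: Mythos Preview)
Your proposal is correct and follows essentially the same approach as the paper: the paper derives the corollary directly from Theorem~\ref{thm:combinedcomplexity} for the upper bound and cites \citeA{Lutz07a} for the matching 2\ExpTime lower bound in \ALCI, exactly as you do. Your additional remarks---the short calculation parsing $2^{p(m)2^{p(n)}}$ as a 2\ExpTime bound in the combined input size, and the observation that \ALCI is a syntactic fragment of \SHIQ so hardness transfers---are correct elaborations that the paper leaves implicit.
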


Regarding query answering, we refer back to the end of 
Section~\ref{pre:sect:cqs}, where we explain that deciding which 
tuples belong to the set of answers can be checked with at most 
$m_\AB^k$ entailment tests, where $k$ is the number of answer 
variables in the query and $m_\AB$ is the number of individual names in 
\indA. Hence, at least theoretically, this is absorbed by the combined 
complexity of query entailment in \SHIQ.

\subsection{A Non-Deterministic Decision Procedure for Query Entailment 
in \SHIQ}\label{sect:nondeterministic}

In order to study the data complexity of query entailment, we devise a 
non-deterministic decision procedure which provides a tight bound for 
the complexity of the problem. Actually, the devised algorithm decides 
non-entailment of queries: we guess an extended knowledge base 
$\KB_\Q$, check whether it is consistent, and return ``\KB does not 
entail \Q'' if the check succeeds and ``\KB entails \Q'' otherwise. 

\begin{definition}\label{def:algonondeterministic}
  Let \TB be a \SHIQ TBox, \RB a \SHIQ role hierarchy, and \Q a union of 
  Boolean conjunctive queries. Given a \SHIQ ABox \AB as input, the 
  algorithm guesses an \AB-partition $\KB^\Pmc = (\TB, \RB, \AB^\Pmc)$ 
  and $\Q^\Pmc$ w.r.t.\ \KB = (\TB, \RB, \AB) and \Q. The query 
  $\Q^\Pmc$ is then transformed into CNF and one of the resulting 
  conjuncts, say $\Q_i^\Pmc$, is chosen. The algorithm then guesses an 
  extended knowledge base $\KB_{\Q_i}^\Pmc = (\TB \cup \TB_{\Q_i}, \RB, 
  \AB^\Pmc \cup \AB^\Pmc_{\Q_i})$ w.r.t.\ $\KB^\Pmc$ and $\Q_i^\Pmc$ and 
  returns ``\KB does not entail \Q'' if $\KB_{\Q_i}^\Pmc$ is consistent 
  and it returns ``\KB entails \Q'' otherwise.
\end{definition}

Compared to the deterministic version of the algorithm given in 
Definition~\ref{def:algodeterministic}, we do not make the UNA but guess 
a partition of the individual names. We also non-deterministically 
choose one of the conjuncts that result from the transformation into 
CNF. For this conjunct, we guess an extended ABox and check whether the 
extended knowledge base for the guessed ABox is consistent and, 
therefore, a counter-model for the query entailment. 

In its (equivalent) negated form, Lemma~\ref{lem:correctness} says that 
$\KB \not\models \Q$ iff there is an extended knowledge base $\KB_\Q$ 
w.r.t.\ \KB and $\Q$ such that $\KB_\Q$ is consistent. Together with 
Lemma~\ref{lem:UNA} it follows, therefore, that the algorithm from 
Definition~\ref{def:algonondeterministic} is correct.

\subsubsection{Data Complexity of Query Entailment in \SHIQ}
\label{sect:datacomplexity}

We now analyze the data complexity of the algorithm given in 
Definition~\ref{def:algonondeterministic} and show that deciding UCQ 
entailment in \SHIQ is indeed in co-\NPclass for data complexity. 

\begin{theorem}\label{thm:datacomplexity}
  Let \TB be a \SHIQ TBox, \RB a \SHIQ role hierarchy, and \Q a union of 
  Boolean conjunctive queries. Given a \SHIQ ABox \AB with $m_a = 
  |\AB|$, the algorithm from Definition~\ref{def:algonondeterministic} 
  decides in non-deterministic polynomial time in $m_a$ whether $\KB 
  \not\models \Q$ for \KB = (\TB, \RB, \AB). 
\end{theorem}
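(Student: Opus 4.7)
The plan is as follows. Correctness of the algorithm has already been assembled: in its contrapositive form, Lemma~\ref{lem:correctness} says that $\KB \not\models \Q$ under the UNA iff some extended knowledge base is consistent, Lemma~\ref{lem:UNA} eliminates the UNA assumption at the cost of guessing an \AB-partition, and Lemma~\ref{lem:cnf} justifies focusing on a single conjunct of the CNF of $\Q^\Pmc$. Hence only the complexity of the nondeterministic guesses and of the final consistency check needs to be analysed.

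Because $\TB$, $\RB$ and $\Q$ are fixed in the data-complexity setting, every object whose size depends only on them is of constant size, so the dependency on $m_a$ enters solely through the ABox and the partition. Guessing a partition $\Pmc$ of $\indA$ takes $O(m_a \log m_a)$ bits and produces $\AB^\Pmc$ and $\Q^\Pmc$ in polynomial time; transforming $\Q^\Pmc$ into CNF and selecting a conjunct $\Q_i^\Pmc$ yields an object of constant size modulo the encoding of individual names. Invoking Lemma~\ref{lem:counting} with the constant $n = |\Q|$ gives $\card{\trees{\Q_i^\Pmc}}, \card{\groundings{\Q_i^\Pmc}} \leq 2^{p(n) \cdot \log p(m)}$, which is polynomial in $m = |\KB^\Pmc|$ and therefore in $m_a$; moreover every query produced has size polynomial in $n$, hence constant. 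Consequently $\TB_{\Q_i}$ contains a constant number of fixed-size GCIs, $\AB^\Pmc_{\Q_i}$ contains at most polynomially many fixed-size negated assertions, and the full extended knowledge base $\KB^\Pmc_{\Q_i}$ has size polynomial in $m_a$ and can be guessed in polynomial time.

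It then remains to show that consistency of $\KB^\Pmc_{\Q_i}$ can be decided in nondeterministic polynomial time in $m_a$. I would apply the translation of Definition~\ref{def:translation} to obtain an equisatisfiable \ALCQIb knowledge base by Lemma~\ref{lem:tr}. Since all role conjunctions in $\KB^\Pmc_{\Q_i}$ originate from the query-dependent parts $\TB_{\Q_i}$ and $\AB^\Pmc_{\Q_i}$, the longest role conjunction has length bounded by a constant depending only on $\Q$ and $\RB$, so the auxiliary axioms contributed by Definition~\ref{def:translation} add only constant-size overhead and the translated KB is of size polynomial in $m_a$. This is the main technical obstacle: without the observation that role conjunctions come only from the fixed query parts, Lemma~\ref{lem:shiqrconsistency} would yield only a single-exponential bound, which is useless here. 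Given a polynomial-size \ALCQIb KB whose TBox is of constant size and whose ABox is polynomial in $m_a$, consistency is decidable in nondeterministic polynomial time in the ABox size by the standard data-complexity arguments for \SHIQ, matching the co-\NPclass instance retrieval bound of \citeA{HuMS05a}. Composing the polynomial-time guesses with this polynomial-time consistency check yields the claimed \NPclass data-complexity upper bound for $\KB \not\models \Q$.
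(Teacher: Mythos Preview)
Your argument follows the paper's route: correctness assembled from Lemmas~\ref{lem:correctness}, \ref{lem:UNA}, and~\ref{lem:cnf}; polynomial bounds on the guessed objects via Lemma~\ref{lem:counting} with $n$ fixed; and a final appeal to the data complexity of the underlying consistency check.

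Two points deserve tightening. First, the step ``consequently $\TB_{\Q_i}$ contains a \emph{constant} number of fixed-size GCIs'' is not a consequence of the bound $2^{p(n)\cdot\log p(m)}$ you just quoted, which only gives polynomially many in $m_a$. The claim is nevertheless true, but for a reason you do not state: the set $\trees{\Q_i^\Pmc}$ depends only on $\Q$ and $\RB$, never on the ABox, so with both fixed it is genuinely constant. This constancy is load-bearing---if the extended TBox were merely polynomial in $m_a$, your final data-complexity appeal would collapse, since you would then be measuring combined rather than data complexity of the consistency check. Second, for that last step the paper invokes Tobies' \ALCQIb algorithm directly and observes that, with the TBox fixed, its running time is polynomial in the ABox; you instead cite the \SHIQ result of Hustadt et~al., which strictly speaking concerns a different logic than the one produced by the translation of Definition~\ref{def:translation}. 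The paper's reference is the more precise one here.
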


Clearly, the size of an ABox $\AB^\Pmc$ in an \AB-partition is bounded 
by $m_a$. Since the query is no longer an input, its size is constant 
and the transformation to CNF can be done in constant time. We then 
non-deterministically choose one of the resulting conjuncts. Let this 
conjunct be $\Q_i = \Q_{(i, 1)} \vee \ldots \vee \Q_{(i, \ell)}$. As 
established in Lemma~\ref{thm:combinedcomplexity}, the maximal size of 
an extended ABox $\AB^\Pmc_{\Q_i}$ is polynomial in $m_a$. Hence, 
$|\AB^\Pmc \cup \AB^\Pmc_{\Q_i}| \leq p(m_a)$ for some polynomial $p$. 
Due to Lemma~\ref{lem:counting} and since the size of \Q, \TB, and \RB 
is fixed by assumption, the sets $\trees[\KB^\Pmc]{\Q_{(i, j)}}$ and 
$\groundings[\KB^\Pmc]{\Q_{(i, j)}}$ for each $j$ such that $1 \leq j 
\leq \ell$ can be computed in time polynomial in $m_a$. From 
Lemma~\ref{lem:combinedcomplexity}, we know that the translation of an 
extended knowledge base into an \ALCQIb knowledge base is polynomial in 
$m_a$ and a close inspection of the algorithm by \citeA{Tobi01a} for 
deciding consistency of an \ALCQIb knowledge base shows that its runtime 
is also polynomial in $m_a$.

The bound given in Theorem~\ref{thm:datacomplexity} is tight since the 
data complexity of conjunctive query entailment is already 
co-\NPclass-hard for the \ALE fragment of \SHIQ \cite{Scha93a}.

\begin{corollary}
  Conjunctive query entailment in \SHIQ is data complete for 
  co-\NPclass.
\end{corollary}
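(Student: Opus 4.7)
The plan is to establish the corollary by combining matching upper and lower bounds, since \emph{data complete for co-\NPclass} amounts to both co-\NPclass membership and co-\NPclass-hardness with respect to data complexity.

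For the upper bound, I would simply appeal to Theorem~\ref{thm:datacomplexity}, which is already proved just above: it gives a non-deterministic polynomial-time algorithm (in $m_a = |\AB|$) that decides non-entailment $\KB \not\models \Q$ when the TBox, role hierarchy and query are fixed. This places the complement of the entailment problem in \NPclass, which is exactly membership of entailment in co-\NPclass with respect to data complexity.

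For the matching lower bound, I would invoke the result of \citeA{Scha93a}, already cited in the paragraph preceding the corollary, stating that conjunctive query entailment is co-\NPclass-hard in data complexity already for the DL \ALE. The remaining observation is essentially syntactic: \ALE is a syntactic fragment of \SHIQ (it contains only $\exists r.C$, $\forall r.C$, $\sqcap$, atomic negation, and $\top/\bot$, all of which are \SHIQ constructors), and conjunctive queries are defined identically in both settings. Hence every instance of the \ALE data-complexity problem is, verbatim, an instance of the \SHIQ data-complexity problem, giving a trivial log-space reduction from the former to the latter. This transfers the co-\NPclass-hardness to \SHIQ.

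There is no real obstacle here, since both ingredients have already been assembled: the upper bound is an immediate corollary of Theorem~\ref{thm:datacomplexity}, and the lower bound is transferred from the sub-logic \ALE by a trivial syntactic inclusion. The only thing to be slightly careful about is that \citeA{Scha93a}'s hardness result is stated for conjunctive query entailment (or instance checking with complex concepts) in precisely the form we need, so that it matches the problem we defined in Section~\ref{pre:sect:cqs}; modulo that sanity check the two bounds meet and the corollary follows.
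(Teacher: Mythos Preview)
Your proposal is correct and matches the paper's own argument essentially verbatim: the upper bound is read off Theorem~\ref{thm:datacomplexity}, and the lower bound is inherited from Schaerf's co-\NPclass-hardness result for \ALE via the syntactic inclusion $\ALE \subseteq \SHIQ$, exactly as the paragraph preceding the corollary indicates.
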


Due to the correspondence between query containment and query answering 
\cite{CaDL98a}, the algorithm can also be used to decide containment 
of two unions of conjunctive queries over a \SHIQ knowledge base, which 
gives the following result:

\begin{corollary}
  Given a \SHIQ knowledge base \KB and two unions of conjunctive queries 
  $\Q$ and $\Q'$, the problem whether $\KB \models \Q \subseteq \Q'$ is 
  decidable.
\end{corollary}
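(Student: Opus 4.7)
The plan is to reduce query containment to UCQ entailment via the standard ``freezing'' construction, and then appeal to Theorem~\ref{thm:combinedcomplexity} for decidability. Write $\Q = \Q_1 \vee \ldots \vee \Q_n$. First I would observe that $\KB \models \Q \subseteq \Q'$ iff $\KB \models \Q_i \subseteq \Q'$ for each $i$ with $1 \leq i \leq n$: a counter-model witnessing failure of containment satisfies some disjunct $\Q_i$ but none of the disjuncts of $\Q'$, and conversely a counter-model for some $\Q_i \subseteq \Q'$ is a counter-model for the whole containment. Thus it suffices to decide containment of a single conjunctive query in a UCQ.

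Next, fix an $i$ and let $\vars{\Q_i} = \{x_1, \dots, x_k\}$. Introduce pairwise distinct fresh individual names $a_{x_1}, \dots, a_{x_k}$, none of which occur in \KB, in $\Q_i$, or in $\Q'$, and let $\Q_i^{\mn{fr}}$ denote the ground query obtained from $\Q_i$ by substituting each $x_j$ with $a_{x_j}$. Regarding the atoms of $\Q_i^{\mn{fr}}$ as ABox assertions (they are of the correct syntactic form, being literal concept assertions $C(a)$, role assertions $r(a,b)$, or equality atoms handled by a merging step), set $\AB_i := \AB \cup \Q_i^{\mn{fr}}$ and $\KB_i := (\TB, \RB, \AB_i)$. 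I claim that
\[
  \KB \models \Q_i \subseteq \Q' \quad \mbox{iff} \quad \KB_i \models \Q'.
\]

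For the ``only if'' direction, any model \J of $\KB_i$ is in particular a model of \KB, and the assertions in $\Q_i^{\mn{fr}}$ directly provide an evaluation $\pi(x_j) := \IntJ{a_{x_j}}$ witnessing $\J \models \Q_i$; containment then forces $\J \models \Q'$. For the ``if'' direction, assume $\KB_i \models \Q'$ and suppose \I is a model of \KB with $\I \models^\pi \Q_i$. Expand \I to an interpretation \J over the same domain by setting $\IntJ{a_{x_j}} := \pi(x_j)$ for $1 \leq j \leq k$ while leaving the interpretation of all other symbols unchanged; since the $a_{x_j}$ are fresh, this does not disturb \KB and, by construction, \J satisfies every assertion in $\Q_i^{\mn{fr}}$, so $\J \models \KB_i$. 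The hypothesis gives $\J \models \Q'$, and since $\Q'$ does not mention the $a_{x_j}$ its truth in \J coincides with its truth in \I, hence $\I \models \Q'$.

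The main (and only nontrivial) subtlety is handling equality atoms $x \approx x'$ inside $\Q_i$, which after freezing become assertions $a_x \approx a_{x'}$ that are not part of the \SHIQ ABox syntax; this is dealt with by merging the corresponding frozen names before building $\AB_i$, i.e.\ choosing a single representative for each $\sapprox$-class of $\vars{\Q_i}$, which preserves both directions of the equivalence above. Once the reduction is in place, decidability of $\KB_i \models \Q'$ is immediate from Theorem~\ref{thm:combinedcomplexity}, and iterating over the $n$ disjuncts of $\Q$ yields a decision procedure for $\KB \models \Q \subseteq \Q'$, completing the proof.
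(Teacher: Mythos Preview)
Your proposal is correct and follows precisely the standard freezing reduction of query containment to query entailment that the paper invokes; the paper itself does not spell out a proof but simply appeals to the known correspondence from \cite{CaDL98a} and to the decidability of UCQ entailment established earlier. Your write-up is thus a faithful elaboration of what the paper leaves implicit, including the careful handling of equality atoms via representative merging.
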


By using the result of \citeA[Thm.~11]{Rosa06a}, we further show that 
the consistency of a \SHIQ knowledge base extended with (weakly-safe) 
Datalog rules is decidable.

\begin{corollary}
  The consistency of $\SHIQ$+$log$-KBs (both under FOL semantics and 
  under NM semantics) is decidable.
\end{corollary}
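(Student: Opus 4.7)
The plan is to simply chain together two results: our main decidability theorem for UCQ entailment in \SHIQ and Rosati's reduction of \SHIQ+\emph{log} consistency to UCQ entailment. No new technical work is required, so the ``proof'' is really just a pointer-and-apply argument, and there is no significant obstacle to overcome.

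First I would recall the content of \citeA[Thm.~11]{Rosa06a}, which was already mentioned in the introduction and at the end of Section~\ref{sect:rw}: it establishes the biconditional that the consistency problem for a \SHIQ knowledge base extended with weakly-safe Datalog rules (under both the FOL and the non-monotonic semantics Rosati considers) is decidable \emph{iff} entailment of unions of conjunctive queries over \SHIQ knowledge bases is decidable. The reduction direction relevant to us goes from \SHIQ+\emph{log} consistency to a polynomial-size UCQ entailment instance over the underlying \SHIQ KB; we do not need to reconstruct that reduction here, only invoke it.

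Next I would appeal to Theorem~\ref{thm:combinedcomplexity}, which shows that, given a \SHIQ knowledge base \KB and a union of Boolean conjunctive queries \Q, the problem of deciding whether $\KB \models \Q$ can be solved in deterministic time $2^{p(m) 2^{p(n)}}$ and is in particular decidable. Together with Rosati's equivalence, this immediately yields decidability of \SHIQ+\emph{log} consistency under both semantics, establishing the corollary. Since Rosati's reduction is the same for the FOL and NM semantics (the two are related by his prior results to the same UCQ entailment problem), a single application covers both cases stated in the corollary.

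The only thing one might want to double-check is that the notion of ``union of conjunctive queries'' used by Rosati coincides with the one defined in Section~\ref{pre:sect:cqs} of the present paper (Boolean UCQs in the standard FOL sense, possibly using transitive roles in query atoms); this is indeed the case, since Rosati's open problem was explicitly about UCQs with transitive roles over \SHIQ. Hence no adjustment of the query language is needed, and the corollary follows.
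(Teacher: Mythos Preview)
Your proposal is correct and matches the paper's own justification: the corollary is obtained simply by combining Rosati's Theorem~11, which reduces \SHIQ+\emph{log} consistency (under both semantics) to UCQ entailment over \SHIQ, with the decidability of UCQ entailment established in Theorem~\ref{thm:combinedcomplexity}. The paper offers no further argument beyond this, so your write-up is, if anything, slightly more detailed than the original.
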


\section{Conclusions}\label{sect:conclusions}

With the decision procedure presented for entailment of unions of 
conjunctive queries in \SHIQ, we close a long standing open problem. The 
solution has immediate consequences on related areas, as it shows that 
several other open problems such as query answering, query containment 
and the extension of a knowledge base with weakly safe Datalog rules for 
\SHIQ are decidable as well. Regarding combined complexity, we present a 
deterministic algorithm that needs time single exponential in the size 
of the KB and double exponential in the size of the query, which gives a 
tight upper bound for the problem. This result shows that deciding 
conjunctive query entailment is strictly harder than instance checking 
for \SHIQ. We further prove co-\NPclass-completeness for data 
complexity. Interestingly, this shows that regarding data complexity 
deciding UCQ entailment is (at least theoretically) not harder than 
instance checking for \SHIQ, which was also a previously open question. 

It will be part of our future work to extend this procedure to \SHOIQ, 
which is the DL underlying OWL DL. We will also attempt to find more 
implementable algorithms for query answering in \SHIQ. Carrying out the 
query rewriting steps in a more goal directed way will be crucial to 
achieving this.

\section*{Acknowledgments}
This work was supported by the EU funded IST-2005-7603 FET Project 
Thinking Ontologies (TONES). Birte Glimm was supported by an EPSRC 
studentship.

\appendix

\clearpage 

\section{Complete Proofs}\label{sect:proofs}

\begin{lemmaapp}[\ref{lem:canonicalcountermodels}]
  Let \KB be a \SHIQ knowledge base and $\Q = \Q_1 \vee \ldots \vee 
  \Q_n$ a union of conjunctive queries, then $\KB \not\models \Q$ iff 
  there exists a canonical model \I of \KB such that $\I \not\models 
  \Q$.
\end{lemmaapp}
  
\begin{proof}[Proof of Lemma~\ref{lem:canonicalcountermodels}]
  The ``if'' direction is trivial.
  
  For the ``only if'' direction, 
  since an inconsistent knowledge base entails every query, we can 
  assume that \KB is consistent. Hence, there is an interpretation $\I' 
  = \inter[\I']$ such that $\I' \models \KB$ and $\I' \not\models \Q$. 
  From $\I'$, we construct a canonical model \I for \KB and its forest 
  base \Jmc as follows: we define the set $P \subseteq 
  {(\Delta^{\I'})^*}$ of \emph{paths} to be the smallest set such that
  \begin{itemize}
    \item 
      for all $a \in \indA, a^{\I'}$ is a path;
    \item 
      $d_1 \cdots d_n \cdot d$ is a path, if  
      \begin{itemize}
        \item $d_1 \cdots d_n$ is a path, 
        \item $(d_n, d) \in r^{\I'}$ for some role $r$, 
        \item if there is an $a \in \indA$ such that $d = a^{\I'}$, then 
              $n > 2$. 
      \end{itemize} 
  \end{itemize}
  For a path $p = d_1 \cdots d_n$, the \emph{length} $\len{p}$ of $p$ is 
  $n$. Now fix a set $S \subseteq \indA \times \Nbbm^*$ and a bijection 
  $f \colon S \to P$ such that  
  \begin{enumerate}
    \renewcommand\theenumi{(\roman{enumi})}
    \renewcommand\labelenumi{\theenumi}
    \item \label{it:root}
      $\indA \times \{\varepsilon\} \subseteq S$, 
    \item 
      for each $a \in \indA, \{w \mid (a, w) \in S\}$ is a tree,
    \item 
      $f((a, \varepsilon)) = a^{\I'}$,
    \item\label{it:nonroot}
      if $(a, w), (a, w') \in S$ with $w'$ a successor of $w$, then 
      $f((a, w')) = f((a, w)) \cdot d$ for some $d \in 
      \Delta^{\I'}$.
  \end{enumerate}
  For all $(a, w) \in S$, set $\tail{(a, w)} := d_n$ if $f((a, w)) = d_1 
  \cdots d_n$. Now, define a forest base \Jmc = \inter[\Jmc] for \KB as 
  follows:
  \begin{enumerate}
    \renewcommand\theenumi{(\alph{enumi})}
    \renewcommand\labelenumi{\theenumi}
    \renewcommand\theenumii{(\Roman{enumii})}
    \renewcommand\labelenumii{\theenumii}
    \item \label{it:dom} 
      $\Delta^\Jmc := S$;
    \item \label{it:a} 
      for each $a \in \indA$, $a^\Jmc := (a, \varepsilon) \in S$;
    \item \label{it:b}
      for each $b \in \NI \setminus \indA$, $b^\Jmc = a^\Jmc$ for 
      some fixed $a \in \indA$;
    \item \label{it:C}
      for each $C \in \NC, (a, w) \in C^\Jmc$ if $(a, w) \in S$ and 
      $\tail{(a, w)} \in C^{\I'}$;
    \item \label{it:r}
      For all roles $r$, $((a, w), (b, w')) \in r^\Jmc$ if either 
      \begin{enumerate}
        \item
          $w = w' = \varepsilon$ and $(a^{\I'}, b^{\I'}) \in r^{\I'}$ or 
        \item
          $a = b$, $w'$ is a neighbor of $w$ and $(\tail{(a, w)}, 
          \tail{(b, w')}) \in r^{\I'}$.
      \end{enumerate}
  \end{enumerate}
  
  It is clear that \Jmc is a forest base for \KB due to the definition 
  of $S$ and the construction of \Jmc from $S$.
  
  Let \I = \inter be an interpretation that is identical to \Jmc except 
  that, for all non-simple roles $r$, we set
  $$
     \Int{r} = r^\Jmc \cup \bigcup_{s \ssssR r, \; s \in \transR} 
     (s^\Jmc)^+
  $$
  It is tedious but not too hard to verify that $\I \models \KB$ and 
  that \Jmc is a forest base for \I. Hence \I is a canonical model for 
  \KB. 
  
  Therefore, we only have to show that $\I \not\models \Q$. Assume to 
  the contrary that $\I \models \Q$. Then there is some $\pi$ and $i$ 
  with $1 \leq i \leq n$ such that $\I \models^\pi \Q_i$. We now define 
  a mapping $\pi' \colon \terms{\Q_i} \to \Delta^{\I'}$ by setting 
  $\pi'(t) := \tail{\pi(t)}$ for all $t \in \terms{\Q_i}$. It is not 
  difficult to check that $\I' \models^{\pi'} \Q_i$ and hence $\I' 
  \models^{\pi'} \Q$, which is a contradiction.
\end{proof}

\begin{lemmaapp}[\ref{lem:collapsing}]
  Let \I be a model for \KB. 
  \begin{enumerate}
    \item
      If $\I \models \Q$, then there is a collapsing $\Q_{co}$ of \Q 
      such that $\I \models^{\pi_{co}} \Q_{co}$ for $\pi_{co}$ an 
      injection modulo \sapprox. 
    \item
      If $\I \models^{\pi_{co}} \Q_{co}$ for a collapsing $\Q_{co}$ of 
      \Q, then $\I \models \Q$.
  \end{enumerate}
\end{lemmaapp}

\begin{proof}[Proof of Lemma~\ref{lem:collapsing}]
  For (\ref{it:querycollapsing}), let $\pi$ be such that $\I \models^\pi 
  \Q$, let $\Q_{co}$ be the collapsing of \Q that is obtained by adding 
  an atom $t \approx t'$ for all terms $t, t' \in \terms{\Q}$ for which 
  $\pi(t) = \pi(t')$. By definition of the semantics, $\I \models^\pi 
  \Q_{co}$ and $\pi$ is an injection modulo \sapprox. 
  
  Condition (\ref{it:collapsingquery}) trivially holds since $\Q 
  \subseteq \Q_{co}$ and hence $\I \models^{\pi_{co}} \Q$.
\end{proof}

\begin{lemmaapp}[\ref{lem:splitrewriting}]
  Let \I be a model for \KB. 
  \begin{enumerate}
    \item
      If \I is canonical and $\I \models^\pi \Q$, then there is a pair 
      $(\Q_{sr}, \roots) \in \srK{\Q}$ and a split match $\pi_{sr}$ such 
      that $\I \models^{\pi_{sr}} \Q_{sr}$, \roots is the induced root 
      splitting of $\pi_{sr}$, and $\pi_{sr}$ is an injection modulo 
      $\sapprox$.
    \item
      If $(\Q_{sr}, \roots) \in \srK{\Q}$ and $\I \models^{\pi_{sr}} 
      \Q_{sr}$ for some match $\pi_{sr}$, then $\I \models \Q$.
  \end{enumerate}
\end{lemmaapp}

\begin{proof}[Proof of Lemma~\ref{lem:splitrewriting}]
  The proof of the second claim is relatively straightforward: since 
  $(\Q_{sr}, \roots) \in \srK{\Q}$, there is a collapsing $\Q_{co}$ of 
  \Q such that $\Q_{sr}$ is a split rewriting of $\Q_{co}$. Since all 
  roles replaced in a split rewriting are non-simple and $\I \models 
  \Q_{sr}$ by assumption, we have that $\I \models \Q_{co}$. By 
  Lemma~\ref{lem:collapsing} (2), we then have that $\I \models \Q$ as 
  required.
  
  We go through the proof of the first claim in more detail: let 
  $\Q_{co}$ be in $\co{\Q}$ such that $\I \models^{\pi_{co}} \Q_{co}$ 
  for a match $\pi_{co}$ that is injective modulo \sapprox. Such a 
  collapsing $\Q_{co}$ and match $\pi_{co}$ exist due to 
  Lemma~\ref{lem:collapsing}. If $\pi_{co}$ is a split match w.r.t.\ \Q 
  and \I already, we are done, since a split match induces a root 
  splitting \roots and $(\Q_{co}, \roots)$ is trivially in $\srK{\Q}$. 
  If $\pi_{co}$ is not a split match, there are at least two terms $t, 
  t'$ with $r(t, t') \din \Q_{co}$ such that $\pi_{co}(t) = (a, w), 
  \pi_{co}(t') = (a', w'), a \neq a'$, and $w \neq \varepsilon$ or $w' 
  \neq \varepsilon$. We distinguish two cases:
  \begin{enumerate}
    \item 
      Both $t$ and $t'$ are not mapped to roots, i.e., $w \neq 
      \varepsilon$ and $w' \neq \varepsilon$. Since $\I 
      \models^{\pi_{co}} r(t, t')$, we have that $(\pi_{co}(t), 
      \pi_{co}(t')) \in \Int{r}$. Since \I is a canonical model for \KB, 
      there must be a role $s$ with $s \sssR r$ and $s \in \transR$ such 
      that 
      $$
      \{(\pi_{co}(t), (a, \varepsilon)), \; ((a, \varepsilon), (a', 
      \varepsilon)), \; ((a', \varepsilon), \pi_{co}(t'))\} \subseteq 
      \Int{s}.
      $$ 
      If there is some $\hat{t} \in \terms{\Q_{co}}$ such that 
      $\pi_{co}(\hat{t}) = (a, \varepsilon)$, then let $u = \hat{t}$, 
      otherwise let $u$ be a fresh variable. Similarly, if there is some 
      $\hat{t'} \in \terms{\Q_{co}}$ such that $\pi_{co}(\hat{t'}) = 
      (a', \varepsilon)$, then let $u' = \hat{t'}$, otherwise let $u'$ 
      be a fresh variable. Hence, we can define a split rewriting 
      $\Q_{sr}$ of $\Q_{co}$ by replacing $r(t, t')$ with $s(t, 
      u)$, $s(u, u')$, and $s(u', t')$. We then define a new 
      mapping $\pi_{sr}$ that agrees with $\pi_{co}$ on all terms 
      that occur in $\Q_{co}$ and that maps $u$ to $(a, 
      \varepsilon)$ and $u'$ to $(a', \varepsilon)$. 
    \item
      Either $t$ or $t'$ is mapped to a root. W.l.o.g., let this be $t$, 
      i.e., $\pi(t) = (a, \varepsilon)$. We can use the same arguments 
      as above: since $\I \models^{\pi_{co}} r(t, t')$, we have that 
      $(\pi(t), \pi(t')) \in \Int{r}$ and, since \I is a canonical model 
      for \KB, there must be a role $s$ with $s \sssR r$ and $s \in 
      \transR$ such that $\{(\pi(t), (a', \varepsilon)), \; ((a', 
      \varepsilon), \pi(t'))\} \subseteq \Int{s}$. If there is some 
      $\hat{t} \in \terms{\Q_{co}}$ such that $\pi_{co}(\hat{t}) = (a', 
      \varepsilon)$, then let $u = \hat{t}$, otherwise let $u$ be a 
      fresh variable. We then define a split rewriting $\Q_{sr}$ of 
      $\Q_{co}$ by replacing $r(t, t')$ with $s(t, u), s(u, t')$and a 
      mapping $\pi_{sr}$ that agrees with $\pi_{co}$ on all terms that 
      occur in $\Q_{co}$ and that maps $u$ to $(a', \varepsilon)$. 
  \end{enumerate}
  It immediately follows that $\I \models^{\pi_{sr}} \Q_{sr}$. 
  We can proceed as described above for each role atom $r(t, t')$ for 
  which $\pi(t) = (a, w)$ and $\pi(t') = (a', w')$ with $a \neq a'$ and 
  $w \neq \varepsilon$ or $w' \neq \varepsilon$. This will result in a 
  split rewriting $q_{sr}$ and a split match $\pi_{sr}$ such that $\I 
  \models^{\pi_{sr}} \Q_{sr}$. Furthermore, $\pi_{sr}$ is injective 
  modulo \sapprox since we only introduce new variables, when the 
  variable is mapped to an element that is not yet in the range of the 
  match. Since $\pi_{sr}$ is a split match, it induces a root splitting 
  \roots and, hence, $(\Q_{sr}, \roots) \in \srK{\Q}$ as required.
\end{proof}

\begin{lemmaapp}[\ref{lem:looprewriting}]
  Let \I be a model of \KB. 
  \begin{enumerate}
    \item
      If \I is canonical and $\I \models \Q$, then there is a pair 
      $(\Q_{\ell r}, \roots) \in \lrK{\Q}$ and a mapping $\pi_{\ell r}$ 
      such that $\I \models^{\pi_{\ell r}} \Q_{\ell r}$, $\pi_{\ell r}$ 
      is an injection modulo $\sapprox$, \roots is the root splitting 
      induced by $\pi_{\ell r}$ and, for each $r(t, t) \din \Q_{\ell 
      r}$, $t \in \roots$.
    \item 
      If $(\Q_{\ell r}, \roots) \in \lrK{\Q}$ and $\I 
      \models^{\pi_{\ell r}} \Q_{\ell r}$ for some match $\pi_{\ell r}$, 
      then $\I \models \Q$.
  \end{enumerate}
\end{lemmaapp}

\begin{proof}[Proof of Lemma~\ref{lem:looprewriting}]
  The proof of (2) is analogous to the one given in
  Lemma~\ref{lem:splitrewriting} since, by definition of loop 
  rewritings, all roles replaced in a loop rewriting are again 
  non-simple. 
  
  For (1), let $(\Q_{sr}, \roots) \in \srK{\Q}$ be such that $\I 
  \models^{\pi_{sr}} \Q_{sr}$, $\pi_{sr}$ is a split match, and \roots 
  is the root splitting induced by $\pi_{sr}$. Such a split rewriting 
  $\Q_{sr}$ and match $\pi_{sr}$ exist due to 
  Lemma~\ref{lem:splitrewriting} and the canonicity of \I. 
  
  Let $r(t, t) \din \Q_{sr}$ for $t \notin \roots$. Since \roots is the 
  root splitting induced by $\pi_{sr}$ and since $t \notin \roots$, 
  $\pi_{sr}(t) = (a, w)$ for some $a \in \indA$ and $w \neq 
  \varepsilon$. Now, let \Jmc be a forest base for \I. We show that 
  there exists a neighbor $d$ of $\pi_{sr}(t)$ and a role $s \in 
  \transR$ such that $s \sssR r$ and $(\pi_{sr}(t), d) \in   \Int{s} 
  \cap \Int{\inv{s}}$. Since $\I \models^{\pi_{sr}} \Q_{sr}$, we have 
  $(\pi_{sr}(t), \pi_{sr}(t)) \in \Int{r}$. Since \Jmc is a forest base 
  and since $w \neq \varepsilon$, we have $(\pi_{sr}(t), \pi_{sr}(t)) 
  \notin r^\Jmc$. It follows that there is a sequence $d_1, \dots, d_n 
  \in \dom$ and a role $s \in \transR$ such that $s \sssR r$, $d_1 = 
  \pi_{sr}(t) = d_n$, and $(d_i, d_{i+1}) \in s^\Jmc$ for $1 \leq i < n$ 
  and $d_i \neq d_1$ for each $i$ with $1 < i < n$. Then it is not hard 
  to see that, because $\{w' \mid (a, w') \in \dom\}$ is a tree and $w 
  \neq \varepsilon$, we have $d_2 = d_{n-1}$. Since $(d_1, d_2) \in 
  s^\Jmc$ and $(d_{n-1}, d_n) \in s^\Jmc$ with $d_{n-1} = d_2$ and $d_n 
  = d_1$, the role $s$ and the element $d = d_2$ is as required. For 
  each $r(t, t) \din \Q_{sr}$ with $t \notin \roots$, select an element 
  $d_{r, t}$ and a role $s_{r, t}$ as described above. Now let $\Q_{\ell 
  r}$ be obtained from $\Q_{sr}$ by doing the following for each $r(t, 
  t) \din \Q_{sr}$ with $t \notin \roots$:
  \begin{itemize}
    \item 
      if $d_{r, t} = \pi_{sr}(t')$ for some $t' \in \terms{\Q_{sr}}$, 
      then replace $r(t, t)$ with $s_{r, t}(t, t')$ and $s_{r, t}(t', 
      t)$;
    \item 
      otherwise, introduce a new variable $v_{r, t} \in \NV$ and replace 
      $r(t, t)$ with $s_{r, t}(t, v_{r, t})$ and $s_{r, t}(v_{r, t}, 
      t)$.
  \end{itemize}
  Let $\pi_{\ell r}$ be obtained from $\pi_{sr}$ by extending it with 
  $\pi_{\ell r}(v_{r, t}) = d_{r, t}$ for each newly introduced variable 
  $v_{r, t}$. By definition of $q_{\ell r}$ and $\pi_{\ell r}$, 
  $q_{\ell r}$ is connected, $\pi_{\ell r}$ is injective modulo 
  \sapprox, and $\I \models^{\pi_{\ell r}} \Q_{\ell r}$. 
\end{proof}

\begin{lemmaapp}[\ref{lem:forestrewriting}]
  Let \I be a model of \KB. 
  \begin{enumerate}
    \item
      If \I is canonical and $\I \models \Q$, then there is a pair 
      $(\Q_{fr}, \roots) \in \frK{\Q}$ such that $\I \models^{\pi_{fr}} 
      \Q_{fr}$ for a forest match $\pi_{fr}$, \roots is the induced root 
      splitting of $\pi_{fr}$, and $\pi_{fr}$ is an injection modulo 
      $\sapprox$.
    \item 
      If $(\Q_{fr}, \roots) \in \frK{\Q}$ and $\I \models^{\pi_{fr}} 
      \Q_{fr}$ for some match $\pi_{fr}$, then $\I \models \Q$.
  \end{enumerate}
\end{lemmaapp}

\begin{proof}[Proof of Lemma~\ref{lem:forestrewriting}]
  The proof of (2) is again analogous to the one given in 
  Lemma~\ref{lem:splitrewriting}. For (1), let $(\Q_{\ell r}, \roots) 
  \in \lrK{\Q}$ be such that $\I \models^{\pi_{\ell r}} \Q_{\ell r}$, 
  \roots is the root splitting induced by $\pi_{\ell r}$, $\pi_{\ell r}$ 
  is injective modulo $\sapprox$ and, for each $r(t, t) \din \Q_{\ell 
  r}$,  $t \in \roots$. Such a loop rewriting and match $\pi_{\ell r}$ 
  exist due to Lemma~\ref{lem:looprewriting} and the canonicity of \I. 
  By definition, \roots is a root splitting w.r.t.\ $\Q_{\ell r}$ and 
  \KB. 
  
  For $w, w' \in \Nbbm^*$, the \emph{longest common prefix} (LCP) of $w, 
  w'$ is the longest $w^* \in \Nbbm^*$ such that $w^*$ is prefix of both 
  $w$ and $w'$. For the match $\pi_{\ell r}$ we now define the set $D$ 
  as follows:
  $$
  \begin{array}{l l}
    D := \range{\pi_{\ell r}} \cup \{(a, w) \in \dom \mid & w \mbox{ is 
          the LCP of some } w, w' \\
    & \mbox{with } (a, w'), (a, w'') \in \range{\pi_{\ell r}}\}.
  \end{array}
  $$
  Let $V \subseteq \NV \setminus \vars{\Q_{\ell r}}$ be such that, for 
  each $d \in D \setminus \range{\pi_{\ell r}}$, there is a unique $v_d 
  \in V$. We now define a mapping $\pi_{fr}$ as $\pi_{\ell r} \cup \{v_d 
  \in V \mapsto d\}$. By definition of $V$ and $v_d$, $\pi_{fr}$ is a 
  split match as well. The set $V \cup \vars{\Q_{\ell r}}$ will be the 
  set of variables for the new query $\Q_{fr}$. Note that 
  $\range{\pi_{fr}} = D$.
  \begin{description}
    \item[Fact (a)]\label{it:facta} 
      if $(a, w), (a, w') \in \range{\pi_{fr}}$, then $(a, w'') \in 
      \range{\pi_{fr}}$, where $w''$ is the LCP of $w$ and $w'$;
    \item[Fact (b)]\label{it:factb} 
      $\card{V} \leq \card{\vars{\Q_{\ell r}}}$ (Because, in the worst 
      case, all $(a, w)$ in $\range{\pi_{\ell r}}$ are ``incomparable'' 
      and can thus be seen as leaves of a binarily branching tree. Now, 
      a tree that has $n$ leaves and is at least binarily branching at 
      every non-leaf has at most $n$ inner nodes, and thus $\card{V} 
      \leq \card{\vars{q_{\ell r}}}$.
  \end{description}  
  For a pair of individuals $d, d' \in \dom$, the \emph{path} from $d$ 
  to $d'$ is the (unique) shortest sequence of elements $d_1, \dots, d_n 
  \in \dom$ such that $d_1 = d$, $d_n = d'$, and $d_{i+1}$ is a neighbor 
  of $d_i$ for all $1 \leq i < n$. The \emph{length of a path} is 
  the number of elements in it, i.e., the path $d_1, \ldots, d_n$ is of 
  length $n$. The \emph{relevant path} $d_1', \dots, d_\ell'$ from $d$ 
  to $d'$ is the sub-sequence of $d_1, \dots, d_n$ that is obtained by 
  dropping all elements $d_i \notin D$. 
  \\[2mm]
  {\bf Claim 1}. Let $r(t, t') \din \sq{\Q_{\ell r}, t_r}$ for some $t_r 
  \in \roots$ and let $d_1', \ldots, d_\ell'$ be the relevant path from 
  $d = d_1' = \pi_{\ell r}(t)$ to $d' = d_\ell' = \pi_{\ell r}(t')$. If 
  $\ell > 2$, there is a role $s \in \transR$ such that $s \sssR r$ and 
  $(d_i', d_{i + 1}') \in \Int{s}$ for all $1 \leq i < \ell$. 
  \\[2mm]
  \emph{Proof}. Let $d_1, \dots, d_n$ be the path and $d_1', \ldots, 
  d_\ell'$ the relevant path from $\pi_{\ell r}(t)$ to $\pi_{\ell 
  r}(t')$. Then $\ell > 2$ implies $n > 2$. We have to show that there 
  is a role $s$ as in the claim. Let \Jmc be a forest base for \I. Since 
  $\I \models^{\pi_{\ell r}} \Q_{\ell r}$, $n > 2$ implies $(\pi_{\ell 
  r}(t), \pi_{\ell r}(t')) \in \Int{r} \setminus r^\Jmc$. Since \I is 
  based on \Jmc, it follows that there is an $s \in \transR$ such that 
  $s \sssR r$, and $(d_i, d_{i + 1}) \in s^\Jmc$ for all $1 \leq i < n$. 
  By construction of \I from \Jmc, it follows that $(d_i', d_{i + 1}') 
  \in \Int{s}$ for all $1 \leq i < \ell$, which finishes the proof of 
  the claim.
  
  Now let $\Q_{fr}$ be obtained from $\Q_{\ell r}$ as follows: for each 
  role atom $r(t, t) \din \sq{\Q_{\ell r}, t_r}$ with $t_r \in \roots$, 
  if the length of the relevant path $d_1', \ldots, d_\ell'$ from $d = 
  d_1' = \pi_{\ell r}(t)$ to $d' = d_\ell' = \pi_{\ell r}(t')$ is 
  greater than $2$, then select a role $s$ and variables $t_j \in D$ 
  such that $\pi_{fr}(t_j) = d_j'$ as in Claim~1 and replace the 
  atom $r(t, t')$ with $s(t_1, t_2), \ldots, s(t_{\ell - 1}, t_\ell)$, 
  where $t = t_1$, $t' = t_\ell$. Please note that these $t_j$ can be 
  chosen in a ``don't care'' non-deterministic way since $\pi_{fr}$ is 
  injective modulo \sapprox, i.e., if $\pi_{fr}(t_j) = d_j = 
  \pi_{fr}(t_j')$, then $t_j \sapprox t_j'$ and we can pick any of 
  these. 
  
  We now have to show that 
  \begin{enumerate}
    \renewcommand{\theenumi}{(\roman{enumi})}
    \renewcommand\labelenumi{\theenumi}
    \item\label{it:models}
      $\I \models^{\pi_{fr}} \Q_{fr}$, and
    \item\label{it:forestmatch}
      $\pi_{fr}$ is a forest match.
  \end{enumerate}
  For \ref{it:models}, let $r(t, t') \din \Q_{\ell r} \setminus \Q_{fr}$ 
  and let $s(t_1, t_2), \ldots, s(t_{\ell - 1}, t_\ell)$ be the atoms 
  that replaced $r(t, t')$. Since $\I \models^{\pi_{\ell r}} \Q_{\ell 
  r}$, $\I \models^{\pi_{\ell r}} r(t, t')$ and $(\pi_{\ell r}(t), 
  \pi_{\ell r}(t')) \in \Int{r}$. Since $r(t, t')$ was replaced in 
  $\Q_{fr}$, the length of the relevant path from $\pi_{\ell r}(t)$ to 
  $\pi_{\ell r}(t')$ is greater than $2$. Hence, it must be the case 
  that $(\pi_{\ell r}(t), \pi_{\ell r}(t')) \in \Int{r} \setminus 
  r^\Jmc$. Let $d_1, \ldots, d_n$ with $d_1 = \pi_{\ell r}(t)$ and $d_n 
  = \pi_{\ell r}(t')$ be the path from $\pi_{\ell r}(t)$ to $\pi_{\ell 
  r}(t')$ and $d_1', \dots, d_\ell'$ the relevant path from $\pi_{\ell 
  r}(t)$ to $\pi_{\ell r}(t')$. By construction of \I from \Jmc, this 
  means that there is a role $s \in \transR$ such that $s \sssR r$ and 
  $(d_i, d_{i+1}) \in s^\Jmc$ for all $1 \leq i < n$. Again by 
  construction of \I, this means $(d_i', d_{i+1}') \in \Int{s}$ for $1 
  \leq i < \ell$ as required. Hence $\I \models^{\pi_{fr}} s(t_i, 
  t_{i+1})$ for each $i$ with $\ \leq i < \ell$ by definition of 
  $\pi_{fr}$. 
  
  For \ref{it:forestmatch}: the mapping $\pi_{fr}$ differs from 
  $\pi_{\ell r}$ only for the newly introduced variables. Furthermore, 
  we only introduced new role atoms within a sub-query $\sq{\Q_{\ell r}, 
  t_r}$ and $\pi_{\ell r}$ is a split match by assumption. Hence, 
  $\pi_{fr}$ is trivially a split match and we only have to show that 
  $\pi_{fr}$ is a forest match. Since $\pi_{fr}$ is a split match, we 
  can do this ``tree by tree''. 
  
  For each $a \in \indA$, let $T_a := \{w \mid (a, w) \in 
  \range{\pi_{fr}}\}$. We need to construct a mapping $f$ as specified 
  in Definition~\ref{def:matches}, and we start with its root $t_r$. If 
  $T_a \neq \emptyset$, let $t_r \in \terms{\Q}$ be the unique term such 
  that $\pi_{fr}(t_r) = (a, w_r)$ and there is no $t \in \terms{\Q}$ 
  such that $\pi_{fr}(t) = (a, w)$ and $w$ is a proper prefix of $w_r$. 
  Such a term exists since $\pi_{fr}$ is a split match and it is unique 
  due to Fact (a) above. Define a \emph{trace} to be a sequence $\bar{w} 
  = w_1 \cdots w_n \in T_a^+$ such that
  \begin{itemize}
    \item 
      $w_1 = w_r$;
    \item 
      for all $1 \leq i < n$, $w_i$ is the longest proper prefix of 
      $w_{i + 1}$.
  \end{itemize}
  Since \I is canonical, each $w_i \in T_a$ is in $\Nbbm$. It is not 
  hard to see that $T = \{\bar{w} \mid \bar{w} \mbox{ is a trace}\} \cup 
  \{\varepsilon\}$ is a tree. For a trace $\bar{w} = w_1 \cdots w_n$, 
  let $\tail{\bar{w}} = w_n$. Define a mapping $f$ that maps each term 
  $t$ with $\pi_{fr}(t) = (a, w) \in T_a$ to the unique trace 
  $\bar{w}_{t}$ such that $w = \tail{\bar{w}_{t}}$. Let $r(t, t') \in 
  \Q_{fr}$ such that $\pi_{fr}(t), \pi_{fr}(t') \in T_a$. By 
  construction of $\Q_{fr}$, this implies that the length of the 
  relevant path from $\pi_{fr}(t)$ to $\pi_{fr}(t')$ is exactly $2$. 
  Thus, $f(t)$ and $f(t')$ are neighbors in $T$ and, hence, $\pi_{fr}$ 
  is a forest match as required. 
\end{proof}

\begin{theoremapp}[\ref{thm:union}]
  Let \KB be a \SHIQ knowledge base, \Q a Boolean conjunctive query, and 
  $\{\Q_1, \ldots, \Q_\ell\} = \trees{\Q} \cup \groundings{\Q}$. Then 
  $\KB \models \Q$ iff $\KB \models \Q_1 \vee \ldots \vee \Q_\ell$.
\end{theoremapp}

\begin{proof}[Proof of Theorem~\ref{thm:union}]
  For the ``if'' direction: let us assume that $\KB \models \Q_1 \vee 
  \ldots \vee \Q_\ell$. Hence, for each model \I of \KB, there is a 
  query $\Q_i$ with $1 \leq i \leq \ell$ such that $\I \models \Q_i$. We 
  distinguish two cases: (i) $\Q_i \in \trees{\Q}$ and (ii) $\Q_i \in 
  \groundings{\Q}$. 
  
  For (i): $\Q_i$ is of the form $C(v)$ where $C$ is the query concept 
  for some query $\Q_{fr}$ w.r.t.\ $v \in \vars{\Q_{fr}}$ and $(\Q_{fr}, 
  \emptyset) \in \frK{\Q}$. Hence $\I \models^\pi \Q_i$ for some match 
  $\pi$, and thus $\I \models^\pi C(v)$. Let $d \in \dom$ with $d = 
  \pi(v) \in \Int{C}$. By Lemma~\ref{lem:conceptsandqueries}, we then 
  have that $\I \models \Q_{fr}$ and, by 
  Lemma~\ref{lem:forestrewriting}, we then have that $\I \models \Q$ as 
  required. 
  
  For (ii): since $\Q_i \in \groundings{\Q}$, there is some pair 
  $(\Q_{fr}, \roots) \in \frK{\Q}$ such that $\Q_i = \grounding{\Q_{fr}, 
  \roots,  \tau}$. We show that $\I \models^{\pi_{fr}} \Q_{fr}$ for some 
  match $\pi_{fr}$. Since $\I \models \Q_1$, there is a match $\pi_i$ 
  such that $\I \models^{\pi_i} \Q_i$. We now construct the match 
  $\pi_{fr}$. For each $t \in \roots$, $\Q_i$ contains a concept atom 
  $C(\tau(t))$ where $C = \con{\sq{\Q_{fr}, t}, t}$ is the query concept 
  of \sq{\Q_{fr}, t} w.r.t.\ $t$. Since $\I \models^{\pi_i} C(\tau(t))$ 
  and by Lemma~\ref{lem:conceptsandqueries}, there is a match $\pi_t$ 
  such that $\I \models^{\pi_t} \sq{\Q_{fr}, t}$. We now define 
  $\pi_{fr}$ as the union of $\pi_t$, for each $t \in \roots$. Please 
  note that $\pi_{fr}(t) = \pi_i(\tau(t))$. Since $\inds{\Q_{fr}} 
  \subseteq \roots$ and $\tau$ is such that, for each $a \in 
  \inds{\Q_{fr}}$, $\tau(a) = a$ and $\tau(t) = \tau(t')$ iff $t 
  \sapprox t'$, it follows that $\I \models^{\pi_{fr}} at$ for each atom 
  $at \din \Q_{fr}$ such that $at$ contains only terms from the root 
  choice \roots and hence $\I \models^{\pi_{fr}} \Q_{fr}$ as required. 
  
  For the ``only if'' direction we have to show that, if $\KB \models 
  \Q$, then $\KB \models \Q_1 \vee \ldots \vee \Q_\ell$, so let us 
  assume that $\KB \models \Q$. By 
  Lemma~\ref{lem:canonicalcountermodels} in its negated form we have 
  that $\KB \models \Q$ iff all canonical models \I of \KB are such that 
  $\I \models \Q$. Hence, we can restrict our attention to the canonical 
  models of \KB. By Lemma~\ref{lem:forestrewriting}, $\I \models \KB$ 
  and $\I \models \Q$ implies that there is a pair $(\Q_{fr}, \roots) 
  \in \frK{\Q}$ such that $\I \models^{\pi_{fr}} \Q_{fr}$ for a forest 
  match $\pi_{fr}$, \roots is the induced root splitting of $\pi_{fr}$, 
  and $\pi_{fr}$ is an injection modulo \sapprox. We again distinguish 
  two cases: 
  \begin{description}
    \item[(i)] 
      $\roots = \emptyset$, i.e., the root splitting is empty and 
      $\pi_{fr}$ is a tree match, and 
    \item[(ii)]
      $\roots \neq \emptyset$, i.e., the root splitting is non-empty and 
      $\pi_{fr}$ is a forest match but not a tree match. 
  \end{description}
  For (i): since $(\Q_{fr}, \emptyset) \in \frK{\Q}$, there is some 
  $v \in \terms{\Q_{fr}}$ such that $C = \con{\Q_{fr}, v}$ and $\Q_i = 
  C(v)$. By Lemma~\ref{lem:conceptsandqueries} and, since $\I \models 
  \Q_{fr}$, there is an element $d \in \dom$ such that $d \in \Int{C}$. 
  Hence $\I \models^\pi C(v)$ with $\pi : v \mapsto d$ as required. 
  
  For (ii): since \roots is the root splitting induced by $\pi_{fr}$, 
  for each $t \in \roots$ there is some $a_t \in \indA$ such that 
  $\pi_{fr}(t) = (a_t, \varepsilon)$. We now define the mapping $\tau 
  \colon \roots \to \indA$ as follows: for each $t \in \roots$, $\tau(t) 
  = a_t$ iff $\pi_{fr}(t) = (a_t, \varepsilon)$. By definition of 
  $\grounding{\Q_{fr}, \roots, \tau}$, $\Q_i = \grounding{\Q_{fr}, 
  \roots, \tau} \in \groundings{\Q}$. Since $\I \models^{\pi_{fr}} 
  \Q_{fr}$, $\I \models \sq{\Q_{fr}, t}$ for each $t \in \roots$. Since 
  $\Q_{fr}$ is forest-shaped, each $\sq{\Q_{fr}, t}$ is tree-shaped. 
  Then, by Lemma~\ref{lem:conceptsandqueries}, $\I \models \Q_i'$, where 
  $\Q_i'$ is the query obtained from $\Q_{fr}$ by replacing each 
  sub-query $\sq{\Q_{fr}, t}$ with $C(t)$ for $C$ the query concept of 
  $\sq{\Q_{fr}, t}$ w.r.t.\ $t$. By definition of $\tau$ from the forest 
  match $\pi_{fr}$, it is clear that $\I \models \grounding{\Q_{fr}, 
  \roots, \tau}$ as required. 
\end{proof}

\begin{lemmaapp}[\ref{lem:counting}]
  Let \Q be a Boolean conjunctive query, \KBDef a \SHIQ knowledge base, 
  $|\Q| := n$ and $|\KB| := m$. Then there is a polynomial $p$ such that
  \begin{enumerate}
    \item
      $\card{\co{\Q}} \leq 2^{p(n)}$ and, for each $\Q' \in \co{\Q}$, 
      $|\Q'| \leq p(n)$, 
    \item
      $\card{\srK{\Q}} \leq 2^{p(n) \cdot \log p(m)}$, and, for 
      each $\Q' \in \srK{\Q}$, $|\Q'| \leq p(n)$, 
    \item
      $\card{\lrK{\Q}} \leq 2^{p(n) \cdot \log p(m)}$, and, for 
      each $\Q' \in \lrK{\Q}$, $|\Q'| \leq p(n)$, 
    \item
      $\card{\frK{\Q}} \leq 2^{p(n) \cdot \log p(m)}$, and, for 
      each $\Q' \in \frK{\Q}$, $|\Q'| \leq p(n)$, 
    \item
      $\card{\trees{\Q}} \leq 2^{p(n) \cdot \log p(m)}$, and, for 
      each $\Q' \in \trees{\Q}$, $|\Q'| \leq p(n)$, and 
    \item
      $\card{\groundings{\Q}} \leq 2^{p(n) \cdot \log p(m)}$, and, 
      for each $\Q' \in \groundings{\Q}$, $|\Q'| \leq p(n)$.
  \end{enumerate}
\end{lemmaapp}

\begin{proof}[Proof of Lemma~\ref{lem:counting}]
  $\mbox{ }$
  \begin{enumerate}
    \item
      The set $\co{\Q}$ contains those queries obtained from \Q by 
      adding at most $n$ equality atoms to \Q. The number of collapsings 
      corresponds, therefore, to building all equivalence classes over 
      the terms in \Q by \sapprox. Hence, the cardinality of the set 
      \co{\Q} is at most exponential in $n$. Since we add at most one 
      equality atom for each pair of terms, the size of a query $\Q' \in 
      \co{\Q}$ is at most $n + n^2$, and $|\Q'|$ is, therefore, 
      polynomial in $n$. 
    \item 
      For each of the at most $n$ role atoms, we can choose to do 
      nothing, replace the atom with two atoms, or with three atoms. 
      For every replacement, we can choose to introduce a new variable 
      or re-use one of the existing variables. If we introduce a new 
      variable every time, the new query contains at most $3n$ terms. 
      Since \KB can contain at most $m$ non-simple roles that are a 
      sub-role of a role used in role atoms of \Q, we have at most $m$ 
      roles to choose from when replacing a role atom. Overall, this 
      gives us at most $1 + m(3n) + m(3n)(3n)$ choices for each of the 
      at most $n$ role atoms in a query and, therefore, the number of 
      split rewritings for each query $\Q' \in \co{\Q}$ is polynomial in 
      $m$ and exponential in $n$. In combination with the results from 
      (1), this also shows that the overall number of split rewritings 
      is polynomial in $m$ and exponential in $n$.

      Since we add at most two new role atoms for each of the existing 
      role atoms, the size of a query $\Q' \in \srK{\Q}$ is linear in 
      $n$.
    \item
      There are at most $n$ role atoms of the form $r(t, t)$ in a query 
      $\Q' \in \srK{\Q}$ that could give rise to a loop rewriting, at 
      most $m$ non-simple sub-roles of $r$ in \KB that can be used in 
      the loop rewriting, and we can introduce at most one new variable 
      for each role atom $r(t, t)$. Therefore, for each query in 
      $\srK{\Q}$, the number of loop rewritings is again polynomial in 
      $m$ and exponential in $n$. Combined with the results from (2), 
      this bound also holds for the cardinality of \lrK{\Q}. 
      
      In a loop rewriting, one role atom is replaced with two role 
      atoms, hence, the size of a query $\Q' \in \lrK{\Q}$ at most 
      doubles.
    \item
      We can use similar arguments as above in order to derive a bound 
      that is exponential in $n$ and polynomial in $m$ for the number of 
      forest rewritings in \frK{\Q}. 
      
      Since the number of role atoms that we can introduce in a forest 
      rewriting is polynomial in $n$, the size of each query $\Q' \in 
      \frK{\Q}$ is at most quadratic in $n$. 
    \item
      The cardinality of the set \trees{\Q} is clearly also polynomial 
      in $m$ and exponential in $n$ since each query in \frK{\Q} can 
      contribute at most one query to the set \trees{\Q}. It is not hard 
      to see that the size of a query $\Q' \in \trees{\Q}$ is polynomial 
      in $n$.
    \item 
      By (1)-(4) above, the number of terms in a root splitting is 
      polynomial in $n$ and there are at most $m$ individual names 
      occurring in \AB that can be used for the mapping $\tau$ 
      from terms to individual names. Hence the number of 
      different ground mappings $\tau$ is at most polynomial in $m$ and 
      exponential in $n$. The number of ground queries that a single 
      tuple $(\Q_{fr}, \roots) \in \frK{\Q}$ can contribute is, 
      therefore, also at most polynomial in $m$ and exponential in $n$. 
      Together with the bound on the number of forest rewritings from 
      (4), this shows that the cardinality of \groundings{\Q} is 
      polynomial in $m$ and exponential in $n$. Again it is not hard to 
      see that the size of each query $\Q' \in \groundings{\Q}$ is 
      polynomial in $n$. 
  \end{enumerate}
\end{proof}

\begin{lemmaapp}[\ref{lem:correctness}]
  Let \KB be a \SHIQ knowledge base and $\Q$ a union of connected 
  Boolean conjunctive queries. The algorithm from 
  Definition~\ref{def:algodeterministic} answers ``\KB entails \Q'' iff 
  $\KB \models \Q$ under the unique name assumption.
\end{lemmaapp}

\begin{proof}[Proof of Lemma~\ref{lem:correctness}]
  For the ``only if''-direction: let $\Q = \Q_1 \vee \ldots \vee 
  \Q_\ell$. We show the contrapositive and assume that $\KB 
  \not\models \Q$. We can assume that \KB is consistent 
  since an inconsistent knowledge base trivially entails every query. 
  Let \I be a model of \KB such that $\I \not\models \Q$. We show that 
  \I is also a model of some extended knowledge base $\KB_\Q = (\TB \cup 
  \TB_\Q, \RB, \AB \cup \AB_\Q)$. We first show that \I is a model of 
  $\TB_\Q$. To this end, let $\top \sqsubseteq \neg C$ in $\TB_\Q$. Then 
  $C(v) \in T$ and $C = \con{\Q_{fr}, v}$ for some pair $(\Q_{fr}, 
  \emptyset) \in \frK{\Q_1} \cup \ldots \cup \frK{\Q_\ell}$ and $v \in 
  \vars{\Q_{fr}}$. Let $i$ be such that $(\Q_{fr}, \emptyset) \in 
  \frK{\Q_i}$. Now $\Int{C} \neq \emptyset$ implies, by 
  Lemma~\ref{lem:conceptsandqueries}, that $\I \models \Q_{fr}$ and, by 
  Lemma~\ref{lem:forestrewriting}, $\I \models \Q_i$ and, hence, $\I 
  \models \Q$, contradicting our assumption. Thus $\I \models \top 
  \sqsubseteq \neg C$ and, thus, $\I \models \TB_\Q$. 
  
  Next, we define an extended ABox $\AB_\Q$ such that, for each $\Q' \in 
  G$,
  \begin{itemize}
    \item 
      if $C(a) \in \Q'$ and $\Int{a} \in \Int{\neg C}$, then $\neg C(a) 
      \in \AB_\Q$;
    \item 
      if $r(a, b) \in \Q'$ and $(\Int{a}, \Int{b}) \notin \Int{r}$, 
      then $\neg r(a, b) \in \AB_\Q$.
  \end{itemize}
  Now assume that we can have a query $\Q' = \grounding{\Q_{fr}, \roots, 
  \tau} \in \groundings{\Q_1} \cup \ldots \cup \groundings{\Q_\ell}$ 
  such that there is no atom $at \in \Q'$ with $\neg at \in \AB_\Q$. 
  Then trivially $\I \models \Q'$. Let $i$ be such that $(\Q_{fr}, 
  \roots) \in \frK{\Q_i}$. By Theorem~\ref{thm:union}, $\I \models \Q_i$ 
  and thus $\I \models \Q$, which is a contradiction. Hence $\KB_\Q$ is 
  an extended knowledge base and $\I \models \KB_\Q$ as required. 
  
%
  
  For the ``if''-direction, we assume that $\KB \models \Q$, but the 
  algorithm answers ``\KB does not entail \Q''. Hence there is an 
  extended knowledge base $\KB_\Q = (\TB \cup \TB_\Q, \RB, \AB \cup 
  \AB_\Q)$ that is consistent, i.e., there is a model \I such that $\I 
  \models \KB_\Q$. Since $\KB_\Q$ is an extension of \KB, $\I \models 
  \KB$. Moreover, we have that $\I \models \TB_\Q$ and hence, for each 
  $d \in \dom$, $d \in \Int{\neg C}$ for each $C(v) \in \trees{\Q_1} 
  \cup \ldots \cup \trees{\Q_\ell}$. By 
  Lemma~\ref{lem:conceptsandqueries}, we then have that $\I \not\models 
  \Q'$ for each $\Q' \in \trees{\Q_1} \cup \ldots \cup \trees{\Q_\ell}$ 
  and, by Lemma~\ref{lem:forestrewriting}, $\I \not\models \Q_i$ for 
  each $i$ with $1 \leq i \leq \ell$. 
  
  By definition of extended knowledge bases, $\AB_\Q$ contains an 
  assertion $\neg at$ for at least one atom $at$ in each query $\Q' = 
  \grounding{\Q_{fr}, \roots, \tau}$ from $\groundings{\Q_1} \cup \ldots 
  \cup \groundings{\Q_\ell}$. Hence $\I \not\models \Q'$ for each $\Q' 
  \in \groundings{\Q_1} \cup \ldots \cup \groundings{\Q_\ell}$. Then, by 
  Theorem~\ref{thm:union}, $\I \not\models \Q$, which contradicts our 
  assumption.
\end{proof}

\begin{lemmaapp}[\ref{lem:uparrow}]
  Let \RB be a role hierarchy, and $r_1, \ldots, r_n$ roles. For 
  every interpretation \I such that $\I \models \RB$, it holds that 
  $\Int{(\up{r_1 \sqcap \ldots \sqcap r_n})} = \Int{(r_1 \sqcap \ldots 
  \sqcap r_n)}$.
\end{lemmaapp}

\begin{proof}[Proof of Lemma~\ref{lem:uparrow}]
  The proof is a straightforward extension of Lemma~6.19 by 
  \citeA{Tobi01a}. By definition, $\up{r_1 \sqcap \ldots \sqcap r_n} = 
  \up{r_1} \sqcap \ldots \sqcap \up{r_n}$ and, by definition of the 
  semantics of role conjunctions, we have that $\Int{(\up{r_1} \sqcap 
  \ldots \sqcap \up{r_n})} = \Int{\up{r_1}} \cap \ldots \cap 
  \Int{\up{r_n}}$. If $s \sssR r$, then $\{s' \mid r \sssR s'\} 
  \subseteq \{s' \mid s \sssR s'\}$ and hence $\Int{\up{s}} \subseteq 
  \Int{\up{r}}$. If $\I \models \RB$, then $\Int{r} \subseteq \Int{s}$ 
  for every $s$ with $r \sssR s$. Hence, $\Int{\up{r}} = \Int{r}$ and 
  $\Int{(\up{r_1 \sqcap \ldots \sqcap r_n})} = \Int{(\up{r_1} \sqcap 
  \ldots \sqcap \up{r_n})} = \Int{\up{r_1}} \cap \ldots \cap 
  \Int{\up{r_n}} = \Int{r_1} \cap \ldots \cap \Int{r_n} = \Int{(r_1 
  \sqcap \ldots \sqcap r_n)}$ as required.
\end{proof}


\begin{lemmaapp}[\ref{lem:shiqrconsistency}]
  Given a \SHIQR knowledge base \KBDef where $m := |\KB|$ and the size 
  of the longest role conjunction is $n$, we can decide consistency of 
  \KB in deterministic time $2^{p(m) 2^{p(n)}}$ with $p$ a polynomial.
\end{lemmaapp}

\begin{proof}[Proof of Lemma~\ref{lem:shiqrconsistency}]
  We first translate \KB into an \ALCQIb knowledge base $\tr{\KB} = 
  (\tr{\TB}, \tr{\AB})$. Since the longest role conjunction is of size 
  $n$, the cardinality of each set $\mn{tc}(R, \RB)$ for a role 
  conjunction $R$ is bounded by $m^n$. Hence, the TBox $\tr{\TB}$ can 
  contain exponentially many axioms in $n$. It is not hard to check 
  that the size of each axiom is polynomial in $m$. Since deciding 
  whether an \ALCQIb KB is consistent is an \ExpTime-complete problem 
  (even with binary coding of numbers) \cite[Theorem 4.42]{Tobi01a}, 
  the   consistency of $\tr{\KB}$ can be checked in time $2^{p(m) 
  2^{p(n)}}$. 
\end{proof}

\begin{lemmaapp}[\ref{lem:combinedcomplexity}]
  Let \KBDef be a \SHIQ knowledge base with $m := |\KB|$ and \Q a union 
  of connected Boolean conjunctive queries with $n := |\Q|$. The 
  algorithm given in Definition~\ref{def:algodeterministic} decides 
  whether $\KB \models \Q$ under the unique name assumption in 
  deterministic time in $2^{p(m) 2^{p(n)}}$. 
\end{lemmaapp}

\begin{proof}[Proof of Lemma~\ref{lem:combinedcomplexity}]
  We first show that there is some polynomial $p$ such that we have to 
  check at most $2^{p(m) 2^{p(n)}}$ extended knowledge bases for 
  consistency and then that each consistency check can be done in time 
  $2^{p(m) 2^{p(n)}}$, which gives an upper bound of $2^{p(m) 2^{p(n)}}$ 
  on the time needed for deciding whether $\KB \models \Q$. 
  
  Let $\Q := \Q_1 \vee \ldots \vee \Q_\ell$. Clearly, we can use $n$ as 
  a bound for $\ell$, i.e., $\ell \leq n$. Moreover, the size of each 
  query $\Q_i$ with $1 \leq i \leq \ell$ is bounded by $n$. Together 
  with Lemma~\ref{lem:counting}, we get that $\card{T}$ and $\card{G}$ 
  are bounded by $2^{p(n) \cdot \log p(m)}$ for some polynomial $p$ and 
  it is clear that the sets can be computed in this time bound as well. 
  The size of each query $\Q' \in G$ w.r.t.\ an ABox \AB is polynomial 
  in $n$ and, when constructing $\AB_\Q$, we can add a subset of 
  (negated) atoms from each $\Q' \in G$ to $\AB_\Q$. Hence, there are at 
  most $2^{p(m) 2^{p(n)}}$ extended ABoxes $\AB_\Q$ and, therefore, 
  $2^{p(m) 2^{p(n)}}$ extended knowledge bases that have to be tested 
  for consistency. 
  
  Due to Lemma~\ref{lem:counting} (\ref{it:trees}), the size of each 
  query $\Q' \in T$ is polynomial in $n$. Computing a query concept 
  $C_{\Q'}$ of $\Q'$ w.r.t.\ some variable $v \in \vars{\Q'}$ can be 
  done in time polynomial in $n$. Thus the TBox $\TB_\Q$ can be computed 
  in time $2^{p(n) \cdot \log p(m)}$. The size of an extended ABox 
  $\AB_\Q$ is maximal if we add, for each of the $2^{p(n) \cdot \log 
  p(m)}$ ground queries in $G$, all atoms in their negated form. Since, 
  by Lemma~\ref{lem:counting} (\ref{it:groundings}), the size of these 
  queries is polynomial in $n$, the size of each extended ABox $\AB_\Q$ 
  is bounded by $2^{p(n) \cdot \log p(m)}$ and it is clear that we can 
  compute an extended ABox in this time bound as well. Hence, the size 
  of each extended KB $\KB_\Q = (\TB \cup \TB_\Q, \RB, \AB \cup \AB_\Q)$ 
  is bounded by $2^{p(n) \cdot \log p(m)}$. Since role conjunctions 
  occur only in $\TB_\Q$ or $\AB_\Q$, and the size of each concept in 
  $\TB_\Q$ and $\AB_\Q$ is polynomial in $n$, the length of the 
  longest role conjunction is also polynomial in $n$. 

  When translating an extended knowledge base into an \ALCQIb knowledge 
  base, the number of axioms resulting from each concept $C$ that occurs 
  in $\TB_\Q$ or $\AB_\Q$ can be exponential in $n$. Thus, the size of 
  each extended knowledge base is bounded by $2^{p(n) \cdot \log p(m)}$.

  Since deciding whether an \ALCQIb knowledge base is consistent is an 
  \ExpTime-complete problem (even with binary coding of numbers) 
  \cite[Theorem 4.42]{Tobi01a}, it can be checked in time $2^{p(m) 
  2^{p(n)}}$ if \KB is consistent or not.

  Since we have to check at most $2^{p(m) 2^{p(n)}}$ knowledge 
  bases for consistency, and each check can be done in time $2^{p(m) 
  2^{p(n)}}$, we obtain the desired upper bound of $2^{p(m) 2^{p(n)}}$ 
  for deciding whether $\KB \models \Q$. 
\end{proof}

\begin{lemmaapp}[\ref{lem:UNA}]
  Let \KBDef be a \SHIQ knowledge base and \Q a union of Boolean 
  conjunctive queries. $\KB \not\models \Q$ without making the unique 
  name assumption iff there is an \AB-partition $\KB^\Pmc = (\TB, \RB, 
  \AB^\Pmc)$ and $\Q^\Pmc$ w.r.t.\ \KB and \Q such that $\KB^\Pmc 
  \not\models \Q^\Pmc$ under the unique name assumption. 
\end{lemmaapp}

\begin{proof}[Proof of Lemma~\ref{lem:UNA}]
  For the ``only if''-direction: Since $\KB \not\models \Q$, there is a 
  model \I of \KB such that $\I \not\models \Q$. Let $f \colon \indA \to 
  \indA$ be a total function such that, for each set of individual names 
  $\{a_1, \ldots, a_n\}$ for which $\Int{a_1} = \Int{a_i}$ for $1 
  \leq i \leq n$, $f(a_i) = a_1$. Let $\AB^\Pmc$ and $\Q^\Pmc$ 
  be obtained from \AB and \Q by replacing each individual name $a$ in 
  \AB and \Q with $f(a)$. Clearly, $\KB^\Pmc = (\TB, \RB, \AB^\Pmc)$ 
  and $\Q^\Pmc$ are an \AB-partition w.r.t.\ \KB and \Q. Let $\I^\Pmc 
  = (\dom, \cdot^{\I^\Pmc})$ be an interpretation that is obtained by 
  restricting $\cdot^\I$ to individual names in $\indA[\AB^\Pmc]$. It 
  is easy to see that $\I^\Pmc \models \KB^\Pmc$ and that the unique 
  name assumption holds in $\I^\Pmc$. We now show that $\I^\Pmc 
  \not\models \Q^\Pmc$. Assume, to the contrary of what is to be 
  shown, that $\I^\Pmc \models^{\pi'} \Q^\Pmc$ for some match $\pi'$. 
  We define a mapping $\pi \colon \terms{\Q} \to \dom$ from $\pi'$ 
  such $\pi(a) = \pi'(f(a))$ for each individual name $a \in 
  \inds{\Q}$ and $\pi(v) = \pi'(v)$ for each variable $v \in 
  \vars{\Q}$. It is easy to see that $\I \models^\pi \Q$, which is a 
  contradiction.
  
  For the ``if''-direction: 
  Let $\I^\Pmc = (\dom, \cdot^{\I^\Pmc})$ be such that $\I^\Pmc \models 
  \KB^\Pmc$ under UNA and $\I^\Pmc \not\models \Q^\Pmc$ and let $f 
  \colon \indA \to \indA[\AB^\Pmc]$ be a total function such that $f(a)$ 
  is the individual that replaced $a$ in $\AB^\Pmc$ and $\Q^\Pmc$. Let 
  \I = \inter be an interpretation that extends $\I^\Pmc$ such that 
  $\Int{a} = \Int[\I^\Pmc]{f(a)}$. We show that $\I \models \KB$ and 
  that $\I \not\models \Q$. It is clear that $\I \models \TB$. Let 
  $C(a)$ be an assertion in \AB such that $a$ was replaced with 
  $a^\Pmc$ in $\AB^\Pmc$. Since $\I^\Pmc \models C(a^\Pmc)$ and 
  $\Int{a} = \Int[\I^\Pmc]{f(a)} = \Int[\I^\Pmc]{a^\Pmc} \in 
  \Int[\I^\Pmc]{C}$, $\I \models C(a)$. We can use a similar argument 
  for (possibly negated) role assertions. Let $a \ndoteq b$ be an 
  assertion in $\AB$ such that $a$ was replaced with $a^\Pmc$ and $b$ 
  with $b^\Pmc$ in $\AB^\Pmc$, i.e., $f(a) = a^\Pmc$ and $f(b) = 
  b^\Pmc$. Since $\I^\Pmc \models a^\Pmc \ndoteq b^\Pmc$, $\Int{a} = 
  \Int[\I^\Pmc]{f(a)} = \Int[\I^\Pmc]{a^\Pmc} \neq \Int[\I^\Pmc]{b^\Pmc} 
  = \Int[\I^\Pmc]{f(b)} = \Int{b}$ and $\I \models a \ndoteq b$ as 
  required. Therefore, we have that $\I \models \KB$ as required. 
  
  Assume that $\I \models^\pi \Q$ for a match $\pi$. Let $\pi^\Pmc 
  \colon \terms{\Q^\Pmc} \to \dom$ be a mapping such that $\pi^\Pmc(v) = 
  \pi(v)$ for $v \in \vars{\Q^\Pmc}$ and $\pi^\Pmc(a^\Pmc) = \pi(a)$ for 
  $a^\Pmc \in \inds{\Q^\Pmc}$ and some $a$ such that $a^\Pmc = f(a)$. 
  Let $C(a^\Pmc) \in \Q^\Pmc$ be such that $C(a) \in \Q$ and $a$ was 
  replaced with $a^\Pmc$, i.e., $f(a) = a^\Pmc$. By assumption, $\pi(a) 
  \in \Int{C}$, but then $\pi(a) = \Int{a} = \Int[\I^\Pmc]{f(a)} = 
  \Int[\I^\Pmc]{a^\Pmc} = \pi^\Pmc(a^\Pmc) \in \Int[\I^\Pmc]{C}$ and 
  $\I^\Pmc \models C(a^\Pmc)$. Similar arguments can be used to show 
  entailment for role and equality atoms, which yields the desired 
  contradiction. 
\end{proof}

\begin{theoremapp}[\ref{thm:datacomplexity}]
  Let \KBDef be a \SHIQ knowledge base with $m := |\KB|$ and $\Q := \Q_1 
  \vee \ldots \vee \Q_\ell$ a union of Boolean conjunctive queries with 
  $n := |\Q|$. The algorithm given in 
  Definition~\ref{def:algonondeterministic} decides in non-deterministic 
  time $p(m_a)$ whether $\KB \not\models \Q$ for $m_a := |\AB|$ and $p$ 
  a polynomial. 
\end{theoremapp}

\begin{proof}[Proof of Theorem~\ref{thm:datacomplexity}]
  Clearly, the size of an ABox $\AB^\Pmc$ in an \AB-partition is bounded 
  by $m_a$. As established in Lemma~\ref{thm:combinedcomplexity}, the 
  maximal size of an extended ABox $\AB^\Pmc_\Q$ is polynomial in $m_a$. 
  Hence, $|\AB^\Pmc \cup \AB^\Pmc_\Q| \leq p(m_a)$ for some polynomial 
  $p$. Due to Lemma~\ref{lem:counting} and since the size of \Q, \TB, 
  and \RB is fixed by assumption, the sets $\trees[\KB^\Pmc]{\Q_i}$ and 
  $\groundings[\KB^\Pmc]{\Q_i}$ for each $i$ such that $1 \leq i \leq 
  \ell$ can be computed in time polynomial in $m_a$. From 
  Lemma~\ref{lem:combinedcomplexity}, we know that the translation of an 
  extended knowledge base into an \ALCQIb knowledge base is polynomial 
  in $m_a$ and a close inspection of the algorithm by \citeA{Tobi01a} 
  for deciding consistency of an \ALCQIb knowledge base shows that 
  its runtime is also polynomial in $m_a$.
\end{proof}

\bibliographystyle{theapa}
\bibliography{long-string,bglimm}

\end{document}